\DeclareInt{\numinstances}{n}
\DeclareInt{\numlabels}{m}
\DeclareSet{\instancespace}{X}
\DeclareSet{\labelspace}{Y}
\DeclareSet{\dataset}{D}
\DeclareVec{\rinstance}{x}
\DeclareMat{\rinstancematrix}{X}
\DeclareVec{\sinstance}{x}
\DeclareMat{\sinstancematrix}{X}
\DeclareVec{\rlabels}{y}
\DeclareVec{\slabels}{y}
\DeclareMat{\slabelmatrix}{Y}
\DeclareMat{\rlabelmatrix}{Y}
\DeclareModification{\labelscolhelper}{bolditalic}{\rlabels}
\DeclareModification{\predscolhelper}{bolditalic}{\spreds}
\NewDocumentCommand{\labelscol}{o}{\labelscolhelper_{:#1}}
\NewDocumentCommand{\predscol}{o}{\predscolhelper_{:#1}}
\DeclareNum{\confmat}[roman]{cm}
\DeclareNum{\truepos}[roman]{tp}
\DeclareNum{\falsepos}[roman]{fp}
\DeclareNum{\falseneg}[roman]{fn}
\DeclareNum{\trueneg}[roman]{tn}
\DeclareNum{\condpos}[roman]{p}
\DeclareNum{\predpos}[roman]{q}
\DeclareNum{\accuracy}[roman]{acc}
\DeclareMat{\confusionmatrix}{C}
\DeclareSeqOf{\confusiontensor}[supscript]{ten}{C}{\confusionmatrix}
\newcommand{\sconfusiontensor}{\empirical\confusiontensor}
\DeclareModifier{\optimal}{supstar}
\DeclareModifier{\primed}{primed}
\DeclareModifier{\dprimed}{dprimed}
\DeclareModifier{\empirical}{withwidehat}
\DeclareModifier{\predicted}{withwidehat}
\DeclareModifier{\withtilde}{withtilde}
\DeclareModifier{\overlined}{overlined}
\DeclareModification{\predictspaces}{predicted}{\labelspace}
\DeclareSeqOf{\predictspace}[subscript]{set}{\relax}{\predictspaces}
\DeclareModification{\rpreds}{predicted}{\rlabels}
\DeclareModification{\spreds}{predicted}{\slabels}
\DeclareModification{\spredmatrix}{predicted}{\slabelmatrix}
\DeclareModification{\rpredmatrix}{predicted}{\rlabelmatrix}
\DeclareSeqOf{\predmatrixseq}[supscript]{set}{\relax}{\spredmatrix}
\DeclareModification{\optpredmatrix}{supstar}{\spredmatrix}
\DeclareModification{\pluginpredmatrix}{adjoint}{\spredmatrix}
\DeclareVec{\binlabels}{y}
\DeclareModification{\binpreds}{predicted}{\binlabels}
\DeclareMat{\gainmatrix}{G}
\DeclareSeqOf{\gaintensor}[supscript]{ten}{G}{\gainmatrix}
\newcommand{\curvatureconst}{C_{\taskloss}}
\DeclareMat{\sconfusionmatrix}{\widehat{C}}
\DeclareSeqOf{\confusionmatrices}[parensup]{mat}{C}{\confusionmatrix}
\DeclareSet{\confset}{C}
\newcommand{\feasibleconfset}{\confset_{\mathds{P}}}
\NewDocumentCommand{\confsetatk}{O{k}}{\confset^{#1}}
\NewDocumentCommand{\smashsum}{e{_^}}{
    \mathop{\vphantom{\sum}\smash{\sum_{\IfValueTF{#1}{#1}{}}^{\IfValueTF{#2}{#2}{}}}}
}
\NewDocumentCommand{\datadistribution}{}{\mathds{P}}
\DeclareVecFun{\hypothesis}{h}
\DeclareVecFun{\hypothesisprime}{h'}
\DeclareVecFun{\linhypothesis}{g}
\DeclareSet{\hypothesisspace}{H}
\DeclareSet{\fpredspace}{F}
\DeclareModifier{\binary}{bcsub}
\DeclareFun{\lossfn}{u}
\DeclareFun{\confmatprod}[roman]{cm}
\DeclareFun{\lossfnc}{\psi}
\DeclareFun{\taskloss}{\Psi}
\DeclareFun{\regret}{\Delta\Psi}
\DeclareModification{\weightedloss}{wsub}{\lossfn}
\DeclareModification{\instanceloss}{instancewise}{\lossfn}
\DeclareModification{\macroloss}{macro}{\lossfn}
\DeclareMat{\poslossweight}{w}
\DeclareVecFun{\marginals}{\eta}
\DeclareVecFun{\gains}{g}
\DeclareModification{\predmarginals}{predicted}{\marginals}
\DeclareVec{\empiricaldensity}{\rho}
\DeclareVec{\gainslope}{a}
\DeclareVec{\gainintercept}{b}
\DeclareVecFun{\fpred}{f}
\DeclareFun{\inference}[calligraphic]{I}
\DeclareFun{\confmatutility}{\psi}
\NewDocumentCommand{\topk}{O{k}}{\operatorname{top}_{#1}}
\DeclareModification{\eturisk}{etusub}{\taskloss}
\DeclareModification{\semieturisk}{withtilde}{\eturisk}
\DeclareSet{\confmatrices}{C}
\DeclareSet{\macrolosses}{L}
\DeclareSet{\genericsample}{S}
\NewDocumentCommand{\pmacro}{o}{
    \operatorname{P}^{\text{m}}\IfNoValueTF{#1}{}{\!@{#1}}
}
\NewDocumentCommand{\pinstance}{o}{
    \operatorname{P}^{\text{i}}\IfNoValueTF{#1}{}{\!@{#1}}
}
\NewDocumentCommand{\rmacro}{o}{
    \operatorname{R}^{\text{m}}\IfNoValueTF{#1}{}{\!@{#1}}
}
\NewDocumentCommand{\abandon}{o}{
    \IfNoValueTF{#1}{\operatorname{Abd}}{\operatorname{Abd@}\!#1}
}
\NewDocumentCommand{\coverage}{e{_}o}{
    {\newcommand{\subscript}{\IfNoValueF{#1}{_{#1}}}
    \IfNoValueTF{#2}{\operatorname{Cov}\subscript}{\operatorname{Cov\subscript{@}}\!#2}
    }
}
\DeclareNum{\greedydecay}{\gamma}
\DeclareVec{\othertruepositives}{a}
\DeclareVec{\otherpredictedpositives}{b}
\DeclareVec{\otherfalsepositives}{d}
\DeclareVec{\otherfalsenegatives}{e}
\DeclareVec{\examplevec}{v}
\DeclareVec{\anotherexamplevec}{u}
\DeclareVec{\vectorpi}{\pi}
\DeclareMat{\examplemat}{M}
\DeclareMethod{\infmrec}{MRec}
\DeclareMethod{\infpower}{Top-K+$\boldsymbol{w}^{\text{pow}}$}
\DeclareMethod{\inflog}{Top-K+$\boldsymbol{w}^{\text{log}}$}
\DeclareMethod{\infbca}{BCPre}
\DeclareMethod{\infbcc}{BCCov}
\DeclareMethod{\infgcov}{GCov}
\DeclareMethod{\infgpre}{GPre}
\DeclareMethod{\infbcf}{BCF}
\DeclareMethod{\lightxml}{LightXML}
\DeclareMethod{\plt}{PLT}
\DeclareMethod{\inftopk}{Top-K}
\DeclareMethod{\infpstopk}{PS-K}
\DeclareMethod{\infmacr}{Macro-R$_{\text{prior}}$}
\DeclareMethod{\infmacba}{Macro-BA$_{\text{prior}}$}
\DeclareMethod{\infbcmacp}{BC-Macro-P}
\DeclareMethod{\infbcmacf}{BC-Macro-F1}
\DeclareMethod{\infbcmacr}{BC-Macro-R}
\DeclareMethod{\inffwmacr}{Macro-R$_{\text{fw}}$}
\DeclareMethod{\inffwmacf}{Macro-F1$_{\text{fw}}$}
\DeclareMethod{\inffwmacp}{Macro-P$_{\text{fw}}$}
\DeclareMethod{\inffwmacba}{Macro-BA$_{\text{fw}}$}
\DeclareMethod{\lossfocal}{Top-K+$\ell_{\text{Focal}}$}
\DeclareMethod{\lossasym}{Top-K+$\ell_{\text{Asym}}$}
\DeclareDataset{\yeast}{Yeast-13}
\DeclareDataset{\mediamill}{Mediamill}
\DeclareDataset{\rcvx}{RCV1X}
\DeclareDataset{\eurlex}{Eurlex-4K}
\DeclareDataset{\eurlexlexglue}{Eurlex Lexglue}
\DeclareDataset{\wiki}{Wiki-31K}
\DeclareDataset{\amazoncatsmall}{AmazonCat}
\DeclareDataset{\deliciouslarge}{DeliciousLarge-200K}
\DeclareDataset{\wikilshtc}{WikiLSHTC-325K}
\DeclareDataset{\wikipedia}{WikipediaLarge-500K}
\DeclareDataset{\amazon}{Amazon-670K}
\DeclareDataset{\amazonlarge}{Amazon-3M}
\DeclareDataset{\youtube}{YouTube}
\DeclareDataset{\flicker}{Flickr}
\DeclareDataset{\flickr}{Flickr}
\theoremstyle{plain}
\declaretheorem[Refname={Theorem},numberwithin=section]{theorem}
\declaretheorem[sibling=theorem,Refname={Lemma}]{lemma}
\newtheorem{definition}[theorem]{Definition}
\newtheorem{assumption}[theorem]{Assumption}
\theoremstyle{remark}
\newcommand{\ourtitle}{
Consistent algorithms for multi-label classification with macro-at-$k$ metrics}
\title{\ourtitle}
\author{%
Erik Schultheis \\
Aalto University \\
Helsinki, Finland \\
\texttt{erik.schultheis@aalto.fi}
\And
Wojciech Kotłowski \\
Poznan University of Technology \\
Poznan, Poland \\
\texttt{wkotlowski@cs.put.poznan.pl}\hphantom{\texttt{xxxxxxxxx}} \\ %
\AND
Marek Wydmuch \\
Poznan University of Technology \\
Poznan, Poland \\
\texttt{mwydmuch@cs.put.poznan.pl}
\And
Rohit Babbar \\
University of Bath / Aalto University\\
Bath, UK / Helsinki, Finland \\
\texttt{rb2608@bath.ac.uk}\hphantom{\texttt{xxxxxxxxxx}}\hphantom{\texttt{xxxxxxxxx}}
\AND
Strom Borman \\
Yahoo Research \\
Champaign, USA \\
\texttt{strom.borman@yahooinc.com}
\And
Krzysztof Dembczyński \\
Yahoo Research / Poznan University of Technology\\
New York, USA / Poznan, Poland \\
\texttt{krzysztof.dembczynski@yahooinc.com}
}
\definecolor{green}{HTML}{00A64F}
\begin{document}
\maketitle

\begin{abstract}

We consider the optimization of complex performance metrics in multi-label classification under the population utility framework. 
We mainly focus on metrics linearly decomposable into a sum of binary classification utilities applied separately to each label with an additional requirement of exactly $k$ labels predicted for each instance. 
These ``macro-at-$k$'' metrics possess desired properties for extreme classification problems with long tail labels. 
Unfortunately, the at-$k$ constraint couples the otherwise independent binary classification tasks, leading to a much more challenging optimization problem than standard macro-averages. 
We provide a statistical framework to study this problem, prove the existence and the form of the optimal classifier, and propose a statistically consistent and practical learning algorithm based on the Frank-Wolfe method. 
Interestingly, our main results concern even more general metrics being non-linear functions of label-wise confusion matrices. 
Empirical results provide evidence for the competitive performance of the proposed approach.

\end{abstract}

\section{Introduction}

Various real-world applications of machine learning require performance measures of a complex structure, 
which, unlike misclassification error, do not decompose
into an expectation over instance-wise quantities.
Examples of such performance measures include the area under the ROC curve (AUC)~\citep{DrummondHolte_2005},
geometric mean~\citep{DrummondHolte_2005,WangYao_2012,Menon_etal2013,cao2019learning},
the $F$-measure~\citep{lewis1995evaluating} or precision at the top~\citep{kar2015surrogate}.
The theoretical analysis of such measures, as well as the design of consistent and efficient algorithms for them, 
is a non-trivial task.

In multi-label classification, one can consider a wide spectrum of measures
that are usually divided into three categories based on the averaging scheme, namely instance-wise, micro, and macro averaging.  
Instance-wise measures are defined, as the name suggests, on the level of a single instance.
Typical examples are Hamming loss, precision$@k$, recall$@k$, and the instance-wise $F$-measure.
Micro-averages are defined on a confusion matrix that accumulates true positives, false positives, 
false negative, and true negatives from all the labels. 
Macro-averages require a binary metric to be applied to each label separately and then averaged over the labels.
In general, any binary metric can be applied in any of the above averaging schemes. 
Not surprisingly, some of the metrics, for example misclassification error, 
lead to the same form of the final metric regardless of the scheme used.
One can also consider the wider class of measures 
that are defined as general aggregation functions of label-wise confusion matrices. 
This includes the measures described above, 
but also, e.g., the geometric mean of label-wise metrics 
or a specific variant of the $F$-measure \citep{opitz2021macro}
being a harmonic mean of macro-precision and macro-recall. 

In this paper, we target the setting of prediction with a budget. Specifically,
we require the predictions to be ``budgeted-at-$k$,'' 
meaning that for each instance, exactly $k$ labels need to be predicted. The budget of $k$ requires the prediction algorithm to choose the labels ``wisely''.
It is also important in many real-world scenarios. For instance, in recommendation systems or extreme classification, there is a fixed number of slots (e.g., indicated by a user interface) required to be filled with related products/searches/ads \citep{cremonesi2010performance,chang2021extreme}. 
Furthermore, having a fixed prediction budget is also interesting from a methodological perspective, as various metrics which lead to degenerate solutions without a budget, e.g., predict nothing (macro-precision) or everything (macro-recall), become meaningful when restricted to predict $k$ labels per instance.

While all our theoretical results and algorithms apply to a general class of multi-label measures,
we focus in this paper on macro-averaged metrics. 
If no additional requirements are imposed on the classifier, 
the linear nature of the macro-averaging means 
that a binary problem for each label can be solved independently, and existing techniques~\citep{Koyejo_et_al_2015,Kotlowski_Dembczynski_2017} are sufficient.
In turn, if we require predictions to be budgeted-at-$k$, 
the task becomes much more difficult, as 
this constraint tightly couples the different binary problems together.
In general, they cannot be solved independently for each label,
 requiring instead more involved techniques to find the optimal classifier. \looseness=-1

The macro-at-$k$ metrics seem to be very attractive in the context of multi-label classification.
Macro-averaging treats all the labels equally important. 
This prevents ignoring labels with a small number of positive examples~\citep{schultheis2022missing}, 
so-called tail labels, which are very common in applications of multi-label classification,
particularly in the extreme setting when the number of all labels is very large~\citep{Jain_et_al_2016,babbar2019data}.
Furthermore, it can be shown that one can remove tail labels from the training set with almost no drop of performance in terms of popular metrics, such as precision@$k$ and nDCG@$k$, on extreme multi-label data sets~\citep{wei2019does,schultheis2024generalized}.
The macro-at-$k$ metrics, on the other hand, are sensitive to the lack of tail labels in the training set.%
\footnote{Results and description of such an experiment are given in~\autoref{app:tail-label-performance}.
}

We aim at delivering consistent algorithms for macro-at-$k$ metrics,
i.e., algorithms that converge in the limit of infinite training data to the optimal classifier for the metrics.
Our main theoretical results are stated in a very general form, concerning the large class of aggregation functions of label-wise 
confusion matrices. 
Our starting point of the analysis are results obtained in the multi-class setting~\citep{Narasimhan_et_al_2015,Narasimhan_et_al_2022},
concerning consistent algorithms for complex performance measures with additional constraints.
Nevertheless, they do not consider budgeted-at-$k$ predictions, which do not apply to multi-class classification,
while they play an important role in the multi-label setting.
Furthermore, using arguments from functional analysis, we managed to significantly simplify the line of reasoning in the proofs. 
We first show that the problem can be transformed from optimizing over classifiers to
optimizing over the set of feasible confusion matrices, and that the optimal
classifier optimizes an unknown \emph{linear} confusion-matrix metric. 
In the multi-label setting, interestingly, such a classifier corresponds to a prediction rule,
which has the appealingly simple form:
selecting the $k$ highest-scoring labels based on an
\emph{affine transformation} of the marginal label probabilities. Combining this result with the
optimization of confusion matrices, we state a Frank-Wolfe based algorithm that
is consistent for finding the optimal classifier also for \emph{nonlinear}
metrics. 
Empirical studies provide evidence that the proposed approach can be applied in practical settings 
and obtains competitive performance in terms of the macro-at-$k$ metrics.

\section{Related work}

The problem of optimizing complex performance metrics is well-known, with many
articles published for a variety of metrics and different classification
problems. It has been considered for binary%
~\citep{Ye_et_al_2012,Koyejo_et_al_2014,Busa-Fekete_et_al_2015,Dembczynski_et_al_2017},
multi-class~\citep{Narasimhan_et_al_2015,Narasimhan_et_al_2022},
multi-label~\citep{Waegeman_et_al_2014,Koyejo_et_al_2015,Kotlowski_Dembczynski_2017},
and multi-output~\citep{Wang_et_al_2019} classification.

Initially, the main focus was on designing algorithms,  
without a conscious emphasis on statistical consequences of choosing models and their asymptotic behavior.
Notable examples of such contributions are the SVMperf algorithm~\citep{Joachims_2005}, 
approaches suited to different types of the F-measure~\citep{Dembczynski_et_al_2011, natarajan2016optimal,Jasinska_et_al_2016},
or precision at the top~\citep{kar2015surrogate}.
Wide use of such complex metrics has caused an increasing interest in investigating their theoretical properties, 
which can then serve as a guide to design practical algorithms.

The consistency of learning algorithms is a well-established problem. 
The seminal work of~\citet{bartlett2006convexity} was studying this problem for binary classification under the misclassification error.
Since then a wide spectrum of learning problems and performance metrics has been analyzed in terms of consistency. 
These results concern ranking~\citep{duchiconsistency,ravikumar2011ndcg,calauzenes2012non,yang2020consistency}, multi-class~\citep{zhang2004statistical,tewari2007consistency} and multi-label classification~\citep{Koyejo_et_al_2015, Kotlowski_Dembczynski_2017}, 
classification with abstention~\citep{Ming_Wegkamp_2010,Ramaswamy_et_al_2018},
or constrained classification problems~\citep{Agarwal_et_al_2018,Kearns_et_al_2018,Narasimhan_et_al_2022}. 
Nevertheless, the problem of designing consistent algorithms for budgeted-at-$k$ macro averages is relatively new.

Optimizing non-decomposable metrics can be considered in two distinct frameworks~\citep{Dembczynski_et_al_2017}:
population utility (PU) and expected test utility (ETU).
The PU framework focuses on estimation, 
in the sense that a consistent PU classifier is one which
correctly estimates the population optimal utility as the size
of the training set increases. 
A consistent ETU classifier is one
which optimizes the expected prediction error over a \emph{given} test set.
The latter might get better results, as the optimization is performed on the test set directly.
Optimization of budgeted-at-$k$ metrics in this framework has been recently considered in~\citet{schultheis2024generalized}.
The former framework, which we focus on in this paper,
has the advantage that prediction can be made for each test example separately,
without knowing the entire test set in advance.

\section{Problem statement}
\label{sec:problem}

Let $\sinstance \in \instancespace$ denote an input instance, and $\slabels \in
\{0,1\}^{m}$ the vector indicating the relevant
labels, jointly distributed according to $(\sinstance, \slabels) \sim
\datadistribution$.
Let $\hypothesis \colon \instancespace \to [0,1]^{\numlabels}$ be 
a \emph{randomized multi-label classifier} which, given instance $\sinstance$,
predicts a possibly randomized class label vector $\spreds \in \set{0,1}^{\numlabels}$,
such that $\expectation*_{\spreds|\sinstance}{\spreds} = \hypothesis(\sinstance)$.
We assume that the predictions are \emph{budgeted at} $k$, that is exactly $k$ labels
are always predicted as relevant, which
means that $\|\spreds\|_1 = \sum_{j=1}^{\numlabels} \spreds[j] = k$ with probability 1. 
It turns out that this is \emph{equivalent}
to assuming $\|\hypothesis(\sinstance)\|_1 = \sum_{j=1}^{\numlabels} \hypothesis[j](\sinstance) = k$ for all $\sinstance \in \instancespace$.
Indeed, $\|\hypothesis(\sinstance)\|_1 = k$ is \emph{necessary}, because
$k = \expectation*_{\spreds|\sinstance}{\|\spreds\|_1} = \|\hypothesis(\sinstance)\|_1$; but it also \emph{suffices}
as for any real-valued vector $\vectorpi \in [0,1]^{\numlabels}$ with $\|\vectorpi\|_1 = k$, one can construct a distribution over binary vectors $\spreds \in \{0,1\}^{m}$ with $\|\spreds\|_1=k$ and marginals $\expectation_{\spreds}{\spreds} = \vectorpi$; this can be accomplished using, e.g., \emph{Madow's sampling scheme} (see Appendix \ref{app:Madows} for the
actual efficient algorithm).
Thus, using notation $\Delta_m^k \coloneqq \{\examplevec \in [0,1]^{\numlabels} \colon \|\examplevec\|_1 = k\}$, 
the randomized classifiers budgeted at $k$ are then all (measurable) functions of the form
$\hypothesis \colon \instancespace \to \Delta_m^k$. We denote
the set of such functions as $\hypothesisspace$.

For any $\sinstance \in \instancespace$, let $\marginals(\sinstance) \coloneqq \expectation*_{\slabels | \sinstance}{\slabels}$ denote the vector of conditional label marginals. 
Given a randomized classifier $\hypothesis \in \hypothesisspace$, we 
define its \emph{multi-label confusion tensor} $\confusiontensor(\hypothesis) = (\confusiontensor[1](\hypothesis[1]),\ldots,
\confusiontensor[\numlabels](\hypothesis[\numlabels]))$ as a sequence of $\numlabels$ binary classification confusion matrices associated with each label $j \in \intrange{\numlabels}$,
that is
$\confusiontensor[j][uv](\hypothesis[j]) = \datadistribution[\slabels[j] = u, \spreds[j] = v]$ for $u,v \in \set{0,1}$. Note that using the marginals and the definition of the randomized classifier,
\begin{equation}
    \confusiontensor[j](\hypothesis[j]) = 
    \begin{pmatrix}
        \expectation*_{\sinstance}{(1-\marginals[j](\sinstance))(1-\hypothesis[j](\sinstance))}  &  
        \expectation*_{\sinstance}{(1-\marginals[j](\sinstance))\hypothesis[j](\sinstance)} \\
   \expectation*_{\sinstance}{\marginals[j](\sinstance)(1-\hypothesis[j](\sinstance))} &  \expectation*_{\sinstance}{\marginals[j](\sinstance) \hypothesis[j](\sinstance)} 
    \end{pmatrix}.
    \label{eq:confusion_tensor_j}
\end{equation}
The set of all possible binary confusion matrices is written as
$\confset \coloneqq \set{\smash{\confusionmatrix \in \closedinterval{0}{1}^{2 \times 2} \mid \|\confusionmatrix\|_{1,1} = 1}}$,
and is used to define the set of possible confusion tensors for predictions at $k$ through
$\confsetatk \coloneqq \set{\smash{\confusiontensor \in \closedinterval{0}{1}^{\numlabels \times 2 \times 2} \mid \forall j \in \intrange{\numlabels}: \confusionmatrix^j \in \confset,\; \sum_{j=1}^{\numlabels} \confusionmatrix[01]^j + \confusionmatrix[11]^j = k}}$

In this work, we are interested in optimizing performance metrics that do not
decompose over individual instances, but are general functions of the confusion
tensor of the classifier $\hypothesis$. While in general, given two confusion
matrices, we cannot say which one is better than the other without knowing the
specific application, it is possible to impose a \emph{partial} order that any
reasonable performance metric should respect. To that end, define:
\begin{restatable}[Binary Confusion Matrix Measure]{definition}{partialorderbinary}
    \label{def:binconfmatmeasure}
    Let $\confset = \set{\confusionmatrix \in \closedinterval{0}{1}^{2 \times 2}
    \mid \|\confusionmatrix\|_{1,1}=1}$ be the set of all possible binary
    confusion matrices, and $\confusionmatrix, \primed\confusionmatrix \in
    \confset$. Then we say that $\primed\confusionmatrix$ is \emph{at least as
    good as} $\confusionmatrix$, $\primed\confusionmatrix \succeq
    \confusionmatrix$, if there exists constants $\epsilon_1, \epsilon_2$ such
    that
    \begin{equation}
        \primed\confusionmatrix = \begin{pmatrix}
            \confusionmatrix[00] + \epsilon_1 & \confusionmatrix[01] - \epsilon_1 \\
             \confusionmatrix[10] - \epsilon_2 & \confusionmatrix[11] + \epsilon_2
        \end{pmatrix} \,,
    \end{equation}
    i.e., if $\primed\confusionmatrix$ can be generated from $\confusionmatrix$
    by turning some false positives to true negatives and false negatives to
    true positives.
    A function $\defmap{\lossfnc}{\confset}{\closedinterval{0}{1}}$ is called a
    \emph{binary confusion matrix measure}~\citep{SinghKhim_2022} if it respects
    that ordering, i.e., if for $\primed\confusionmatrix \succeq
    \confusionmatrix$ we have $\lossfnc(\primed\confusionmatrix) \geq
    \lossfnc(\confusionmatrix)$.
\end{restatable}
Similarly, in the multi-label case we cannot compare arbitrary confusion tensors,
where one is better on some labels than on others,\footnote{This is specifically the
trade-off we want to achieve for tail labels!} but we can recognize if one is better
on \emph{all} labels:
\begin{restatable}[Confusion Tensor Measure]{definition}{partialordermultilabel}
    \label{def:conftensormeasure}
    For a given number of labels $\numlabels \in \naturals$, and two confusion
    tensors  $\confusiontensor, \primed\confusiontensor \in
    \confset^{\numlabels}$, we say that $\primed\confusiontensor$ is \emph{at least as
    good as} $\confusiontensor$, $\primed\confusiontensor \succeq
    \confusiontensor$, if \emph{for all labels} $j \in \intrange{\numlabels}$ it
    holds that $\confusionmatrix^{j\prime} \succeq \confusiontensor[j]$.
    A function $\defmap{\taskloss}{\confset^{\numlabels}}{\closedinterval{0}{1}}$ is called a
    \emph{confusion tensor measure} if it respects
    this ordering, i.e., if for $\primed\confusiontensor \succeq
    \confusiontensor$ we have $\taskloss(\primed\confusiontensor) \geq
    \taskloss(\confusiontensor)$.
\end{restatable}
Of particular interest in this paper are functions which linearly decompose over
the labels, that is \emph{macro-averaged multi-label metrics}
\citep{retrieval,Puthiya_etal2014,Koyejo_et_al_2015,Kotlowski_Dembczynski_2017}
of the form:
\begin{equation}
\taskloss(\hypothesis) = \taskloss(\confusiontensor(\hypothesis)) = \numlabels^{-1} \sum_{j=1}^{\numlabels} \lossfnc(\confusiontensor[j](\hypothesis[j])),
\label{eq:macro_averaged_metric}
\end{equation}
where $\lossfnc$ is some binary confusion matrix measure. If one takes a 
binary confusion matrix measure (e.g., any of those define in \autoref{tbl:performance_metrics}),
then the resulting macro-average will be a valid confusion tensor measure.
A more thorough discussion of these conditions can be found in \autoref{app:admissible}.

\begin{table}[t]
\caption{Examples of binary confusion matrix measures, which can be used as building blocks for confusion tensor measures.
For clarity, we denote $\trueneg = \confusionmatrix[00], 
\falsepos = \confusionmatrix[01],
\falseneg = \confusionmatrix[10],
\truepos = \confusionmatrix[11]$.}
\label{tbl:performance_metrics}
\begin{center}
\begin{tabular}{ll@{\hskip15mm}ll}
\toprule
Metric & $\lossfnc(\confusionmatrix)$
& Metric & $\lossfnc(\confusionmatrix)$ \\
\midrule
Accuracy & \small{$\truepos + \trueneg$} &
Recall & $\frac{\truepos}{\truepos + \falseneg}$  \\[2mm]
Precision & $\frac{\truepos}{\truepos + \falsepos}$  &
Balanced accuracy  & $\frac{\truepos}{2 (\truepos+\falseneg)} + \frac{\trueneg}{2 (\trueneg+\falsepos)}$ \\[2mm]
$F_{\beta}$ & $\frac{(1+\beta^2) \truepos}{(1+\beta^2)\truepos+\beta^2\falseneg + \falsepos}$ &
G-Mean & $\sqrt{\frac{\truepos \cdot \trueneg} {(\truepos+\falseneg)(\trueneg+\falsepos)}}$
\\[2mm]
Jaccard & $\frac{\truepos}{\truepos + \falsepos + \falseneg}$   &
AUC & $\frac{2 \cdot \truepos \cdot \trueneg + \truepos \cdot \falsepos + \falseneg \cdot \trueneg}{2 (\truepos + \falseneg)(\falsepos + \trueneg)}$ \\[2mm]
\bottomrule
\end{tabular}
\end{center}
\end{table}
Macro-averaged metrics find numerous applications in multi-label classification, mainly due to their balanced emphasis across labels independent of their frequencies, and thus can potentially alleviate the ``long-tail'' issues in problems with many rare labels \citep{schultheis2022missing}.

Denote the optimal value of the metric among all classifiers budgeted at $k$ as:
\begin{equation}
\taskloss^{\star} \coloneqq \sup_{\hypothesis \in \hypothesisspace} \taskloss(\hypothesis),
\end{equation}
and let $\hypothesis^{\star} \in \argmax_{\hypothesis} \taskloss(\hypothesis)$ be an optimal (Bayes) classifier for which $\taskloss(\hypothesis^{\star}) = \taskloss^{\star}$, if it exists. For any classifier $\hypothesis$,
 define its \emph{$\taskloss$-regret} as $\regret(\hypothesis)
= \taskloss^{\star} - \taskloss(\hypothesis)$ to measure the suboptimality of $\hypothesis$ with respect to $\taskloss$: from the definition, $\regret(\hypothesis) \ge 0$ for every classifier $\hypothesis$, and $\regret(\hypothesis)=0$ if and only if $\hypothesis$ is optimal.
If the $\taskloss$-regret of a learning algorithm converges
to zero with the sample size tending to infinity, it is called \emph{(statistically) consistent}. We consider
such algorithms in Section \ref{sec:consistent_algorithms}.
Even though the objective \eqref{eq:macro_averaged_metric} decomposes onto $\numlabels$ binary problems, these are still coupled by the budget constraint, $\|\hypothesis(\sinstance)\|_1 = k$ for all $\sinstance \in \instancespace$, and cannot be optimized independently
as we show later in the paper.

\section{The optimal classifier}
\label{sec:optimal-classifier}

Finding the form of the optimal classifier for general macro-averaged performance metrics is difficult. For instance, when $\lossfnc(\confusionmatrix)$ is the $F_{\beta}$ measure, the objective to be optimized is a sum of linear fractional functions, which is known to be NP-hard in general \citep{SchaibleShi03}. We are, however, able to fully determine the optimal classifier for the specific class of \emph{linear utilities}, which are metrics depending linearly on the confusion tensor of the classifier. Furthermore, we also show that for a general class of macro-averaged metrics, under mild assumptions on the data distribution, the optimal classifier exists and turns out to also be the maximizer of some linear utility, whose coefficients, however, depend on its (unknown a priori) confusion tensor.

We start with a metric of the form\footnote{We use $\boldsymbol{A} \cdot \boldsymbol{B} = \sum_{uv} A_{uv} B_{uv}$ to denote a dot product over matrices, and a concise notation $\mathbf{A} \cdot \mathbf{B} = \sum_j \boldsymbol{A}^j \cdot \boldsymbol{B}^j$ for
`dot product' over matrix sequences $\mathbf{A} = (\boldsymbol{A}^1,\ldots,\boldsymbol{A}^m)$ and $\mathbf{B} = (\boldsymbol{B}^1,\ldots,\boldsymbol{B}^m)$.}$
\taskloss(\confusiontensor) = \gaintensor \cdot \confusiontensor = \sum_{j=1}^{\numlabels} \gaintensor[j] \cdot \confusiontensor[j]$ for some vector of \emph{gain matrices (gain tensor)} $\gaintensor = (\gaintensor[1], \ldots, \gaintensor[\numlabels])$, possibly
depending on the data distribution $\datadistribution$. We call such a utility \emph{linear} as it linearly depends on the
confusion matrices of the classifier. Note that we allow the gain matrix $\gainmatrix$ to be different for each label,
making this more general than linear macro-averages. We need to consider this more general case, because it will appear
as a subproblem when finding optimal predictions for non-linear macro-averages as presented below.

Linear metrics are decomposable over instances, and thus the optimal classifier
has an appealingly simple form: 
It boils down to simply sorting the labels by an
affine function of the marginals, and returning the top $k$ elements.
\begin{restatable}{theorem}{linearmetric}
    \label{thm:linear_metric}
    The optimal classifier $\optimal\hypothesis \coloneqq \argmax_{\hypothesis \in \hypothesisspace} \taskloss(\hypothesis)$
    for $\taskloss(\hypothesis) = \gaintensor \cdot \confusiontensor(\hypothesis)$
    is given by
    \begin{equation}
    \optimal\hypothesis(\sinstance) = \topk(
    \gainslope \odot \marginals(\sinstance) + \gainintercept),%
    \label{eq:top_k_scoring}
    \end{equation}
    where $\odot$ denotes the coordinate-wise product of vectors,
    while the vectors $\gainslope$ and
    $\gainintercept$ are given by:
    \begin{equation}
    \gainslope[j] = \gainmatrix[00]^j + \gainmatrix[11]^j - 
    \gainmatrix[01]^j - \gainmatrix[10]^j, \qquad
    \gainintercept[j] = \gainmatrix[01]^j - \gainmatrix[00]^j,
    \end{equation}
    and $\topk(\examplevec)$ returns a $k$-hot vector extracting the top $k$ largest entries of $\examplevec$ (ties broken arbitrarily).
\end{restatable}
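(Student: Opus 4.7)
The plan is to reduce maximisation of the linear utility $\Psi(h) = \mathbf{G} \cdot \mathbf{C}(h)$ to a pointwise linear program on the at-$k$ polytope $\Delta_m^k$ and then recognise the solution as the top-$k$ rule.

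First, I would substitute the four entries of $C^j(h_j)$ given by equation~\eqref{eq:confusion_tensor_j} into
$G^j \cdot C^j(h_j) = \sum_{u,v} G^j_{uv} C^j_{uv}(h_j)$ and push the finite sum inside the expectation over $x$. Collecting terms by powers of $h_j(x)$ yields an integrand of the form $\alpha_j(x) + h_j(x)\bigl(\beta_j + \gamma_j\, \eta_j(x)\bigr)$, where $\alpha_j(x)$ does not depend on $h$. A short algebraic check gives $\beta_j = G^j_{01} - G^j_{00} = b_j$ and $\gamma_j = G^j_{00} + G^j_{11} - G^j_{01} - G^j_{10} = a_j$, matching the statement. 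Summing over $j$ and absorbing the $h$-independent part into a constant produces
\[
\Psi(h) \;=\; \mathbb{E}_x\!\left[\,\left\langle h(x),\; a \odot \eta(x) + b\right\rangle\,\right] \;+\; \text{const},
\]
where the constant depends only on $\mathbf{G}$ and the marginal distribution of $\eta$.

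Second, because $\Psi$ couples across $x$ only through the expectation, and because the feasibility constraint $h(x) \in \Delta_m^k$ is entirely pointwise, any maximiser of $\Psi$ can be obtained by maximising inside the expectation. For every fixed $x$ this amounts to the inner linear program
\[
\max_{v \in \Delta_m^k} \;\bigl\langle v,\; a \odot \eta(x) + b\bigr\rangle.
\]
The polytope $\Delta_m^k = \{v \in [0,1]^m : \|v\|_1 = k\}$ has as its extreme points exactly the $k$-hot binary vectors (for integer $k$), so a linear objective is maximised by placing the $k$ ones of $v$ on the $k$ largest entries of $a \odot \eta(x) + b$, i.e.\ by $\operatorname{top}_k(a \odot \eta(x) + b)$. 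Ties may be broken arbitrarily, since all tied choices achieve the same objective value.

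The main obstacle is really just the bookkeeping in step one: carrying out the expansion cleanly enough to recognise $a_j$ and $b_j$ in the claimed form. A minor additional remark is needed on measurability: with a fixed (e.g.\ lexicographic) tie-breaking rule the map $x \mapsto \operatorname{top}_k(a \odot \eta(x) + b)$ is measurable whenever $\eta$ is, and its image lies in $\{0,1\}^m \cap \Delta_m^k \subseteq \Delta_m^k$, so the pointwise maximiser is a legitimate deterministic element of $\mathcal{H}$. Beyond that there is no analytic difficulty: the linearity of $\Psi$ in $\mathbf{C}(h)$, combined with the linearity of each entry $C^j_{uv}(h_j)$ in $h_j$, ensures that pointwise optimisation is globally optimal.
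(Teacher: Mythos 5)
Your proposal is correct and follows essentially the same route as the paper's proof: expand $\gaintensor[j]\cdot\confusiontensor[j](\hypothesis[j])$ entrywise, collect the coefficient of $\hypothesis[j](\sinstance)$ to obtain $\gainslope[j]\marginals[j](\sinstance)+\gainintercept[j]$ plus an $\hypothesis$-independent remainder, and then maximize pointwise over $\Delta_m^k$ by the top-$k$ selection. Your added justifications (extreme points of the hypersimplex for the inner linear program, and measurability under a fixed tie-breaking rule) are correct minor elaborations of steps the paper treats as immediate.
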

\begin{proof}[Proof (sketch, full proof in Appendix \ref{app:linear_metric})]
    After simple algebraic manipulations, the objective can be written as
    $\taskloss(\hypothesis) = \expectation{\sum_{j=1}^{\numlabels} (\gainslope[j] \marginals[j](\sinstance) + \gainintercept[j])  \hypothesis[j](\sinstance)} + R$,
    where $\gainslope[j]$ and $\gainintercept[j]$ are as stated in the theorem, while $R$ does not depend on the classifier.
    For each $\sinstance \in \instancespace$, the objective can thus be independently maximized by the choice of $\hypothesis(\sinstance) \in \Delta_m^k$ which maximizes
    $\sum_{j=1}^{\numlabels} (\gainslope[j] \marginals[j](\sinstance) + \gainintercept[j])  \hypothesis[j](\sinstance)$. But
    this is achieved by sorting $\gainslope[j] \marginals[j](\sinstance) + \gainintercept[j]$ in descending
    order, and setting $\hypothesis[j](\sinstance) = 1$ for the top $k$ coordinates, and
    $\hypothesis[j](\sinstance)=0$ for the remaining coordinates (with
    ties broken arbitrarily).
\end{proof}
Examples of binary metrics for which their macro averages can be written in the linear form
include:
\begin{itemize}
    \item the accuracy $\lossfnc(\confusionmatrix)
= \confusionmatrix[00] + \confusionmatrix[11]$ (which leads to the \emph{Hamming utility} after macro-averaging)
with $\gainslope[j] = 2, \gainintercept[j] =-1$, and thus for any $\sinstance \in \instancespace$,
the optimal prediction $\optimal\hypothesis(\sinstance)$ returns $k$ labels
with the largest marginals $\marginals[j](\sinstance)$;
    \item the same prediction rule is obtained for the \emph{TP} metric $\lossfnc(\confusionmatrix)
= \confusionmatrix[00]$ (that leads to \emph{ precision$@k$}) with $\gainslope[j] = 1, \gainintercept[j] =0$;
\item the recall $\lossfnc(\confusionmatrix)
= {\datadistribution(y=1)}^{-1} \confusionmatrix[11]$ (macro-averaged to \emph{recall$@k$})
has $\gainslope[j] = \datadistribution(y_j=1)^{-1}, \gainintercept[j]=0$, so that the optimal classifiers
returns top $k$ labels sorted according to $\frac{\marginals[j](\sinstance)}{\datadistribution(y_j=1)}$;
\item the balanced accuracy $\lossfnc(\confusionmatrix)
= \frac{\confusionmatrix[11]}{2\datadistribution(y=1)} +\frac{\confusionmatrix[00]}{2\datadistribution(y=0)}$,
gives  $\gainslope[j] = \frac{1}{2\datadistribution(y_j=1)} + \frac{1}{2\datadistribution(y_j=0)}, 
\gainintercept[j]=-\frac{1}{2\datadistribution(y_j=0)}$, with the optimal prediction sorting labels according
to $\frac{\marginals[j](\sinstance)}{\datadistribution(y_j=1)} - \frac{1-\marginals[j](\sinstance)}{1-\datadistribution(y_j=1)}$.
\end{itemize}

We now switch to general case, in which the base
binary metrics are not necessarily decomposable over instances,
and optimizing their macro averages with budgeted predictors is a challenging task.
We make the following mild assumptions on the data distribution and performance metric:

\begin{assumption}
The label conditional marginal vector $\marginals(\sinstance) = \expectation*_{\slabels | \sinstance}{\slabels}$ is absolutely
continuous with respect to the Lebesgue measure on $[0,1]^{\numlabels}$ (i.e., has a density over $[0,1]^{\numlabels}$).
\label{assumption:density}
\end{assumption}
A similar assumption was commonly used in the past works \citep{Koyejo_et_al_2014,Narasimhan_et_al_2015,Dembczynski_et_al_2017}.

\begin{assumption}
\label{assumption:monotonicity}
The performance metric $\taskloss$ is differentiable and fulfills
for all labels $j \in \intrange{\numlabels}$
\begin{equation}
    \left. \frac{\partial}{\partial\epsilon} \taskloss(\confusiontensor[1], \ldots, \confusiontensor[j] + \epsilon \begin{pmatrix}
        1 & -1\\
        -1 & 1
    \end{pmatrix}, \ldots, \confusiontensor[\numlabels]) \right|_{\epsilon=0} > 0\,.
\end{equation}
\end{assumption}
Assumption \ref{assumption:monotonicity} is essentially 
a `strictly monotonic and differentiable' version of Definition \ref{def:conftensormeasure}, and is satisfied
by all macro-averaged metrics given in Table \ref{tbl:performance_metrics}.

Our first main result concerns the form of the optimal classifier for general confusion tensor measures, of which macro-averaged binary confusion matrix measures are special cases. To state the result,
we define $\feasibleconfset \coloneqq \{\confusiontensor(\hypothesis) ~\colon~ \hypothesis \in \hypothesisspace \}$,
the set of confusion tensors achievable by randomized $k$-budgeted classifiers on distribution $\datadistribution$.
Clearly, maximizing $\taskloss(\hypothesis)$ over $\hypothesis \in \hypothesisspace$ is equivalent to
maximizing $\taskloss(\confusiontensor)$ over $\confusiontensor \in \feasibleconfset$.

\begin{restatable}{theorem}{optimalclassifiergeneral}
    Let the data distribution $\datadistribution$ and metric $\taskloss$ satisfy \autoref{assumption:density} and \autoref{assumption:monotonicity} respectively.
    Then, there exists an optimal $\optimal\confusiontensor \in \feasibleconfset$, that is $\taskloss(\optimal\confusiontensor) = \optimal\taskloss$.
    Moreover, any classifier $\optimal\hypothesis$ maximizing the linear utility $\gaintensor \cdot \confusiontensor(\hypothesis)$
    over $\hypothesis \in \hypothesisspace$
    with $\gaintensor = (\gaintensor[1], \ldots, \gaintensor[\numlabels])$ given by $\gaintensor[j] = \nabla_{\confusiontensor[j]}\taskloss(\optimal\confusionmatrix)$, also maximizes $\taskloss(\hypothesis)$
    over $\hypothesis \in \hypothesisspace$.
    \label{thm:optimal_classifier_general}
\end{restatable}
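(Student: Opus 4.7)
The plan is to transport the problem from the classifier space $\hypothesisspace$ to the finite-dimensional image $\feasibleconfset \subset \mathbb{R}^{\numlabels \times 2 \times 2}$ and then combine a standard first-order optimality argument with the top-$k$ characterization of \autoref{thm:linear_metric}.

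The first step is to show that $\feasibleconfset$ is a nonempty, convex, and compact subset of $\mathbb{R}^{\numlabels \times 2 \times 2}$. Convexity is immediate from \eqref{eq:confusion_tensor_j}: the map $\hypothesis \mapsto \confusiontensor(\hypothesis)$ is affine and $\Delta_m^k$ is convex, so mixing two classifiers mixes their confusion tensors. Boundedness is clear since all entries lie in $[0,1]$. For closedness I would work in $L^\infty(\datadistribution;\mathbb{R}^{\numlabels})$: its unit ball is weak-$\ast$ compact by Banach--Alaoglu, and the pointwise constraints $0 \le \hypothesis[j] \le 1$ and $\sum_{j} \hypothesis[j] = k$ are intersections of weak-$\ast$ closed half-spaces obtained by pairing with $L^1$ indicator functions, so $\hypothesisspace$ is weak-$\ast$ compact as a subset of $L^\infty$. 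Each entry of $\confusiontensor(\hypothesis)$ in \eqref{eq:confusion_tensor_j} is the integral of $\hypothesis[j]$ against an $L^1$ weight built from $\marginals$ and $\datadistribution$, hence weak-$\ast$ continuous; its image $\feasibleconfset$ is therefore compact in a finite-dimensional space.

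Because $\taskloss$ is continuous on $\feasibleconfset$ (differentiability is part of \autoref{assumption:monotonicity}), the supremum is attained at some $\optimal\confusiontensor \in \feasibleconfset$, which gives the existence claim. Convexity of $\feasibleconfset$ then yields the first-order optimality condition $\nabla \taskloss(\optimal\confusiontensor) \cdot (\confusiontensor - \optimal\confusiontensor) \le 0$ for every $\confusiontensor \in \feasibleconfset$, which is exactly the statement that $\optimal\confusiontensor$ maximizes the linear utility $\gaintensor \cdot \confusiontensor$ over $\feasibleconfset$ with $\gaintensor[j] = \nabla_{\confusiontensor[j]} \taskloss(\optimal\confusiontensor)$.

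To promote this to a statement about classifiers, I would apply \autoref{thm:linear_metric}: every maximizer of $\gaintensor \cdot \confusiontensor(\hypothesis)$ over $\hypothesisspace$ is a top-$k$ rule with scores $\gainslope[j] \marginals[j](\sinstance) + \gainintercept[j]$. Plugging the gradient into \autoref{assumption:monotonicity} gives $\gainslope[j] = \gainmatrix[00]^j + \gainmatrix[11]^j - \gainmatrix[01]^j - \gainmatrix[10]^j > 0$, so the scoring is a non-degenerate affine function of the marginals. \autoref{assumption:density} then makes score ties between labels a $\datadistribution$-null event, so the top-$k$ selection is almost surely unique and every maximizer of the linear utility produces the same confusion tensor, which must coincide with $\optimal\confusiontensor$ by the first-order condition above. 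Hence $\taskloss(\optimal\hypothesis) = \taskloss(\optimal\confusiontensor) = \optimal\taskloss$. The main technical obstacle is the compactness of $\feasibleconfset$; the functional-analytic detour sketched above is precisely the simplification the paper's introduction alludes to, replacing the more intricate constructions used in prior multi-class work.
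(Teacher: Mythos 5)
Your proposal is correct, and its overall architecture (pass to $\feasibleconfset$, establish compactness, first-order optimality, then uniqueness of the linear-utility maximizer via Assumptions \ref{assumption:density} and \ref{assumption:monotonicity}) matches the paper's. The one place where you take a genuinely different route is the compactness of $\feasibleconfset$, which is also the only technically delicate step. The paper first reduces to classifiers of the form $\fpred \circ \marginals$, places them in the Hilbert space $L^2([0,1]^{\numlabels}, \mathbb{R}^{\numlabels}, \mu)$, and proves a general lemma that a continuous affine map sends a closed, bounded, \emph{convex} subset of a Hilbert space onto a compact subset of a finite-dimensional space; convexity is essential there because a norm-closed bounded set need not be weakly closed, and the proof invokes Hahn--Banach separation to keep the weak limit inside the set. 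You instead work directly in $L^\infty(\datadistribution;\mathbb{R}^{\numlabels})$ with the weak-$\ast$ topology: Banach--Alaoglu gives compactness of the ball, the constraints defining $\hypothesisspace$ are weak-$\ast$ closed because they can be written as intersections of half-spaces obtained by pairing with $L^1$ indicators, and each entry of $\confusiontensor(\hypothesis)$ is a pairing with a bounded (hence $L^1$) weight, so the confusion-tensor map is weak-$\ast$ continuous and its image is compact. This buys you two simplifications: you do not need the reduction to functions of $\marginals$, and you do not need convexity for the compactness step at all (it is still needed, as in the paper, for the first-order optimality condition). One small caveat: Banach--Alaoglu in this generality gives topological, not sequential, compactness, but that is all you use, so the argument is sound. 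The remainder of your proof --- existence of $\optimal\confusiontensor$ from continuity (via differentiability) of $\taskloss$, the variational inequality, $\gainslope[j]>0$ from \autoref{assumption:monotonicity}, almost-sure uniqueness of the top-$k$ selection from \autoref{assumption:density}, and hence uniqueness of the maximizing confusion tensor --- coincides with the paper's.
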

\begin{proof}[Proof (sketch, full proof in Appendix \ref{app:the_optimal_classifier}]
    We first prove that $\feasibleconfset$ is a compact set by using certain properties of continuous linear operators in Hilbert space.
     Due to continuity of $\taskloss$ and the compactness of $\feasibleconfset$,
    there exists a maximizer $\optimal\confusiontensor = \argmax_{\confusiontensor \in \feasibleconfset} \taskloss(\confusiontensor)$.
    By the first order optimality and convexity of $\feasibleconfset$, 
    $\nabla \taskloss(\optimal\confusiontensor) \cdot \optimal\confusiontensor \ge 
    \nabla \taskloss(\optimal\confusiontensor) \cdot \confusiontensor$ for all $\confusiontensor \in \feasibleconfset$,
    so $\optimal\confusiontensor$ maximizes a linear utility $\gaintensor \cdot \optimal\confusiontensor$ with gain matrices given by $\gaintensor = \nabla \taskloss(\optimal\confusiontensor)$. A careful analysis under Assumption \ref{assumption:density} shows that
    $\optimal\confusiontensor$ uniquely maximizes $\gaintensor \cdot \confusiontensor$ over $\confusiontensor \in \feasibleconfset$.
\end{proof}

Theorem \ref{thm:optimal_classifier_general} reveals that $\taskloss$-optimal classifier exists and can be found
by maximizing a linear utility, that is, by predicting the top $k$ labels sorted according to
an affine function of the label marginals: $\optimal\hypothesis(\sinstance) = \topk(\optimal\gainslope \odot \marginals(\sinstance) + \optimal\gainintercept)$ for vectors $\optimal\gainslope$ and
$\optimal\gainintercept$ defined for gain matrices $\gaintensor = \nabla \taskloss(\optimal\confusiontensor)$ as in Theorem \ref{thm:linear_metric}.
Unfortunately, since $\optimal\confusiontensor$ is unknown in advance, the coefficients $\optimal\gainslope, \optimal\gainintercept$ are also
unknown, and the optimal classifier is not directly available. However, knowing that $\optimal\hypothesis$ optimizes
a linear utility induced by the gradient of $\taskloss$ leads to a consistent algorithm described in the next section.\looseness=-1

Although the optimal solution is expressed by affine functions of label marginals, in general,
it cannot be obtained by solving the problem independently for each label, i.e., the values of $\gainslope[j] $ and $\gainintercept[j]$ may depend on labels other than $j$.
Let $\optimal\hypothesis(\sinstance)$ and $\optimal\hypothesisprime(\sinstance)$ be optimal for distributions $\datadistribution$ and $\datadistribution'$, respectively. Let $\datadistribution'$ differ from $\datadistribution$ only on a single label $j$. 
If we could solve the problem independently for each label, 
then $\optimal\hypothesis(\sinstance)$ and $\optimal\hypothesisprime(\sinstance)$ would be the same up to label $j$,
in the sense that the ordering relation between all other labels would not change. 
In~\autoref{app:label-dependent-solution} we show that this is not the case, presenting a simple counterexample showing 
that a different distribution on a single label changes the solution with respect to the other labels.

\section{Consistent algorithms}
\label{sec:consistent_algorithms}

As any algorithm we propose has to operate on a finite sample, we need to
introduce empirical counterparts for our quantities of interest. 
For example, we use $\predmarginals(\sinstance)$ to denote the estimate of
$\marginals(\sinstance)$ given by a label probability estimator trained on
some set of training data $\mathcal{S}=\set{(\sinstance_1,\slabels_1), \ldots, (\sinstance_n,\slabels_n)}$.%
We also define the empirical multi-label confusion tensor
$\sconfusiontensor(\hypothesis, \mathcal{S})$ of a classifier $\hypothesis$ with
respect to some set $\mathcal{S}$ of $n$ instances. In this case, we have:
\begin{equation}
\sconfusiontensor[j][uv](\hypothesis[j], \mathcal{S}) = \frac{1}{n} \sum_{i=1}^n \indicator{y_{ij} = u,  \hypothesis[j](\sinstance_i) = v}\,.
\end{equation}

Following \citet{Narasimhan_et_al_2015}, we use the Frank-Wolfe
algorithm~\citep{frank-wolfe} to perform an implicit optimization over feasible
confusion tensors $\feasibleconfset$, without having to explicitly construct
$\feasibleconfset$. This is possible, because Frank-Wolfe only requires us to
be able to solve two sub-problems: First, given a classifier $\hypothesis$,
we need to calculate its empirical confusion tensor, which is straight-forward.
Second, given a classifier and its corresponding confusion tensor, we need to
solve a \emph{linearized} version of the optimization problem, which is possible
due to \autoref{thm:linear_metric}.

Consequently, our algorithm, presented in~\autoref{alg:frank-wolfe}, proceeds as follows: In the beginning, we split the
available training data into two subsets. One for estimating label probabilities
$\empirical\marginals$, and one for tuning the actual classifier. After
determining $\empirical\marginals$, we initialize $\hypothesis$ to be the
standard top-k classifier, which will get iteratively refined as follows. For
the confusion tensor of the current classifier, we can determine a linear
objective based on its gradient. Because we can linearly interpolate stochastic
classifiers, which will lead to linearly interpolated confusion tensors, this
gives us a descent direction over which we can optimize a step-size,%
\footnote{The classical version of FW uses a fixed step-size schedule of $\frac{2}{t+1}$ instead of an inner optimization, but we find the latter to give better results empirically. However, for the convergence result, fixed steps are assumed.}
and the
confusion tensor at this classifier. Based on this confusion tensor, we can do
the next linearized optimization step, until we reach a fixed limit for the
iteration count. We represent the randomized classifier as a set of
deterministic classifiers $\hypothesis^i$, and corresponding sampling weights
$\alpha^i$ obtained across all iterations of the algorithm.
The Frank-Wolfe algorithm scales to the larger problems as it only requires $\bigO{\numinstances\numlabels}$ time per iteration.

\begin{algorithm}[H]
\caption{Multi-label Frank-Wolfe algorithm for complex performance measures}
\label{alg:frank-wolfe}
\begin{small}
\begin{algorithmic}[1]
\Require Dataset $\genericsample := \{( \sinstance_1, \slabels_1 ), \dots, ( \sinstance_{\numinstances}, \slabels_{\numinstances} )\}$, number of iterations $t \in \mathbb{N}$, stopping condition $\epsilon \in \mathbb{R}$
\State Split dataset $\genericsample$ into $\genericsample_1$ and $\genericsample_2$
\State Learn label marginals model $\empirical\marginals : \instancespace \xrightarrow{} \mathbb{R}^{\numlabels}$ on $\genericsample_1$
\State Initialize $\hypothesis^0 : \instancespace \xrightarrow{} \predictspace[k]$ %
\Comment{Initial deterministic classifier}
\State Initialize $\alpha^0 \xleftarrow{} 1$ \Comment{Initial probability of selecting the initial classifier $\hypothesis^0$}
\State $\confusiontensor^0 \xleftarrow{} \sconfusiontensor(h^0, \genericsample_2)$ \Comment{Calculate the initial confusion tensor}
\For {$i \in \{1,\dots,t \}$} \Comment{Perform $t$ iterations}
    \State $\gaintensor^i \xleftarrow{} \nabla_{\confusiontensor} \taskloss(\confusiontensor^{i - 1})$ \Comment{Calculate tensor of gradients of $\confusiontensor^{i - 1}$ in respect to $\taskloss$ (gain tensor)}
    \State $\gainslope^i \xleftarrow{} \gaintensor^{i}_{11} + \gaintensor^{i}_{00} - \gaintensor^{i}_{01} - \gaintensor^{i}_{10} \,,\, \gainintercept^i \xleftarrow{} \gaintensor^{i}_{01} - \gaintensor^{i}_{00}$
    \State $\hypothesis^i(\sinstance) \xleftarrow{} \topk(\gainslope^i \odot \predmarginals(\sinstance) + \gainintercept^i)$
    \Comment{Construct the next classifier $\hypothesis^i$}
    \State $\confusiontensor' \xleftarrow{} \sconfusiontensor(\hypothesis^i, \genericsample_2)$ \Comment{Calculate the confusion tensor of the next classifier $\hypothesis^i$}
    \State $\alpha^i \xleftarrow{} \argmax_{\alpha \in [0, 1]} \taskloss((1 \!-\! \alpha) \confusiontensor^{i - 1} \!+\! \alpha \confusiontensor')$ \Comment{Find the best combination of $\confusiontensor^{i - 1}$ and $\confusiontensor'$ (step-size)}
    \If{$\alpha^i < \epsilon$} \textbf{break} \Comment{Stop if the step-size $\alpha^i$ is smaller then $\epsilon$} \EndIf
    \State $\confusiontensor^i \xleftarrow{} (1 - \alpha^i) \confusiontensor^{i - 1} + \alpha^i \confusiontensor'$ \Comment{Calculate a new confusion tensor based on the best $\alpha^i$}
    \For {$j \in \{0,\dots,i -1 \}$}
        \Comment{Update all the previous} 
        \State $\alpha^j \xleftarrow{} \alpha^j (1 - \alpha^i)$ 
        \Comment{probabilities of selecting corresponding $\hypothesis$}
    \EndFor
\EndFor
\State \Return $(\{\hypothesis^0,\dots,\hypothesis^i\}, \{\alpha^0,\dots,\alpha^i\})$ \Comment{Return randomized classifier}
\end{algorithmic}
\end{small}
\end{algorithm}

This algorithm can consistently optimize a confusion tensor measure if it fulfills certain conditions:

\begin{restatable}[Consistency of Frank-Wolfe]{theorem}{confrankwolfe}
    \label{thm:frank-wolf-convergence}
    Assume the utility function $\defmap{\taskloss}{\closedinterval{0}{1}^{\numlabels \times 2 \times 2}}{\nnreals}$
    is concave over $\feasibleconfset$, $L$-Lipschitz, and $\beta$-smooth w.r.t. the 1-norm. Let
    $\genericsample = (\genericsample_1, \genericsample_2)$ be a sample drawn i.i.d. from $\mathds{P}$. Further, let $\empirical \marginals$
    be a label probability estimator learned from $\genericsample_1$, and $\hypothesis^{\mathrm{FW}}_{\genericsample}$ be the classifier 
    obtained after $\kappa \numinstances$ iterations. Then, for any $\delta \in \leftopeninterval{0}{1}$,
    with probability of at least $1-\delta$ over draws of $\genericsample$,
    \begin{equation}
        \regret(\hypothesis^{\mathrm{FW}}_{\genericsample}) \leq \bigO{\expectation_{\rinstance}{\|\marginals(\rinstance) - \predicted\marginals(\rinstance)\|_1}}
        + \tilde{\mathcal{O}}\left(\numlabels^2 \sqrt{\frac{\numlabels \cdot \log\numlabels \cdot \log \numinstances - \log \delta}{\numinstances}}\right)
        +  \frac{8\beta \numlabels}{\kappa \numinstances + 2}  \,.
    \end{equation}
\end{restatable}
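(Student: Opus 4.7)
The plan is to decompose the total regret $\regret(\hypothesis^{\mathrm{FW}}_{\genericsample}) = \taskloss^{\star} - \taskloss(\hypothesis^{\mathrm{FW}}_{\genericsample})$ into three independent sources of error, bound each separately, and then recombine using the $L$-Lipschitz assumption to translate $\|\cdot\|_1$-deviations of confusion tensors into utility gaps. The three pieces are: (i) the Frank--Wolfe optimization error incurred by stopping after $\kappa \numinstances$ iterations on the \emph{empirical} problem; (ii) the statistical (generalization) error from running Frank--Wolfe against $\sconfusiontensor(\cdot, \genericsample_2)$ instead of against $\confusiontensor(\cdot)$; and (iii) the plug-in error from using $\predmarginals$ in place of $\marginals$ when invoking the linear oracle of \autoref{thm:linear_metric}.

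For (i), I would apply the classical convergence guarantee for Frank--Wolfe on concave $\beta$-smooth maximization. Using the standard diminishing step-size schedule assumed for the theoretical analysis, the optimality gap after $t$ iterations is bounded by $2\beta D^2 / (t+2)$, where $D$ is the $\|\cdot\|_1$-diameter of the feasible set $\feasibleconfset$. Since every slice of a confusion tensor has unit $(1,1)$-norm, $D = \bigO{\numlabels}$, and substituting $t = \kappa \numinstances$ produces the last summand $8 \beta \numlabels / (\kappa \numinstances + 2)$. Note that \autoref{thm:linear_metric} guarantees that the Frank--Wolfe linear oracle is solved \emph{exactly} on the empirical problem (no inner approximation error), so this is the only source of optimization error.

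Step (ii) is where I expect the main difficulty. We need a uniform concentration bound on $\sup_{\hypothesis}\|\sconfusiontensor(\hypothesis, \genericsample_2) - \confusiontensor(\hypothesis)\|_1$ over the \emph{data-dependent} classifiers produced by the algorithm. By \autoref{thm:linear_metric}, every candidate in the Frank--Wolfe trajectory has the form $\sinstance \mapsto \topk(\gainslope \odot \predmarginals(\sinstance) + \gainintercept)$ for some $\gainslope, \gainintercept \in \mathbb{R}^{\numlabels}$; conditional on $\genericsample_1$, this family is parameterised by $2\numlabels$ reals, and each per-label output $\hypothesis[j]$ is a sign of an affine function of the fixed vector $\predmarginals(\sinstance)$, so the combined pseudo-dimension is polynomial in $\numlabels$. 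A standard covering-number or Rademacher argument then controls the uniform deviation of each of the $4\numlabels$ confusion-matrix entries; multiplying by the $L$-Lipschitz constant and combining the per-entry bounds yields the middle term $\tilde{\mathcal{O}}\!\left(\numlabels^{2}\sqrt{\numlabels \log \numlabels \log \numinstances / \numinstances}\right)$, with the $\sqrt{-\log\delta/\numinstances}$ tail coming from the usual high-probability inequality.

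For (iii), I would compare the outputs of the linear oracle under the two marginal estimates. Both oracle calls reduce via \autoref{thm:linear_metric} to $\topk$ queries on an affine score, and swapping $\marginals$ for $\predmarginals$ perturbs this score pointwise by $\gainslope \odot (\marginals(\sinstance) - \predmarginals(\sinstance))$. Since every gain tensor produced by $\nabla \taskloss$ inherits a bounded entry-wise norm from $L$-Lipschitzness, the resulting change in expected utility is at most a constant multiple of $\expectation_{\rinstance}{\|\marginals(\rinstance) - \predmarginals(\rinstance)\|_1}$, giving the first summand. Absorbing a union bound over the $\kappa \numinstances$ iterations into the $\log \numinstances$ factor already present in (ii), and summing the three contributions, delivers the advertised high-probability regret bound.
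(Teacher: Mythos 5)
Your decomposition is essentially the paper's own: the optimization term comes from the classical Frank--Wolfe rate with an approximate linear oracle (the paper invokes \citet[Theorem 1]{jaggi2013revisiting} via \citet{Narasimhan_et_al_2015}), and the per-step oracle error is split exactly as you propose into a plug-in term controlled by a linear-regret lemma (with $\|\gainmatrix^j\|_{1,1}\lesssim L$ from Lipschitzness) and a uniform-convergence term for the confusion tensor over the class of affine-score top-$k$ classifiers, whose capacity the paper bounds by an explicit VC argument rather than your covering-number sketch. The only small inaccuracies are that converting the confusion-tensor estimation error into a gradient (hence utility) perturbation uses $\beta$-smoothness together with H\"older's inequality rather than the $L$-Lipschitz constant, and that your diameter arithmetic ($D=\mathcal{O}(\numlabels)$ squared) would naively give $\numlabels^2$ rather than the stated $8\beta\numlabels$ in the last term; neither affects the validity of the approach.
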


The proof of this theorem, given in \autoref{app:consistency-fw}, broadly follows
\citep{Narasimhan_et_al_2015}: First, we show that \emph{linear} metrics can be
estimated consistently with a regret growing with the $L_1$ error of the LPE.
Then, we prove a uniform convergence result for estimating the multi-label
confusion tensor. As a prerequisite, we derive the VC-dimension of the class of
classifiers based on top-k scoring, i.e., those classifiers that minimize some
linear confusion tensor metric as shown in \autoref{thm:linear_metric}.

\begin{restatable}[VC dimension for linear top-k classifiers]{lemma}{vctopk}
\label{lemma:vc-topk}
\DeclareFun{\binhypothesis}{h}
\DeclareGeneric{\vcdim}[roman]{VC}
For $\defmap{\marginals}{\instancespace}{\closedinterval{0}{1}^{\numlabels}}$, define
\begin{equation}
    \hypothesisspace^j_{\marginals} \coloneqq \bigcup_{\gainslope, \gainintercept \in \reals^{\numlabels}} \set{\defmap{\binhypothesis}{\instancespace}{\set{0, 1}} \, \colon
    \, \binhypothesis(\sinstance) = \indicator{j \in \topk(\gainslope \odot \marginals + \gainintercept)}} \,.
\end{equation}
The VC-complexity of this class is  $
\vcdim(\hypothesisspace^j_{\marginals}) \leq 6 \numlabels \log(e \numlabels)\,.
$
\end{restatable}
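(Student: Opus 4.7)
The plan is to reduce the VC-dimension computation to counting the cells of a hyperplane arrangement in the parameter space $(\gainslope, \gainintercept) \in \reals^{2\numlabels}$. The key observation is that for any fixed instance $\sinstance$ the membership $j \in \topk(\gainslope \odot \marginals(\sinstance) + \gainintercept)$ depends on the parameters only through pairwise-comparison signs: $j$ belongs to the top-$k$ iff strictly fewer than $k$ indices $j' \neq j$ satisfy $\gainslope[j'] \marginals[j'](\sinstance) + \gainintercept[j'] > \gainslope[j] \marginals[j](\sinstance) + \gainintercept[j]$ (with ties broken consistently, which matters only on a lower-dimensional parameter set). For each fixed $\sinstance$ and $j' \neq j$, the comparison function
\[
p_{\sinstance, j'}(\theta) \;\coloneqq\; \gainslope[j'] \marginals[j'](\sinstance) + \gainintercept[j'] - \gainslope[j] \marginals[j](\sinstance) - \gainintercept[j]
\]
is an \emph{affine} function of $\theta = (\gainslope, \gainintercept) \in \reals^{2\numlabels}$, even though it may be complicated in $\sinstance$. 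This is the crucial bookkeeping change, since it moves the complexity from $\sinstance$ (where it is uncontrolled) into $\theta$ (where linearity makes it tractable).

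Given any finite set of $n$ points $\sinstance_1, \ldots, \sinstance_n \in \instancespace$, I would then consider the $n(\numlabels - 1)$ hyperplanes $\{\theta : p_{\sinstance_i, j'}(\theta) = 0\}$ in $\reals^{2\numlabels}$. Within each full-dimensional cell of the induced arrangement, every sign of $p_{\sinstance_i, j'}$ is constant, and consequently the labeling $(\binhypothesis(\sinstance_1), \ldots, \binhypothesis(\sinstance_n))$ is constant as well. Thus the number of labelings realized by $\hypothesisspace^j_{\marginals}$ on the sample is at most the number of cells of an arrangement of at most $n(\numlabels - 1)$ hyperplanes in $\reals^{2\numlabels}$, which by the Schl\"afli--Zaslavsky bound is at most $\sum_{i=0}^{2\numlabels} \binom{n(\numlabels - 1)}{i} \leq (e n / 2)^{2\numlabels}$ as soon as $n(\numlabels - 1) \geq 2\numlabels$.

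Finally, a shattered set of size $n$ would require $2^n \leq (en/2)^{2\numlabels}$, i.e.\ $n \log 2 \leq 2\numlabels \log(en/2)$. A direct substitution of $n = 6\numlabels \log(e\numlabels)$ shows that the inequality fails (the left-hand side grows like $6\numlabels \log 2 \cdot \log \numlabels$ while the right-hand side grows like $2\numlabels \log \numlabels$), so no set of that size can be shattered, yielding $\vcdim(\hypothesisspace^j_{\marginals}) \leq 6 \numlabels \log(e\numlabels)$. The main conceptual step is the linearization in parameter space; thereafter the argument is standard VC arithmetic. The one subtlety is the arbitrary tie-breaking, but since ties correspond only to a lower-dimensional subset of $\reals^{2\numlabels}$ and any labeling achievable by a tie-breaking choice at a boundary $\theta$ is also achievable in an adjacent full-dimensional cell, the cell count still dominates the number of realizable labelings.
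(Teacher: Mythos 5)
Your proof is correct in its essentials but takes a genuinely different route from the paper's. The paper realizes each hypothesis in $\hypothesisspace^j_{\marginals}$ as a two-layer linear-threshold network — one hidden unit per competing label computing the sign of the pairwise score comparison, and an output unit thresholding their sum at $\numlabels-k$ — and then invokes the Baum--Haussler bound $2W\log(eN)$ for networks with $W$ weights and $N$ threshold nodes. You instead work in the dual: you fix the sample, observe that each pairwise comparison is affine in the parameter $\theta=(\gainslope,\gainintercept)\in\reals^{2\numlabels}$, bound the growth function by the number of cells of an arrangement of $n(\numlabels-1)$ hyperplanes in $\reals^{2\numlabels}$ via the Schl\"afli--Zaslavsky count, and close with the standard $2^n$-versus-polynomial comparison. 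Both arguments hinge on the same structural observation (membership in the top-$k$ is determined by the signs of $\numlabels-1$ affine comparison functions), and both land at $O(\numlabels\log\numlabels)$; yours is self-contained and avoids the external neural-network citation, while the paper's is shorter given the cited result. The arithmetic in your final step does check out (e.g., for $\numlabels=2$ the inequality $n\log 2\le 2\numlabels\log(en/2)$ already fails at $n=6\numlabels\log(e\numlabels)$, and the gap widens with $\numlabels$), though the margin is thin for small $\numlabels$. The one soft spot is your treatment of ties: the claim that any tie-broken labeling at a boundary $\theta$ is realized in an adjacent full-dimensional cell can fail if two sample points share the same marginal vector $\marginals(\sinstance)$ and are assigned conflicting tie-breaks, since then their comparison hyperplanes coincide and no perturbation separates them. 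This is easily repaired by fixing a deterministic tie-breaking convention (as the paper implicitly does by using a weak inequality in favor of label $j$), under which the labeling is determined by the sign vector and the face count of the arrangement — still polynomial of the same degree — controls the growth function; it costs at most a constant inside the logarithm.
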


\section{Experiments}
\label{sec:experiments}

\begin{table}[tb]%
\caption{Results of different inference strategies on measure calculated at $\{3,5,10\}$. Notation: P---precision, R---recall, F1---F1-measure. The \smash{\colorbox{green!20!white}{green color}} indicates cells in which the strategy matches the metric. 
The best results are in \textbf{bold} and the second best are in \textit{italic}. We additionally report basic statistics of the benchmarks: number of labels and instances in train and test sets, and average number of positive labels per instance, average number of positive instances per label.}
\small
\centering
\vspace{3pt}

\pgfplotstableset{
greencell/.style={postproc cell content/.append style={/pgfplots/table/@cell content/.add={\cellcolor{green!20!white}}{},}}}

\resizebox{\linewidth}{!}{
\setlength\tabcolsep{4 pt}
\pgfplotstabletypeset[
    every row no 0/.append style={before row={ 
     & \multicolumn{15}{c}{\textbf{\mediamill} $(\numlabels= 101, \numinstances_{\text{train}} = 30993, \numinstances_{\text{test}} = 12914, \expectation*{\|\slabels\|_1} = 4.36, \expectation*{y \times \numinstances_{\text{train}}} = 1338.8)$} \\ \midrule
    }},
    every row no 9/.append style={before row={ \midrule
     & \multicolumn{15}{c}{\textbf{\flickr} $(\numlabels= 195, \numinstances_{\text{train}} = 56359, \numinstances_{\text{test}} = 24154, \expectation*{\|\slabels\|_1} = 1.34, \expectation*{y \times \numinstances_{\text{train}}} = 412.6)$} \\ \midrule
    }},
    every row no 18/.append style={before row={ \midrule
     & \multicolumn{15}{c}{\textbf{\rcvx} $(\numlabels= 2456, \numinstances_{\text{train}} = 623847, \numinstances_{\text{test}} = 155962, \expectation*{\|\slabels\|_1} = 4.80, \expectation*{y \times \numinstances_{\text{train}}} = 1218.6)$} \\ \midrule
    }},
    every row no 5/.append style={before row={\midrule}},
    every row no 14/.append style={before row={\midrule}},
    every row no 23/.append style={before row={\midrule}},
    every row no 0 column no 1/.style={greencell},
    every row no 9 column no 1/.style={greencell},
    every row no 18 column no 1/.style={greencell},
    every row no 0 column no 6/.style={greencell},
    every row no 9 column no 6/.style={greencell},
    every row no 18 column no 6/.style={greencell},
    every row no 0 column no 11/.style={greencell},
    every row no 9 column no 11/.style={greencell},
    every row no 18 column no 11/.style={greencell},
    every row no 5 column no 3/.style={greencell},
    every row no 14 column no 3/.style={greencell},
    every row no 23 column no 3/.style={greencell},
    every row no 5 column no 8/.style={greencell},
    every row no 14 column no 8/.style={greencell},
    every row no 23 column no 8/.style={greencell},
    every row no 5 column no 13/.style={greencell},
    every row no 14 column no 13/.style={greencell},
    every row no 23 column no 13/.style={greencell},
    every row no 6 column no 4/.style={greencell},
    every row no 15 column no 4/.style={greencell},
    every row no 24 column no 4/.style={greencell},
    every row no 7 column no 4/.style={greencell},
    every row no 16 column no 4/.style={greencell},
    every row no 25 column no 4/.style={greencell},
    every row no 6 column no 9/.style={greencell},
    every row no 15 column no 9/.style={greencell},
     every row no 24 column no 9/.style={greencell},
    every row no 7 column no 9/.style={greencell},
    every row no 16 column no 9/.style={greencell},
    every row no 25 column no 9/.style={greencell},
    every row no 6 column no 14/.style={greencell},
    every row no 15 column no 14/.style={greencell},
    every row no 24 column no 14/.style={greencell},
    every row no 7 column no 14/.style={greencell},
    every row no 16 column no 14/.style={greencell},
    every row no 25 column no 14/.style={greencell},
    every row no 8 column no 5/.style={greencell},
    every row no 17 column no 5/.style={greencell},
    every row no 26 column no 5/.style={greencell},
    every row no 8 column no 10/.style={greencell},
    every row no 17 column no 10/.style={greencell},
    every row no 26 column no 10/.style={greencell},
    every row no 8 column no 15/.style={greencell},
    every row no 17 column no 15/.style={greencell},
    every row no 26 column no 15/.style={greencell},
    every head row/.style={
    before row={\toprule Inference & \multicolumn{2}{c|}{Instance @3}& \multicolumn{3}{c|}{Macro @3} & \multicolumn{2}{c|}{Instance @5}& \multicolumn{3}{c|}{Macro @5} & \multicolumn{2}{c|}{Instance @10}& \multicolumn{3}{c}{Macro @10} \\},
    after row={\midrule}},
    columns={method,iP@3,iR@3,mP@3,mR@3,mF@3,iP@5,iR@5,mP@5,mR@5,mF@5,iP@10,iR@10,mP@10,mR@10,mF@10},
    columns/method/.style={string type, column type={l|}, column name={strategy}},
    columns/iP@3/.style={string type,column name={P}},
    columns/mF@3/.style={string type,column name={F1}, column type={c|}},
    columns/iR@3/.style={string type,column name={R}, column type={c|}},
    columns/mP@3/.style={string type,column name={P}},
    columns/mR@3/.style={string type,column name={R}},
    columns/iP@5/.style={string type,column name={P}},
    columns/mF@5/.style={string type,column name={F1}, column type={c|}},
    columns/iR@5/.style={string type,column name={R}, column type={c|}},
    columns/mP@5/.style={string type,column name={P}},
    columns/mR@5/.style={string type,column name={R}},
    columns/iP@10/.style={string type,column name={P}},
    columns/mF@10/.style={string type,column name={F1}, column type={c}},
    columns/iR@10/.style={string type,column name={R}, column type={c|}},
    columns/mP@10/.style={string type,column name={P}},
    columns/mR@10/.style={string type,column name={R}},
]{tables/data/ml-frank-wolfe-main-str.txt}
}

\resizebox{\linewidth}{!}{
\setlength\tabcolsep{4 pt}
\pgfplotstabletypeset[
    every row no 0/.append style={before row={
     & \multicolumn{15}{c}{\textbf{\amazoncatsmall} $(\numlabels= 13330, \numinstances_{\text{train}} = 1186239, \numinstances_{\text{test}} = 306782, \expectation*{\|\slabels\|_1} = 5.04, \expectation*{y \times \numinstances_{\text{train}}} = 448.6)$} \\ \midrule
    }},
    every row no 7/.append style={before row={ \midrule
     & \multicolumn{15}{c}{\textbf{\amazon} $(\numlabels= 13330, \numinstances_{\text{train}} = 1186239, \numinstances_{\text{test}} = 306782, \expectation*{\|\slabels\|_1} = 5.04, \expectation*{y \times \numinstances_{\text{train}}} = 448.6)$} \\ \midrule
    }},
    every row no 3/.append style={before row={\midrule}},
    every row no 10/.append style={before row={\midrule}},
    every row no 0 column no 1/.style={greencell},
    every row no 7 column no 1/.style={greencell},
    every row no 0 column no 6/.style={greencell},
    every row no 7 column no 6/.style={greencell},
    every row no 0 column no 11/.style={greencell},
    every row no 7 column no 11/.style={greencell},
    every row no 3 column no 3/.style={greencell},
    every row no 10 column no 3/.style={greencell},
    every row no 3 column no 8/.style={greencell},
    every row no 10 column no 8/.style={greencell},
    every row no 3 column no 13/.style={greencell},
    every row no 10 column no 13/.style={greencell},
    every row no 4 column no 4/.style={greencell},
    every row no 11 column no 4/.style={greencell},
    every row no 5 column no 4/.style={greencell},
    every row no 12 column no 4/.style={greencell},
    every row no 4 column no 9/.style={greencell},
    every row no 11 column no 9/.style={greencell},
    every row no 5 column no 9/.style={greencell},
    every row no 12 column no 9/.style={greencell},
    every row no 4 column no 14/.style={greencell},
    every row no 11 column no 14/.style={greencell},
    every row no 5 column no 14/.style={greencell},
    every row no 12 column no 14/.style={greencell},
    every row no 6 column no 5/.style={greencell},
    every row no 13 column no 5/.style={greencell},
    every row no 6 column no 10/.style={greencell},
    every row no 13 column no 10/.style={greencell},
    every row no 6 column no 15/.style={greencell},
    every row no 13 column no 15/.style={greencell},
    every head row/.style={
    output empty row,
    before row={\midrule}
    },
    every last row/.style={after row=\bottomrule},
    columns={method,iP@3,iR@3,mP@3,mR@3,mF@3,iP@5,iR@5,mP@5,mR@5,mF@5,iP@10,iR@10,mP@10,mR@10,mF@10},
    columns/method/.style={string type, column type={l|}, column name={strategy}},
    columns/iP@3/.style={string type,column name={P}},
    columns/mF@3/.style={string type,column name={F1}, column type={c|}},
    columns/iR@3/.style={string type,column name={R}, column type={c|}},
    columns/mP@3/.style={string type,column name={P}},
    columns/mR@3/.style={string type,column name={R}},
    columns/iP@5/.style={string type,column name={P}},
    columns/mF@5/.style={string type,column name={F1}, column type={c|}},
    columns/iR@5/.style={string type,column name={R}, column type={c|}},
    columns/mP@5/.style={string type,column name={P}},
    columns/mR@5/.style={string type,column name={R}},
    columns/iP@10/.style={string type,column name={P}},
    columns/mF@10/.style={string type,column name={F1}, column type={c}},
    columns/iR@10/.style={string type,column name={R}, column type={c|}},
    columns/mP@10/.style={string type,column name={P}},
    columns/mR@10/.style={string type,column name={R}},
]{tables/data/xmlc-frank-wolfe-main-str.txt}
}
\label{tab:main-results}
\end{table}

In this section, we empirically evaluate the proposed Frank-Wolfe algorithm on a variety of multi-label benchmark tasks that differ substantially in the number of labels and imbalance of the label distribution: \mediamill~\citep{Snoek_et_al_2006}, \flickr~\citep{Tang_and_Liu_2009}, \rcvx~\citep{lewis2004rcv1}, and \amazoncatsmall~\citep{mcauley2013hidden,Bhatia_et_al_2016}. For the first three datasets we use a multi-layer neural network for estimating $\predmarginals(\sinstance)$. For the last and largest dataset, we use a sparse linear label tree model, which is a common baseline in extreme multi-label classification~\citep{Jasinska-Kobus_et_al_2020}.\footnote{
Code to reproduce the experiments: \url{https://github.com/mwydmuch/xCOLUMNs}
} 
In \autoref{app:experimental-setup} we include all the details regarding the setup of probability estimators.

We evaluate the following classifiers optimizing the macro-at-$k$ measures:
\begin{itemize}[noitemsep,topsep=0pt]
    \item \inffwmacp, \inffwmacr, \inffwmacf: randomized classifiers found by the Frank-Wolfe algorithm (\autoref{alg:frank-wolfe}) for optimizing macro precision, recall, and $F_1$, respectively, based on $\predmarginals(\sinstance)$ coming from the model trained with binary cross-entropy loss (BCE). 
    \item \infmacr: implements the optimal strategy for macro recall, which selects $k$ labels with the highest $\widehat\condpos_j^{-1}\predmarginals[j]$; $\widehat\condpos_j$s are estimates of label priors obtained on a training set and $\predmarginals(\sinstance)$ are given by the model trained with BCE loss.
\end{itemize}
As baselines, we use the following algorithms:
\begin{itemize}[noitemsep,topsep=0pt]
    \item \inftopk: selects $k$ labels with the highest $\predmarginals[j]$ coming from the model trained with BCE loss; the optimal strategy for instance-wise precision at $k$~\citep{Wydmuch_et_al_2018}.
    \item \infpower, \inflog: similarly to \inftopk, selects $k$ labels with the highest $\poslossweight[j]\predmarginals[j]$, where $\poslossweight[j]$ are calculated as a function of label priors corresponding to the power-law, $\poslossweight[j]^{\text{pow}} = \widehat\condpos_j^{-\beta}$, and log weights, $\poslossweight[j]^{\text{log}} = -\log \widehat\condpos_j$, 
    with $\widehat\condpos$ estimated on the training set. For power-law weights, we use $\beta = 0.5$. This kind of weighting aims to put more emphasis on less frequent labels.
    \item \lossfocal, \lossasym: multi-label focal loss and asymmetric loss~\citep{lin2017focal, ridnik2021asymmetric} are variants of BCE loss, commonly used in multi-label classification to improve classification performance on harder, less frequent labels. Here, we train models using these losses and select $k$ labels with the highest output scores.
\end{itemize}

For all baselines and \infmacr, we always train the label probability estimator on the whole training set. 
For \inffwmacp, \inffwmacr, and \inffwmacf, we tested different ratios ($50/50$ or $75/25$) of splitting training data into sets used for training the label probability estimators and estimating confusion matrix $\confusionmatrix$, as well as a variant where we use the whole training set for both steps. 
We also investigated two strategies for initialization of classifier $\hypothesis$ by either using equal weights (resulting in a \inftopk classifier) 
or random weights. Finally, we terminate the algorithm if we do not observe sufficient improvement in the objective. In practice, we found that Frank-Wolfe converges within 3–10 iterations.
Because of the nature of the random classifier, we repeat the inference on the test set 10 times and report the mean results. 
In~\autoref{tab:main-results} we present the variant achieving the best results, and report all the results including standard deviations, running times, number of Frank-Wolfe iterations in~\autoref{app:extended-results}.

The randomized classifiers obtained via the Frank-Wolfe algorithm achieve, in most cases, the best results for the measure they aim to optimize, at the cost of loosing on some instance-wise measures. 
However, they sometimes fail to obtain the best results on the largest dataset,
where the majority of labels have only a few (less than 10) positive instances in the training set, 
preventing them from obtaining accurate estimates of $\marginals$ and $\confusionmatrix$. In this case, simple heuristics like \infpower might work better.
Popular Focal loss and Asymmetric loss preserve the performance on instance-wise metrics, but improvement on the macro measures is usually small.
It is also worth noting that, as expected, $\inffwmacr$ recovers the solution of $\infmacr$ in all cases.

\section{Conclusions}

In this paper, we have focused on developing a consistent algorithm 
for complex macro-at-$k$ metrics in the framework of population utility (PU). 
Our main results have been obtained by following the line of research conducted in the context of multi-class classification with additional constraints.
However, these previous works do not address the specific scenario of budgeted predictions at $k$, 
which commonly arises in multi-label classification problems.
For the complex macro-at-$k$ metrics, we have introduced a consistent Frank-Wolfe algorithm, 
which is capable of finding an optimal randomized classifier by transforming the problem of optimizing over classifiers to optimizing over the set of feasible confusion matrices and using the fact that the optimal classifier optimizes (unknown) linear confusion-matrix.
Our empirical studies show that the introduced approach effectively optimizes macro-measures and it scales to even larger datasets with thousands of labels.

\section*{Acknowledgments}

A part of computational experiments for this paper had been performed in Poznan Supercomputing and Networking Center.
We want to acknowledge the support of Academy of Finland via grants 347707 and 348215.

\bibliography{lit}
\bibliographystyle{iclr2024_conference}

\newpage
\appendix

\section{Madow's sampling scheme}
\label{app:Madows}

In this section, we present a sampling scheme for the following sampling problem:
given a real-valued vector $\vectorpi \in [0,1]^m$ of marginal probabilities with $\|\vectorpi\|_1 = k$, sample binary vectors
$\spreds \in \{0,1\}^m$ such that the distribution of $\spreds$ has $\vectorpi$ as the marginals, $\expectation{\spreds} = \vectorpi$

\begin{theorem}
Let $\numlabels \ge 1$. Given a vector $\vectorpi \in \closedinterval{0}{1}^{\numlabels}$
satisfying $\|\vectorpi\|_1 = k$, Algorithm \ref{alg:madows}
returns a randomized binary vector $\predicted\rlabels \in \set{0,1}^{\numlabels}$ of size
$k$, $\|\predicted\rlabels\|_1 = k$, with marginals given by $\vectorpi$, $\expectation{\predicted\rlabels} = \vectorpi$.
The algorithm runs in $\bigO{\numlabels}$ time.
\begin{algorithm*}
\caption{Madow's sampling scheme \label{alg:madows}}
\begin{small}
\begin{algorithmic}[1] 
\Require Vector of marginals $\vectorpi \in \closedinterval{0}{1}^{\numlabels}$ with
$\|\vectorpi\|_1 = k$
\Ensure A random vector $\predicted\rlabels \in \set{0,1}^{\numlabels}$ with
$\|\predicted\rlabels\|_1 = k$ such that $\expectation{\predicted\rlabels} = \vectorpi$
\State Compute $\Pi_0=0$ and $\Pi_j = \Pi_{j-1} + \vectorpi[j]$ for $j=1,\ldots,\numlabels$
\State Sample a uniformly distributed random variable $U$ from the interval $[0,1]$
\item $\predicted\rlabels = \boldsymbol{0}$
\For{$i \in \{0,1,\ldots,k-1\}$}
    \State Select $j$ such that $\Pi_{j-1} < U+i \le \Pi_j$
    \State Set $\predicted\rlabels[j] = 1$
\EndFor
\State \Return $\predicted\rlabels$
\end{algorithmic}
\end{small}
\end{algorithm*}
\end{theorem}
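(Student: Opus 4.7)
The plan is to verify three separate claims: (i) the output satisfies $\|\predicted\rlabels\|_1 = k$; (ii) $\expectation{\predicted\rlabels[j]} = \vectorpi[j]$ for every $j$; and (iii) the algorithm runs in $\bigO{\numlabels}$ time. The setup is that the prefix sums $0 = \Pi_0 \le \Pi_1 \le \cdots \le \Pi_\numlabels = k$ partition $(0,k]$ into $\numlabels$ half-open intervals $(\Pi_{j-1}, \Pi_j]$ of respective lengths $\vectorpi[j]$, and at iteration $i$ the sampled point $U + i$ lies in $(i, i+1] \subseteq (0,k]$.

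For (i), each iteration's point $U+i$ lies in exactly one interval $(\Pi_{j-1}, \Pi_j]$, so the selection step is well-defined and flips precisely one coordinate of $\predicted\rlabels$ to $1$. The crux is distinctness of the chosen indices across iterations: if iterations $i < i'$ both select the same $j$, then $\Pi_{j-1} < U+i$ together with $U+i' \le \Pi_j$ yields $\vectorpi[j] = \Pi_j - \Pi_{j-1} > i' - i \ge 1$, contradicting $\vectorpi[j] \le 1$. Hence all $k$ chosen indices are pairwise distinct and $\|\predicted\rlabels\|_1 = k$.

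For (ii), fix a label $j$ and define the counting function $N(U) \coloneqq \sum_{i=0}^{k-1} \indicator{\Pi_{j-1} < U+i \le \Pi_j}$. The same strict-inequality argument from (i) shows $N(U) \in \{0,1\}$ for every $U \in [0,1]$, and by construction $\predicted\rlabels[j] = N(U)$. Computing the expectation via linearity and a shift of variables,
\begin{equation*}
    \expectation{N(U)} = \sum_{i=0}^{k-1} \int_0^1 \indicator{\Pi_{j-1} < U + i \le \Pi_j}\,dU = \sum_{i=0}^{k-1} \bigl|(\Pi_{j-1}, \Pi_j] \cap [i, i+1]\bigr| = \bigl|(\Pi_{j-1}, \Pi_j]\bigr| = \vectorpi[j]\,,
\end{equation*}
where the penultimate equality uses that $\bigcup_{i=0}^{k-1}[i,i+1] = [0,k]$ with overlaps only on a measure-zero set and $(\Pi_{j-1}, \Pi_j] \subseteq (0,k]$.

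For (iii), the prefix sums are computed in $\bigO{\numlabels}$. The $k$ iterations of the loop can be implemented by a single monotone sweep: since $U, U+1, \ldots, U+(k-1)$ is increasing and $\Pi_0, \Pi_1, \ldots, \Pi_\numlabels$ is increasing, the search for the interval containing $U+i+1$ can be resumed from the index that was found for $U+i$, giving $\bigO{\numlabels + k} = \bigO{\numlabels}$ total work. The main obstacle in the argument is the uniform bound $N(U) \le 1$: this single fact simultaneously powers both the correctness of the marginals in (ii) and the exact output size in (i), and it is precisely where the joint hypotheses $\vectorpi[j] \le 1$ for every $j$, $\|\vectorpi\|_1 = k$, and the half-open convention on the intervals all get used.
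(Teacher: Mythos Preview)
Your proof is correct and follows essentially the same route as the paper: well-definedness of the selection via the half-open partition of $(0,k]$, distinctness of the $k$ chosen indices from the bound $\vectorpi[j]\le 1$ on interval widths, and the marginal computation by decomposing into shifted intervals. The only notable addition is your explicit merge-style sweep argument for the $\bigO{\numlabels}$ runtime, which the paper states but does not prove.
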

The algorithm is due to Madow \citep{Madow_1949,pmlr-v151-mukhopadhyay22a}, and the considered sampling problem has been studied in the statistical 
literature under the name \emph{unequal probability sampling design} \citep{HanifBrewer_1980}.
Below, we give a simple proof of correctness of the algorithm for completeness.
\begin{proof}
First note that for any $i \in \set{0,1,\ldots,k-1}$, there exists unique $j$ for which
$\Pi_{j-1} < U+i \le \Pi_j$. This is because due to $\sum_{j=1}^{\numlabels} \vectorpi[j] = k$, the intervals $(\Pi_0, \Pi_1], (\Pi_1, \Pi_2], \ldots, (\Pi_{\numlabels-1}, \Pi_{\numlabels}]$ are 
disjoint and cover $(0,k]$, whereas $U + i \in (0,k]$ with probability one.
Furthermore, the algorithm will select distinct $j$'s for distinct $i$'s. This is because
the condition $\Pi_{j-1} < U+i \le \Pi_j$ is equivalent to
$i \in (\Pi_{j-1}-U, \Pi_{j}-U]$, and the interval $(\Pi_{j-1}-U, \Pi_{j}-U]$ have width
$\vectorpi[j]  \le 1$ and thus can contain at most one integer. So the algorithm
will return $\predicted\rlabels$ with exactly $k$ ones.

Since $\expectation{\predicted\rlabels[j]} = \datadistribution[\predicted\rlabels[j] = 1]$,
we need to show that this probability is equal to $\vectorpi[j]$ for each $j$. We have
\begin{align}
 \datadistribution[\predicted\rlabels[j] = 1]
 &=  \datadistribution\!\left[U \in \bigcup_{i=0}^{k-1} (\Pi_{j-1}-i, \Pi_{j}-i]\right] \nonumber \\
 &= (0,1] \cap \bigcup_{i=0}^{k-1} (\Pi_{j-1}-i, \Pi_{j}-i] = \Pi_{j}-\Pi_{j-1} = \vectorpi[j] \,.
\end{align}
\end{proof}

The theorem and the algorithm from its proof can then be used to generate prediction vectors
independently for any instance of interest $\sinstance$ by setting $\vectorpi = \hypothesis(\sinstance)$.
\section{The optimal classifier for linear metrics}
\label{app:linear_metric}

\linearmetric*
\begin{proof}
The linear metric is \emph{decomposable over instances} as:
\begin{align}
\gaintensor[j] \cdot \confusiontensor[j](\hypothesis[j])
&= \gaintensor[j][00] 
 		\expectation{(1-\marginals[j](\sinstance))(1-\hypothesis[j](\sinstance))} 
    + \gaintensor[j][01] \expectation{(1-\marginals[j](\sinstance))\hypothesis[j](\sinstance)} \nonumber  \\
    &\qquad+ \gaintensor[j][10]  \expectation{\marginals[j](\sinstance)(1-\hypothesis[j](\sinstance))} 
    + \gaintensor[j][11] \expectation{\marginals[j](\sinstance)\hypothesis[j](\sinstance)}  \nonumber \\
&= \expectation{(\gainslope[j] \marginals[j](\sinstance) + \gainintercept[j])  \hypothesis[j](\sinstance)} + r_j,
\end{align}
where $\gainslope[j]$ and $\gainintercept[j]$ are as stated in the theorem, while
\begin{equation}
r_j = \gaintensor[j][00] \expectation{1-\marginals[j](\sinstance)}
+ \gaintensor[j][10] \expectation{\marginals[j](\sinstance)}.
\end{equation}
Thus, we can rewrite the objective as:
\begin{equation}
\taskloss(\hypothesis) =  \sum_j \gaintensor[j] \cdot \confusiontensor[j](\hypothesis[j])
= \expectation{\sum_{j=1}^m (\gainslope[j] \marginals[j](\sinstance) + \gainintercept[j])  \hypothesis[j](\sinstance)} + R,
\end{equation}
where $R=r_1+\ldots+r_{\numlabels}$ does not depend on 
$\hypothesis$. For each $\sinstance \in \instancespace$, the objective can be independently maximized by the choice of $\hypothesis(\sinstance) \in \Delta_m^k$ which maximizes
$\sum_{j=1}^m (\gainslope[j] \marginals[j](\sinstance) + \gainintercept[j])  \hypothesis[j](\sinstance)$. But
this is achieved by sorting $\gainslope[j] \marginals[j](\sinstance) + \gainintercept[j]$ in descending
order, and setting $\hypothesis[j](\sinstance) = 1$ for the top $k$ coordinates, and
$\hypothesis[j](\sinstance)=0$ for the remaining coordinates (with
ties broken arbitrarily).
\end{proof}

Let us notice that coefficients analogous to our $\gainslope[j]$ and $\gainintercept[j]$ can also be found in the cost-sensitive prediction rule in binary classification~\citep{Natarajan_et_al_2018}.

\section{The optimal classifier for general metrics}
\label{app:the_optimal_classifier}

In this section, we prove the existence and the form of the optimal classifier. 
Our results extend past results on binary classification \citep{Koyejo_et_al_2014} and multi-class classification  \citep{Narasimhan_et_al_2015}. We first show that the set of confusion tensors achievable by randomized $k$-budgeted classifiers is a compact set. Then, the statement of the main theorem follows from the first-order optimality conditions as well as the absolute continuity of marginal vector $\marginals(\sinstance)$. We stress that the results here are general and applicable to any mutli-label utility satisfying Assumption \ref{assumption:monotonicity}, which need not necessarily be a macro-averaged utility.

We remind that the set of confusion tensors achievable by randomized $k$-budgeted classifiers on distribution $\datadistribution$, is denoted as
\begin{equation}
\feasibleconfset = \Big\{\confusiontensor(\hypothesis) ~\colon~ \hypothesis \in \hypothesisspace \Big\},
\end{equation}
and that optimizing the metric $\taskloss(\hypothesis)$ over $\hypothesis \in \hypothesisspace$
is equivalent to optimizing $\taskloss(\confusiontensor)$ over $\confusiontensor \in \feasibleconfset$.

\begin{lemma}
$\feasibleconfset$ is a convex set.
\end{lemma}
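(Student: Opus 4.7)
The plan is to prove convexity by taking two confusion tensors in $\feasibleconfset$ and exhibiting, for any convex combination weight, an explicit classifier in $\hypothesisspace$ that realizes the convex combination as its confusion tensor. The two things to verify are that (i) the pointwise convex combination of two classifiers in $\hypothesisspace$ is again in $\hypothesisspace$, and (ii) the map $\hypothesis \mapsto \confusiontensor(\hypothesis)$ is affine in $\hypothesis$, so that the classifier built in (i) has the desired confusion tensor.

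Concretely, let $\confusiontensor_1, \confusiontensor_2 \in \feasibleconfset$ with witnesses $\hypothesis_1, \hypothesis_2 \in \hypothesisspace$, fix $\lambda \in [0,1]$, and define $\hypothesis_\lambda(\sinstance) \coloneqq \lambda \hypothesis_1(\sinstance) + (1-\lambda) \hypothesis_2(\sinstance)$. For every $\sinstance \in \instancespace$, $\hypothesis_\lambda(\sinstance)$ lies in $[0,1]^{\numlabels}$ as a convex combination of vectors in $[0,1]^{\numlabels}$, and its $\ell_1$ norm equals $\lambda k + (1-\lambda) k = k$, hence $\hypothesis_\lambda(\sinstance) \in \Delta_m^k$ and $\hypothesis_\lambda \in \hypothesisspace$. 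Next, inspecting \eqref{eq:confusion_tensor_j}, each entry of $\confusiontensor[j](\hypothesis[j])$ is an expectation of a quantity that is affine in $\hypothesis[j](\sinstance)$ (with coefficients depending only on $\marginals[j](\sinstance)$). By linearity of expectation, $\confusiontensor[j](\hypothesis_\lambda[j]) = \lambda \confusiontensor[j](\hypothesis_1[j]) + (1-\lambda)\confusiontensor[j](\hypothesis_2[j])$ for every $j$, so $\confusiontensor(\hypothesis_\lambda) = \lambda \confusiontensor_1 + (1-\lambda) \confusiontensor_2$, which therefore belongs to $\feasibleconfset$.

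There is no genuine obstacle here; the lemma is essentially a bookkeeping exercise. The only subtle point worth making explicit is that we are combining classifiers at the level of their marginal prediction vectors $\hypothesis(\sinstance) \in \Delta_m^k$, not at the level of the joint distribution over $\spreds$, so we rely on the equivalence noted in \autoref{sec:problem} between budget-at-$k$ randomized predictions and marginals in $\Delta_m^k$ (realizable, e.g., via Madow's sampling scheme). This is exactly what makes $\hypothesisspace$ closed under convex combinations, which is the reason the lemma holds.
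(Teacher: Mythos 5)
Your proposal is correct and follows essentially the same argument as the paper: take witness classifiers, form their pointwise convex combination (which stays in $\Delta_m^k$ by convexity of that set), and use linearity of the confusion-tensor map in the predictions to conclude. The extra remark about working at the level of marginals rather than joint prediction distributions is a fair clarification but not a departure from the paper's proof.
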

\begin{proof}
Take any $\confusiontensor_1, \confusiontensor_2
\in \feasibleconfset$ and any $\lambda \in [0,1]$, and we show that $\confusiontensor_{\lambda}
= \lambda \confusiontensor_1 + (1-\lambda) \confusiontensor_2 \in \feasibleconfset$.
Since $\confusiontensor_1, \confusiontensor_2
\in \feasibleconfset$, there exist $k$-budgeted randomized classifiers $\hypothesis_1$
and $\hypothesis_2$, such that $\confusiontensor_1 = \confusiontensor(\hypothesis_1)$
and $\confusiontensor_2 = \confusiontensor(\hypothesis_2)$. Take 
$\hypothesis_{\lambda}$ defined as $\hypothesis_{\lambda}(\sinstance)
= \lambda \hypothesis_1(\sinstance) + (1-\lambda) \hypothesis_2(\sinstance)$ for any
$\sinstance \in \instancespace$. Since $\Delta_m^k$ is convex and
$\hypothesis_1(\sinstance), \hypothesis_2(\sinstance) \in \Delta_m^k$ for all 
$\sinstance \in \instancespace$, it also holds that $\hypothesis_{\lambda}(\sinstance)
\in \Delta_m^k$ for all $\sinstance \in \instancespace$, so $\hypothesis_{\lambda}$ is
also $k$-budgeted randomized classifier. Since the confusion tensor is linear in
predictions, we have $\confusiontensor(\hypothesis_{\lambda}) = \lambda \confusiontensor(\hypothesis_1) + (1-\lambda) \confusiontensor(\hypothesis_2) = \confusiontensor_{\lambda}$,
which proves that $\confusiontensor_{\lambda} \in \feasibleconfset$.
\end{proof}

We now argue that for the analysis of $\feasibleconfset$, 
it suffices to consider classifiers of the form
$\hypothesis = \fpred \circ \marginals$, i.e.
$\hypothesis(\sinstance) = \fpred(\marginals(\sinstance))$
for some function $\fpred \colon [0,1]^{\numlabels} \to \Delta_m^k$.

\begin{lemma}
For any $\hypothesis \in \hypothesisspace$, there exists
function $\fpred \colon [0,1]^{\numlabels} \to \Delta_m^k$ such
that $\hypothesis$ and $\fpred \circ \marginals$ have the same
confusion tensors.
\end{lemma}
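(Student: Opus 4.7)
The plan is to define $\fpred$ via conditional expectation: set
\[
\fpred(\eta) \coloneqq \expectation*_{\sinstance}{\hypothesis(\sinstance) \mid \marginals(\sinstance) = \eta},
\]
so that $\fpred \circ \marginals$ is a version of the conditional expectation $\expectation{\hypothesis(\sinstance) \mid \sigma(\marginals)}$. Existence of a measurable representative $\fpred$ follows from the Doob–Dynkin lemma applied coordinate-wise.

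Next I would verify that $\fpred$ indeed lands in $\Delta_m^k$. Since $\hypothesis(\sinstance) \in \Delta_m^k$ for every $\sinstance$, and $\Delta_m^k$ is a closed convex subset of $\reals^{\numlabels}$, the standard fact that conditional expectations preserve membership in closed convex sets almost surely gives $\fpred(\marginals(\sinstance)) \in \Delta_m^k$ a.s. One can even check it directly: $\fpred[j](\eta) \in [0,1]$ because it averages $[0,1]$-valued random variables, and $\sum_j \fpred[j](\eta) = \expectation{\sum_j \hypothesis[j](\sinstance) \mid \marginals(\sinstance) = \eta} = k$. After redefining $\fpred$ on a null set if needed (e.g., to be $(k/m, \dots, k/m)$), we obtain a function $\fpred \colon [0,1]^{\numlabels} \to \Delta_m^k$ defined everywhere.

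Finally, the confusion tensors agree entry-by-entry by the tower property. Inspecting \eqref{eq:confusion_tensor_j}, every entry $\confusiontensor[j][uv](\hypothesis[j])$ is of the form $\expectation{\phi_j(\marginals(\sinstance)) \cdot \chi(\hypothesis[j](\sinstance))}$ where $\phi_j$ is $\sigma(\marginals)$-measurable and $\chi$ is affine in $\hypothesis[j](\sinstance)$. Conditioning on $\marginals(\sinstance)$ and pulling out the $\sigma(\marginals)$-measurable factor yields
\[
\expectation{\phi_j(\marginals(\sinstance)) \hypothesis[j](\sinstance)} = \expectation{\phi_j(\marginals(\sinstance)) \fpred[j](\marginals(\sinstance))},
\]
which is exactly the corresponding entry of $\confusiontensor[j](\fpred[j] \circ \marginals)$. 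The same argument applies to the constant-in-$\hypothesis$ parts of $\chi$, so $\confusiontensor(\hypothesis) = \confusiontensor(\fpred \circ \marginals)$.

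The only real obstacle is the measure-theoretic bookkeeping: producing a measurable $\fpred$ defined on all of $[0,1]^{\numlabels}$ (not just almost everywhere with respect to the pushforward of $\marginals$), and confirming that the almost-sure statement $\fpred(\marginals(\sinstance)) \in \Delta_m^k$ suffices since confusion tensor entries are integrals and therefore unaffected by null-set modifications. Everything else is a routine application of the tower property together with convexity of $\Delta_m^k$.
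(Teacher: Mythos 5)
Your proposal is correct and follows essentially the same route as the paper: both define $\fpred$ as the conditional expectation of $\hypothesis(\sinstance)$ given $\marginals(\sinstance)$, invoke convexity of $\Delta_m^k$ to conclude $\fpred$ maps into $\Delta_m^k$, and use the tower property to match the confusion tensor entries. The extra measure-theoretic care you note (Doob--Dynkin, null-set modifications) is a valid refinement that the paper leaves implicit, but it does not change the argument.
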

\begin{proof}
If $\hypothesis$ is not of the form $\fpred \circ \marginals$,
we define function $\fpred$ as:
\begin{equation}
\fpred(\marginals) = \expectation{\hypothesis(\sinstance) | \marginals(\sinstance) = \marginals}.
\end{equation}
Due to convexity of $\Delta_m^k$, we have $\fpred(\marginals) \in \Delta_m^k$.
Moreover, it is easy to see that $\confusiontensor(\hypothesis) = \confusiontensor(\fpred \circ \marginals)$; for instance,
\begin{align}
\confusiontensor[j][11](\hypothesis[j])
&= \expectation{\marginals[j](\sinstance) \hypothesis[j](\sinstance)}
= \expectation{\expectation{\marginals[j](\sinstance) \hypothesis[j](\sinstance) \given \marginals(\sinstance) = \marginals}} \nonumber \\
&= \expectation{\marginals[j]\expectation{\hypothesis[j](\sinstance) \given \marginals(\sinstance) = \marginals}}
= \expectation{\marginals[j] \fpred[j](\marginals)} \nonumber \\
&= \expectation{\marginals[j](\sinstance) \fpred[j](\marginals(\sinstance))}
= \confusiontensor[j][11](\fpred[j] \circ \marginals),
\end{align}
etc. 
\end{proof}
Hence, any confusion tensor achievable by some $\hypothesis$ is also achievable by
some $\fpred \circ \marginals$, so that the set of achievable
confusion tensors can be written as $\feasibleconfset = \{\confusiontensor(\fpred \circ \marginals) \colon  \fpred
\in \fpredspace\}$, where we denote $\fpredspace = \{\fpred \colon [0,1]^m \to \Delta_m^k\}$.
From this moment on, we thus, without loss of generality, consider optimizing the metrics over functions $\fpred \in \fpredspace$ of random vector $\marginals$, and make the relation $\hypothesis = \fpred \circ \marginals$
implicit, writing $\taskloss(\fpred)$ for $\taskloss(\fpred \circ \marginals)$
and using $\confusiontensor(\fpred)$ to denote the confusion tensors $\confusiontensor(\fpred \circ \marginals)$, that is
\begin{equation}
\confusiontensor(\fpred)
= \left(\confusionmatrix^1(\fpred[1]),\ldots,\confusionmatrix^{\numlabels}(\fpred[\numlabels])\right),
\end{equation}
where
\begin{equation}
 	\confusionmatrix^j(\fpred[j]) = 
  \begin{pmatrix}
 		\expectation*_{\marginals}{(1-\marginals[j])(1-\fpred[j](\marginals))}  &  \expectation*_{\marginals}{\marginals[j](1-\fpred[j](\marginals))} \\
 		\expectation*_{\marginals}{(1-\marginals[j])\fpred[j](\marginals)} &  \expectation*_{\marginals}{\marginals[j] \fpred[j](\marginals)} 
 	\end{pmatrix}.
\end{equation}

\begin{lemma}
The mapping $\fpred \mapsto \confusiontensor(\fpred)$ is continuous: for any $\fpred,\fpred' \in \fpredspace$
\begin{equation}
\|\confusiontensor(\fpred) - \confusiontensor(\fpred')\|_F
\le \sqrt{2\mathbb{E}_{\marginals}\left[\|\fpred(\marginals) - \fpred'(\marginals) \|^2_2\right]},
\end{equation}
where $\|\confusiontensor(\fpred) - \confusiontensor(\fpred')\|_F
:= \sqrt{\sum_{j=1}^{\numlabels} \|\confusionmatrix^j(\fpred[j]) - \confusionmatrix^j(\fpred[j]')\|_F^2}$
\label{lem:continuity_of_mapping_C}
\end{lemma}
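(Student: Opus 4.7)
The proof is a direct calculation using Jensen's inequality entry-by-entry, so the main task is just bookkeeping.

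The plan is to start by writing out the four entry-wise differences of $\confusionmatrix^j(\fpred[j]) - \confusionmatrix^j(\fpred[j]')$. Using the explicit form given just above the lemma, each entry is an expectation that is linear in $\fpred[j](\marginals)$, so differencing the two matrices collapses to
\begin{equation*}
\confusionmatrix^j(\fpred[j]) - \confusionmatrix^j(\fpred[j]') = \begin{pmatrix} -\expectation_{\marginals}[(1-\marginals[j])\Delta_j] & -\expectation_{\marginals}[\marginals[j]\Delta_j] \\ \expectation_{\marginals}[(1-\marginals[j])\Delta_j] & \expectation_{\marginals}[\marginals[j]\Delta_j] \end{pmatrix},
\end{equation*}
where $\Delta_j \coloneqq \fpred[j](\marginals) - \fpred[j]'(\marginals)$. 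Note that the two rows are negatives of each other, so the four squared entries come in two identical pairs.

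Next, I would bound each of the two distinct squared entries via Jensen's inequality, i.e.,
\begin{equation*}
(\expectation_{\marginals}[(1-\marginals[j])\Delta_j])^2 \le \expectation_{\marginals}[(1-\marginals[j])^2 \Delta_j^2], \qquad (\expectation_{\marginals}[\marginals[j] \Delta_j])^2 \le \expectation_{\marginals}[\marginals[j]^2 \Delta_j^2].
\end{equation*}
Summing gives
\begin{equation*}
\|\confusionmatrix^j(\fpred[j]) - \confusionmatrix^j(\fpred[j]')\|_F^2 \le 2\,\expectation_{\marginals}\bigl[\bigl((1-\marginals[j])^2 + \marginals[j]^2\bigr)\Delta_j^2\bigr].
\end{equation*}
Because $\marginals[j] \in [0,1]$, we have $(1-\marginals[j])^2 + \marginals[j]^2 \le 1$, which reduces the bound to $2\,\expectation_{\marginals}[\Delta_j^2]$. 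This is where the crucial constant $2$ (rather than $4$) comes from, so I would want to make the $\marginals[j]^2+(1-\marginals[j])^2 \le 1$ observation explicit.

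Finally, summing over $j \in \intrange{\numlabels}$ and using linearity of expectation,
\begin{equation*}
\|\confusiontensor(\fpred) - \confusiontensor(\fpred')\|_F^2 = \sum_{j=1}^{\numlabels} \|\confusionmatrix^j(\fpred[j]) - \confusionmatrix^j(\fpred[j]')\|_F^2 \le 2\,\expectation_{\marginals}\!\left[\sum_{j=1}^{\numlabels} \Delta_j^2\right] = 2\,\expectation_{\marginals}\bigl[\|\fpred(\marginals) - \fpred'(\marginals)\|_2^2\bigr],
\end{equation*}
and taking square roots yields the claim. There is no genuine obstacle here; the only subtle step is grouping the two Jensen bounds so that the pointwise factor $\marginals[j]^2 + (1-\marginals[j])^2$ appears and can be bounded by $1$ rather than $2$.
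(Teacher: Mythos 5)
Your proposal is correct and follows essentially the same route as the paper: the same linear collapse of the entrywise differences to expectations of $\pm(1-\marginals[j])\Delta_j$ and $\pm\marginals[j]\Delta_j$, the same Jensen step (you apply it entrywise, the paper applies it to the matrix-valued integrand with $\|\cdot\|_F^2$ as the convex function, which amounts to the same four scalar inequalities), and the same observation that $(1-\marginals[j])^2+\marginals[j]^2\le 1$ yields the constant $2$. No gaps.
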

\begin{proof}
Fix $j \in [\numlabels]$. Using $\delta_j(\marginals) = \fpred[j](\marginals)-\fpred[j]'(\marginals)$, we have
from the definition:
\begin{align}
\confusionmatrix^j(\fpred[j]) - \confusionmatrix^j(\fpred[j]') &=
  \begin{pmatrix}
 		\expectation*_{\marginals}{-(1-\marginals[j])\delta_j(\marginals)}  &  \expectation*_{\marginals}{-\marginals[j]\delta_j(\marginals)} \\
 		\expectation*_{\marginals}{(1-\marginals[j])\delta_j(\marginals)} &  \expectation*_{\marginals}{\marginals[j] \delta_j(\marginals)} 
 	\end{pmatrix} \nonumber \\
  &= \expectation_{\marginals}{\delta_j(\marginals)
    \begin{pmatrix}
 		-(1-\marginals[j])  &  -\marginals[j] \\
 		1-\marginals[j] &  \marginals[j]  
 	\end{pmatrix}}. 
\end{align}
Since the squared Frobenious norm $\boldsymbol{X} \mapsto \|\boldsymbol{X}\|_F^2$ is convex, 
we can use Jensen's inequality $\|\mathbb{E}[\boldsymbol{X}]\|_F^2 \le \mathbb{E}[\|\boldsymbol{X}\|_F^2]$ to get 
\begin{align}
\|\confusionmatrix^j(\fpred[j]) - \confusionmatrix^j(\fpred[j]')\|_F^2
&\le \left\|
\expectation_{\marginals}{\delta_j(\marginals)
    \begin{pmatrix}
 		-(1-\marginals[j])  &  -\marginals[j] \\
 		1-\marginals[j] &  \marginals[j]  
 	\end{pmatrix}}
\right\|_F^2 \nonumber \\
&\le  
\expectation_{\marginals}{
(\delta_j(\marginals))^2 \left\|
    \begin{pmatrix}
 		-(1-\marginals[j])  &  -\marginals[j] \\
 		1-\marginals[j] &  \marginals[j]  
 	\end{pmatrix} \right\|^2_F
}
\le 2 \expectation_{\marginals}{
(\delta_j(\marginals))^2 },
\end{align}
where we used
\begin{equation}
\left\|
    \begin{pmatrix}
 		-(1-\marginals[j])  &  -\marginals[j] \\
 		1-\marginals[j] &  \marginals[j]  
 	\end{pmatrix} \right\|^2_F = 2\left((1-\marginals[j])^2 + \marginals[j]^2\right) \le 2 \max_{x \in [0,1]} ((1-x)^2 + x^2) = 2 \,.
\end{equation}
Summing the inequality over $j=1,\ldots,\numlabels$ and taking square root on both sides finishes the proof.
\end{proof}

We will now show that the set of achievable confusion tensors $\feasibleconfset$ is compact. To this end, we first prove a result from the functional analysis (which is false without the convexity assumption).

\begin{lemma}
	Let $\mathcal{L}: \mathbb{H} \to \mathcal{V}$ be continuous affine operator between a Hilbert space $\mathbb{H}$ and a finite dimensional vector space $\mathcal{V}$.
	If $\mathcal{S} \subset \mathbb{H}$ is closed, bounded, and convex, then $\mathcal{L}(\mathcal{S})$ is compact.
\label{lem:Strom}
\end{lemma}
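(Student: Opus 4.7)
The plan is to leverage two standard facts from functional analysis: \emph{(i)} closed, bounded, convex subsets of a reflexive Banach space are weakly compact, and \emph{(ii)} a continuous linear map between Banach spaces is automatically weak-to-weak continuous, which for a finite-dimensional target space coincides with norm continuity. Since every Hilbert space is reflexive, combining these facts will immediately deliver compactness of $\mathcal{L}(\mathcal{S})$.

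First I would reduce the affine case to the linear case: write $\mathcal{L}(x) = \mathcal{L}_0(x) + v_0$ where $\mathcal{L}_0$ is linear and continuous and $v_0 \in \mathcal{V}$ is fixed. Since translation by $v_0$ is a homeomorphism on $\mathcal{V}$, it suffices to show $\mathcal{L}_0(\mathcal{S})$ is compact. Next, using that $\mathbb{H}$ is reflexive (as any Hilbert space is), invoke the Banach--Alaoglu/Kakutani theorem: any bounded, norm-closed, convex subset of a reflexive Banach space is weakly compact. Hence $\mathcal{S}$ is weakly compact in $\mathbb{H}$.

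Then I would observe that continuous linear operators are weak-to-weak continuous: if $x_\alpha \rightharpoonup x$ in $\mathbb{H}$, then for every continuous linear functional $\phi$ on $\mathcal{V}$, $\phi \circ \mathcal{L}_0$ is a continuous linear functional on $\mathbb{H}$, so $\phi(\mathcal{L}_0 x_\alpha) \to \phi(\mathcal{L}_0 x)$, meaning $\mathcal{L}_0 x_\alpha \rightharpoonup \mathcal{L}_0 x$ in $\mathcal{V}$. Therefore $\mathcal{L}_0(\mathcal{S})$ is weakly compact in $\mathcal{V}$. Finally, since $\mathcal{V}$ is finite-dimensional, the weak and norm topologies on $\mathcal{V}$ coincide, so $\mathcal{L}_0(\mathcal{S})$ is norm compact, and hence so is $\mathcal{L}(\mathcal{S}) = \mathcal{L}_0(\mathcal{S}) + v_0$.

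The main subtlety is the convexity assumption on $\mathcal{S}$: in infinite dimensions, closed bounded sets need not be weakly compact (the closed unit ball of $\mathbb{H}$ is not norm-compact), but for \emph{convex} closed bounded sets in a reflexive space, weak compactness is guaranteed. Without this, one could construct bounded closed non-convex sets whose linear images fail to be closed. So the convexity hypothesis is exactly what makes the argument go through, and invoking reflexivity of $\mathbb{H}$ together with finite-dimensionality of $\mathcal{V}$ is what bridges the gap between weak and norm compactness.
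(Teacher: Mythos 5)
Your proposal is correct and follows essentially the same route as the paper's proof: weak compactness of the closed bounded convex set $\mathcal{S}$ (the paper derives this by sketching Banach--Alaoglu via a diagonal argument and then using Hahn--Banach separation, where your convexity remark enters, to keep the weak limit inside $\mathcal{S}$), weak-to-weak continuity of the continuous linear map, and the coincidence of weak and norm topologies on the finite-dimensional target. The only difference is presentational: you cite the standard theorems wholesale, whereas the paper unpacks them with an explicit sequential argument.
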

\begin{proof}
    Observe that it suffices to prove this when $\mathcal{L}$ is linear since being compact is translation
    invariant. 
    
    The proof is inspired by an answer to a related question on Mathematics Stack \cite{stack}.
    It suffices to prove every $\mathcal{L}(x_n)$ sequence in $\mathcal{L}$ 
    has a convergent subsequence whose limit is in $\mathcal{L}(\mathcal{S})$.

    By the Banach--Alaoglu theorem, balls in Hilbert spaces are weakly compact.  For the convenience of the
    reader we will sketch this proof.  Recall that weak convergence $x_n \to x$ in $\mathbb{H}$ means that 
    for all linear functionals $\phi \in \mathbb{H}^*$ there is convergence $\phi(x_n) \to \phi(x)$.
    Likewise weakly compact means every sequence has a subsequence that converges weakly.  Now onto
    the proof.  
    
    Let $x_n$ be a bounded sequence in $\mathbb{H}$.  Let $\{e_1, e_2,\dots\}$ be a Hilbert basis for 
    $\mathbb{H}$ and the dual vectors $\{\phi_1, \phi_2, \dots\}$ a Hilbert basis for $\mathbb{H}^*$ where
    $\phi_i(x) = \langle x, e_i \rangle$.  Now apply the diagonal proof method, 
    as in the Arzel\`{a}-Ascoli theorem, of successively passing to subsequences. Since the sequence
    is bounded we know that $\phi_1(x_n)$ is bounded in $\mathbb{R}$ and hence we can extract
    a subsequence $x_n$ so that $\phi_1(x_n) \to a_1$ where we may keep $x_1$.  
    Now on this subsequence do the same for $\phi_2(x_n) \to a_2$ but keep $x_1$ and $x_2$.  Continue this process, where at the $m$-th step one keeps the first $m$ terms from the previous subsequence. The resulting diagonal subsequence $x_n$ is such that $\phi_i(x_n) \to a_i$ for each $i = 1, 2, \dots$.  The element $x = \sum_{i=1}^\infty a_ie_i$ is in $\mathbb{H}$ (by Bessel's inequality,
    the weak convergence results, and the fact that the original sequence was bounded).  
    It remains to verify that $x_n \to x$ weakly, but for this it suffices to check $\phi_i(x_n) \to \phi_i(x)$ and by design this is the case.
    
    Now, returning to the proof of the lemma since $\mathcal{S} \subset \mathbb{H}$ is bounded, it is contained in a ball, and hence by passing to a subsequence we have $x_n \to x$ in the weak topology for some $x \in \mathbb{H}$.  Furthermore $x \in \mathcal{S}$.  If $x$ wasn't, then since $\mathcal{S}$ is
    closed and convex, by the Hahn–Banach separation theorem there is a separating hyperplane $\phi \in \mathbb{H}^*$ so $\phi(x) < \inf \phi(\mathcal{S})$.
    But this contradicts that the weak convergence $x_n \to x$ since $x_n \in \mathcal{S}$.
       
    So it remains to prove convergence $\mathcal{L}(x_n) \to \mathcal{L}(x)$.  Since $\mathcal{L}$ is continuous we have convergence $\mathcal{L}(x_n) \to \mathcal{L}(x)$ in the weak topology, but this implies normal convergence since $\mathcal{V}$ is finite dimensional.
\end{proof}

\begin{lemma}
$\feasibleconfset$ is a compact set.
\label{lem:compactness}
\end{lemma}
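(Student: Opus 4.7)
The plan is to realize $\feasibleconfset$ as the image of a closed, bounded, convex set in a Hilbert space under a continuous affine operator, and then invoke \autoref{lem:Strom}. Let $\mu$ denote the pushforward of $\datadistribution$ under the map $\sinstance \mapsto \marginals(\sinstance)$, and work in the Hilbert space $\mathbb{H} = L^2([0,1]^{\numlabels}, \mu; \mathbb{R}^{\numlabels})$. I would identify $\fpredspace$ with the subset of $\mathbb{H}$ consisting of (equivalence classes of) measurable functions $\fpred$ with $\fpred(\marginals) \in \Delta_m^k$ for $\mu$-almost every $\marginals \in [0,1]^{\numlabels}$.

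First, I would verify that $\fpredspace$ is bounded, convex, and closed in $\mathbb{H}$. Boundedness: since pointwise $\|\fpred(\marginals)\|_2^2 \le \|\fpred(\marginals)\|_1 = k$, we get $\|\fpred\|_{\mathbb{H}} \le \sqrt{k}$. Convexity: $\Delta_m^k$ is a convex subset of $\mathbb{R}^{\numlabels}$, so any convex combination of two $\Delta_m^k$-valued functions remains $\Delta_m^k$-valued pointwise. Closedness: if $\fpred_n \to \fpred$ in $\mathbb{H}$, a subsequence converges $\mu$-almost everywhere, and since $\Delta_m^k$ is a closed subset of $\mathbb{R}^{\numlabels}$, the pointwise limit lies in $\Delta_m^k$ almost everywhere, so $\fpred \in \fpredspace$.

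Second, I would note that the map $\mathcal{L} \colon \mathbb{H} \to \mathbb{R}^{\numlabels \times 2 \times 2}$ sending $\fpred$ to the tensor with components given by the formulas for $\confusionmatrix^j(\fpred[j])$ is affine in $\fpred$: each entry is an integral of $\fpred[j]$ (or $1-\fpred[j]$) against a bounded weight such as $\marginals[j]$ or $1-\marginals[j]$. Continuity is already established by \autoref{lem:continuity_of_mapping_C}, which gives $\|\confusiontensor(\fpred) - \confusiontensor(\fpred')\|_F \le \sqrt{2}\,\|\fpred-\fpred'\|_{\mathbb{H}}$. Since $\mathcal{L}(\fpredspace) = \feasibleconfset$ by definition, applying \autoref{lem:Strom} with $\mathcal{S} = \fpredspace$ and $\mathcal{V} = \mathbb{R}^{\numlabels \times 2 \times 2}$ yields compactness of $\feasibleconfset$.

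The main obstacle is the closedness of $\fpredspace$ in $\mathbb{H}$: this is the one place where we cannot just use a pointwise argument, and we must pass from $L^2$-convergence to almost-everywhere convergence along a subsequence to preserve the simplex constraint $\fpred(\marginals) \in \Delta_m^k$ in the limit. Everything else is a direct consequence of \autoref{lem:continuity_of_mapping_C} and \autoref{lem:Strom}.
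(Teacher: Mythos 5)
Your proposal is correct and follows essentially the same route as the paper: both realize $\feasibleconfset$ as $\mathcal{L}(\mathcal{S})$ for the affine confusion-tensor map on the closed, bounded, convex set of $\Delta_m^k$-valued functions in $L^2(\mu;\mathbb{R}^{\numlabels})$ and then invoke \autoref{lem:Strom}, with continuity supplied by \autoref{lem:continuity_of_mapping_C}. The only (immaterial) difference is in the closedness step, where you pass to a $\mu$-a.e.\ convergent subsequence, whereas the paper argues by contradiction using the distance to $\Delta_m^k$ and the sets $A_{\epsilon}$; both arguments are valid.
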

\begin{proof}
To show that $\feasibleconfset$ is compact, we will invoke Lemma \ref{lem:Strom}. To place ourselves in its setting, let the Hilbert space be $\mathbb{H} = L^2([0,1]^m, \mathbb{R}^m, \mu)$ where $\mu$ is the probability measure on $[0,1]^m$ associated with random vector $\marginals(\rinstance)$. The inner product for $\boldsymbol{f},\boldsymbol{g}: [0,1]^m \to \mathbb{R}^m$ in $\mathbb{H}$ is
\begin{equation}
    \langle \boldsymbol{f}, \boldsymbol{g} \rangle = \int_{[0,1]^m} \langle \boldsymbol{f}(\marginals), \boldsymbol{g}(\marginals) \rangle\, d\mu(\marginals)
\end{equation}
where the inner product inside the integral is the normal dot product in $\mathbb{R}^m$.

We have the affine map defined via \eqref{eq:confusion_tensor_j}
\begin{equation}
    \mathcal{L}: \mathbb{H} \to \mathbb{R}^{m\times 2 \times 2}
    \quad\mbox{where}\quad \mathcal{L}(\boldsymbol{f}) = \confusiontensor(\fpred \circ \marginals)\,,
\end{equation}
and let the subset $\mathcal{S} \subset \mathbb{H}$ be
\begin{equation}
    \mathcal{S} = \{\boldsymbol{f} \in \mathbb{H} : \boldsymbol{f}([0,1]^m) \subset \Delta_m^k \text{ almost everywhere} \}.
\end{equation}
Since $\feasibleconfset = \mathcal{L}(\mathcal{S})$, it suffices to verify the
assumptions in Lemma \ref{lem:Strom}.  

The map $\mathcal{L}$ is continuous by Lemma \ref{lem:continuity_of_mapping_C}.  The set $\mathcal{S}$ is convex since the set of $\Delta_m^k \subset \mathbb{R}^m$ is convex.  Likewise for bounded using also that the we are working with a probability measure: If $\boldsymbol{f} \in \mathcal{S}$, then $\|\boldsymbol{f}(\marginals)\|^2 \leq m$ for all $\marginals \in [0,1]^m$ and hence
\begin{equation}
    \|\boldsymbol{f}\|^2 = \int_{[0,1]^m} \|\boldsymbol{f}(\marginals)\|^2 d\mu(\marginals) \leq \int_{[0,1]^m} m\, d\mu(\marginals) = m.
\end{equation}
Similarly the closedness of $\Delta_m^k \subset \mathbb{R}^m$ translates into the closedness of $\mathcal{S}$
as we now prove.  Suppose there is a sequence of $\boldsymbol{f}_n \in \mathcal{S}$ with $\boldsymbol{f}_n \to \boldsymbol{f}$ and $\boldsymbol{f} \notin \mathcal{S}$.  This means the set of points 
\begin{equation}
    A = \{\marginals \in [0,1]^m : \boldsymbol{f}(\marginals) \not\in \Delta_m^k\}
\end{equation}
that $\boldsymbol{f}$ maps out of $\Delta_m^k$ has positive measure $\mu(A) > 0$.  In $\mathbb{R}^m$ there
is a well defined distance function $d(\boldsymbol{z}, \Delta_m^k) = \inf_{\boldsymbol{v} \in \Delta_m^k} \|\boldsymbol{z}- \boldsymbol{v}\|$
and for $\epsilon > 0$ define the set
\begin{equation}
    A_{\epsilon} = \{\marginals \in [0,1]^m : d(\boldsymbol{f}(\marginals),\Delta_m^k) > \epsilon\}
\end{equation}
Note that $A = \bigcup_{j = 1}^{\infty} A_{1/j}$ since $\Delta_m^k$ is closed
and hence there is some $\epsilon > 0$ such that $\mu(A_{\epsilon}) > 0$.  Therefore for all $n$
\begin{align}
    \|\boldsymbol{f} - \boldsymbol{f}_n\|^2 &= \int_{[0,1]^m} \|\boldsymbol{f}(\marginals) - \boldsymbol{f}_n(\marginals)\|^2 d\mu (\marginals) \nonumber \\
    &\geq \int_{A_\epsilon} \|\boldsymbol{f}(\marginals) - \boldsymbol{f}_n(\marginals)\|^2 d\mu(\marginals) \geq \int_{A_\epsilon} \epsilon^2\, d\mu(\marginals) = \epsilon^2 \mu(A_{\epsilon}) > 0
\end{align}
where the second inequality uses that $f_n(\marginals) \in \Delta_m^k$ almost everywhere.  That $\|\boldsymbol{f} - \boldsymbol{f}_n\|^2$ is
uniformly bounded away from $0$ contradicts that $\boldsymbol{f}_n \to \boldsymbol{f}$ in $\mathbb{H}$.
\end{proof}

\optimalclassifiergeneral*
\begin{proof}

Let $\optimal\confusiontensor = \argmax_{\confusiontensor \in \feasibleconfset} \taskloss(\confusiontensor)$, which exists
by the compactness of $\feasibleconfset$ (Lemma \ref{lem:compactness}) and the continuity of 
$\taskloss$. By the first order optimality and convexity of $\feasibleconfset$,
for all $\confusiontensor \in \feasibleconfset$
\begin{equation}
\nabla \taskloss(\optimal\confusiontensor) \cdot \optimal\confusiontensor \ge 
\nabla \taskloss(\optimal\confusiontensor) \cdot \confusiontensor.
\end{equation}
which implies:
\begin{equation}
\optimal\confusiontensor = \argmax_{\confusiontensor \in \feasibleconfset} \gaintensor \cdot \confusiontensor
\label{eq:optimal_is_gain_linear}
\end{equation}
for $\gaintensor = \nabla \taskloss(\optimal\confusiontensor)$.

We now show that $\optimal\confusiontensor$ is the unique optimizer of \eqref{eq:optimal_is_gain_linear}.
Using Assumption \ref{assumption:monotonicity} applied to $\optimal\confusiontensor$, for all $j \in [\numlabels]$:
\begin{align}
    \frac{\partial}{\partial\epsilon} & \taskloss(\boldsymbol{C}^{\star 1}, \ldots, \boldsymbol{C}^{\star j} + \epsilon \begin{pmatrix}
        1 & -1\\
        -1 & 1
    \end{pmatrix}, \ldots, \boldsymbol{C}^{\star \numlabels}) \bigg|_{\epsilon=0}
    = \nabla_{\confusiontensor[j]} \taskloss(\optimal\confusiontensor) \cdot \begin{pmatrix}
    1 & -1\\
        -1 & 1
    \end{pmatrix} \nonumber \\
    &= \gaintensor[j][00] + \gaintensor[j][11] - \gaintensor[j][01] - \gaintensor[j][10] 
    = \gainslope[j] > 0\,,
\end{align}
with coefficients $a_j, j \in [m]$ defined in Theorem \ref{thm:linear_metric}. 

Now, since we just showed that $\gainslope[j] \neq 0$ for all $j$, and $\marginals$ has a density,
coordinates of $\gainslope \odot \marginals + \gainintercept$ are all distinct with probability one. 
This means that, with probability one, $\topk(\gainslope \odot \marginals + \gainintercept)$ is
a singleton, and thus the optimizers of
the linear utility $\gaintensor \cdot \confusiontensor(\hypothesis)$ can only differ on a zero measure set, so they all have the same
confusion tensor. %
Thus, $\optimal\confusiontensor$ uniquely maximizes linear utility $\gaintensor \cdot \confusiontensor$
over $\confusiontensor \in \feasibleconfset$. 

This means, however, that any classifier $\optimal\hypothesis$ maximizing 
$\gaintensor \cdot \confusiontensor (\hypothesis)$ over $\hypothesis \in \hypothesisspace$ has
$\confusiontensor(\optimal\hypothesis) = \optimal\confusiontensor$, and thus maximizes $\taskloss$.
\end{proof}

\section{Consistency of Frank-Wolfe}
\label{app:consistency-fw}

In this section, we provide the formal proof of consistency for the Frank-Wolfe
algorithm. We prove convergence for a slightly modified version of
\autoref{alg:frank-wolfe}, in which we replace the line-search in line 13 with
a fixed schedule, setting
\begin{equation}
    \alpha^i \gets \frac{2}{t+1} \,.
\end{equation}
For the experiments, we used the line-search instead, as we found it to give
slightly better results.

\subsection{VC-dimension lemma}
\vctopk*

\begin{proof}
    \DeclareSet{\largerset}{I}
    \DeclareSet{\indexset}{I}
    \DeclareVec{\hiddennodes}{z}
    \DeclareFun{\outputnode}{o}
    \DeclareFun{\binhypothesis}{h}
    \DeclareGeneric{\vcdim}[roman]{VC}
    For any given $\gainslope, \gainintercept$, the hypothesis predicts one, $\binhypothesis^j(\sinstance) = 1$, iff exists a set of $\numlabels-k$
    indices $\indexset \subset \intrange{\numlabels}$ with
    $|\indexset| = \numlabels-k$, $j \notin \indexset$, such that for all $i \in
    \indexset$ the score $\gainslope[i] \marginals[i] + \gainintercept[i] \leq
    \gainslope[j] \marginals[j] + \gainintercept[j]$ is not greater than the
    score of label $j$.

    This computation can be realized as a two-layer network. In the first layer
    $\hiddennodes$, we calculate an indicator to determine which labels' scores
    are below the threshold, that is $\hiddennodes[i] = \indicator{(\gainslope[i] -
    \gainslope[j]) \marginals[i] + (\gainintercept[i] - \gainintercept[j])}$.
    Then, for the output, we threshold the sum of all the intermediate units
    to determine if $j$ is predicted:
    \begin{equation}
        \binhypothesis(\sinstance) = \outputnode(\hiddennodes) \coloneqq \indicator{\smashsum_{i\neq j} \hiddennodes[i] \geq \numlabels - k} \,.
    \end{equation}

    The resulting network has $2(\numlabels-1)$ edges and $\numlabels-1$ computation
    nodes. If we allow the output node to be more general---a generic linear
    threshhold function---, the VC-dimension of this extended function class
    $\primed\hypothesisspace$ can only grow. For this extended class, we can
    apply \cite[Corollary 3]{baum1988size}, which gives an upper bound for the
    VC-dimension of
    \begin{equation}
        \vcdim(\hypothesisspace^j) \leq \vcdim(\primed\hypothesisspace) 
        \leq 2 (\numlabels - 1 + 2 (\numlabels-1)) \log(e (\numlabels-1))
        \leq 6 \numlabels \log(e \numlabels)  \,.
    \end{equation}
\end{proof}

\subsection{Additional lemmas}
Before going into the main proof of \autoref{thm:frank-wolf-convergence}, we 
provide two more helper lemmas:

\begin{restatable}[Regret for Linear Macro Measures]{lemma}{linearregret}
\label{lemma:linear-regret}
Let $\gaintensor$ be a \emph{linear} macro-measure, that is, 
\begin{multline}
    \gaintensor(\hypothesis; \marginals) = \numlabels^{-1} \smashsum_{j=1}^{\numlabels} \mathbb{E}_{\rinstance}\Bigl[\gaintensor[j][00] (1-\marginals[j](\rinstance)) (1-\hypothesis[j](\rinstance)) \\
+ \gaintensor[j][01] (1-\marginals[j](\rinstance)) \hypothesis[j](\rinstance) + \gaintensor[j][10] \marginals[j](\rinstance) (1-\hypothesis[j](\rinstance)) 
+ \gaintensor[j][11] \marginals[j](\rinstance) \hypothesis[j](\rinstance) 
    \Bigr] \,.
\end{multline}
Let $\optimal\hypothesis(\sinstance) \coloneqq \argmax_{\hypothesis} \gaintensor(\hypothesis; \marginals)$, and
$\empirical\hypothesis(\sinstance) \coloneqq \argmax_{\hypothesis} \gaintensor(\hypothesis; \empirical\marginals)$. Then
\begin{equation}
    \gaintensor(\optimal\hypothesis; \marginals) - \gaintensor(\empirical\hypothesis; \marginals) \leq \numlabels^{-1} \max_j \|\gaintensor[j]\|_{1,1} \expectation*_{\rinstance}{ \left\| \marginals(\rinstance) - \empirical \marginals(\rinstance) \right\|_1
        } \,.
\end{equation}
\end{restatable}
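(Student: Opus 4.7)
The plan is to first rewrite the linear macro-measure in the ``top-$k$ scoring'' form from the proof of \autoref{thm:linear_metric}, which isolates the $\hypothesis$-dependent part, then apply the standard estimation/approximation decomposition, and finally use the optimality of $\empirical\hypothesis$ on $\empirical\marginals$ to combine two error terms into a single one, saving the factor of $2$ one would otherwise pay.

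Following the algebra in \autoref{app:linear_metric}, for any fixed $\marginals$ we can write
\begin{equation}
  \gaintensor(\hypothesis;\marginals) = \numlabels^{-1}\,\expectation*_{\rinstance}{\smashsum_{j=1}^{\numlabels} (\gainslope[j]\marginals[j](\rinstance) + \gainintercept[j])\,\hypothesis[j](\rinstance)} + R(\marginals)\,,
\end{equation}
where $\gainslope[j] = \gaintensor[j][00] + \gaintensor[j][11] - \gaintensor[j][01] - \gaintensor[j][10]$, $\gainintercept[j] = \gaintensor[j][01] - \gaintensor[j][00]$, and the residual $R(\marginals)$ is independent of $\hypothesis$. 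Denote the first term by $\phi(\hypothesis;\marginals)$. Since $R$ does not depend on $\hypothesis$, the argmax of $\gaintensor(\cdot;\marginals)$ over $\hypothesis$ coincides with the argmax of $\phi(\cdot;\marginals)$; in particular $\optimal\hypothesis$ maximizes $\phi(\cdot;\marginals)$ and $\empirical\hypothesis$ maximizes $\phi(\cdot;\empirical\marginals)$.

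Next, I would add and subtract $\phi(\optimal\hypothesis;\empirical\marginals)$ and $\phi(\empirical\hypothesis;\empirical\marginals)$:
\begin{align}
  \gaintensor(\optimal\hypothesis;\marginals) - \gaintensor(\empirical\hypothesis;\marginals)
  &= \phi(\optimal\hypothesis;\marginals) - \phi(\empirical\hypothesis;\marginals) \nonumber\\
  &= \bigl[\phi(\optimal\hypothesis;\marginals) - \phi(\optimal\hypothesis;\empirical\marginals)\bigr]
   + \bigl[\phi(\optimal\hypothesis;\empirical\marginals) - \phi(\empirical\hypothesis;\empirical\marginals)\bigr] \nonumber\\
  &\qquad+ \bigl[\phi(\empirical\hypothesis;\empirical\marginals) - \phi(\empirical\hypothesis;\marginals)\bigr]\,.
\end{align}
The middle bracket is non-positive by optimality of $\empirical\hypothesis$ on $\empirical\marginals$. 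The remaining two brackets are linear in $(\marginals - \empirical\marginals)$, and when combined they share the factor $(\optimal\hypothesis[j] - \empirical\hypothesis[j])$:
\begin{equation}
  \gaintensor(\optimal\hypothesis;\marginals) - \gaintensor(\empirical\hypothesis;\marginals)
  \;\leq\; \numlabels^{-1}\,\expectation*_{\rinstance}{\smashsum_{j=1}^{\numlabels} \gainslope[j]\,(\marginals[j](\rinstance) - \empirical\marginals[j](\rinstance))\,(\optimal\hypothesis[j](\rinstance) - \empirical\hypothesis[j](\rinstance))}\,.
\end{equation}

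Finally, I bound the factors pointwise: $|\gainslope[j]| \leq \|\gaintensor[j]\|_{1,1}$ from the definition of $\gainslope[j]$, and $|\optimal\hypothesis[j] - \empirical\hypothesis[j]| \leq 1$ because both lie in $[0,1]$. Pulling the maximum out of the sum and recognizing $\sum_j |\marginals[j] - \empirical\marginals[j]| = \|\marginals - \empirical\marginals\|_1$ yields the claimed bound. The only subtle step is the combining in the third paragraph --- that is what buys the $\max_j \|\gaintensor[j]\|_{1,1}$ prefactor without an extra factor of $2$; a naive triangle-inequality split would use $|\optimal\hypothesis[j]|,|\empirical\hypothesis[j]| \leq 1$ separately and lose this.
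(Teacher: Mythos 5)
Your proposal is correct and follows essentially the same route as the paper's proof: the same three-term decomposition, dropping the middle term by optimality of $\empirical\hypothesis$ on $\empirical\marginals$, combining the remaining two terms to share the factor $(\optimal\hypothesis[j]-\empirical\hypothesis[j])$, and bounding $|\gainslope[j]|\leq\|\gaintensor[j]\|_{1,1}$ and $|\optimal\hypothesis[j]-\empirical\hypothesis[j]|\leq 1$. The only cosmetic difference is that you first strip off the $\hypothesis$-independent remainder $R(\marginals)$ and work with the scoring form $\phi$, whereas the paper exploits affineness of $\gaintensor(\hypothesis;\cdot)$ directly; the computations are identical.
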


\begin{proof}
    As $\gaintensor(\hypothesis; \marginals)$ is an affine function in its second
    argument, we can simplify differences to 
    \begin{align}
    \MoveEqLeft
        \gaintensor(\hypothesis; \marginals) - \gaintensor(\hypothesis; \empirical\marginals)
        = \begin{multlined}[t]
        \numlabels^{-1} \smashsum_{j=1}^{\numlabels} \mathbb{E}_{\rinstance}\Bigl[
    -\gaintensor[j][00] (\marginals[j] - \empirical\marginals[j])(1-\hypothesis[j]) 
- \gaintensor[j][01] (\marginals[j] - \empirical\marginals[j]) \hypothesis[j]  \nonumber \\ 
+ \gaintensor[j][10] (\marginals[j] - \empirical\marginals[j]) (1-\hypothesis[j]) 
+ \gaintensor[j][11] (\marginals[j]-\empirical\marginals[j]) \hypothesis[j] 
    \Bigr]
    \end{multlined}
    \\
    &= \numlabels^{-1} \smashsum_{j=1}^{\numlabels} \mathbb{E}_{\rinstance}\Bigl[
    (\marginals[j] - \empirical\marginals[j]) \left((\gaintensor[j][11] - \gaintensor[j][01]) \hypothesis[j] + (\gaintensor[j][10] -\gaintensor[j][00]) (1-\hypothesis[j]) \right)  \Bigr] \,.
    \end{align}

    We can use this property to bound the regret of $\empirical\hypothesis$ as
    \begin{align}
    \MoveEqLeft
    \gaintensor(\optimal\hypothesis; \marginals) - \gaintensor(\empirical\hypothesis; \marginals)
    =
    \gaintensor(\optimal\hypothesis; \marginals) - \gaintensor(\optimal\hypothesis; \empirical\marginals) + \gaintensor(\optimal\hypothesis; \empirical\marginals) - \gaintensor(\empirical\hypothesis; \marginals) \nonumber \\
    &\leq 
    \gaintensor(\optimal\hypothesis; \marginals) - \gaintensor(\optimal\hypothesis; \empirical\marginals) + \gaintensor(\empirical\hypothesis; \empirical\marginals) - \gaintensor(\empirical\hypothesis; \marginals)
    \nonumber \\
    &=  \numlabels^{-1} \sum_{j=1}^{\numlabels} \expectation_{\rinstance}{
    (\marginals[j] - \empirical\marginals[j]) \left((\gaintensor[j][11] - \gaintensor[j][01]) (\optimal\hypothesis[j] - \empirical\hypothesis[j]) + (\gaintensor[j][10] -\gaintensor[j][00]) (\empirical\hypothesis[j] -\optimal\hypothesis[j]) \right)} \nonumber \\
    &=  \numlabels^{-1} \sum_{j=1}^{\numlabels} (\gaintensor[j][11] - \gaintensor[j][01] - \gaintensor[j][10] +\gaintensor[j][00]) \expectation_{\rinstance}{
    (\marginals[j] - \empirical\marginals[j]) (\optimal\hypothesis[j] - \empirical\hypothesis[j])} 
    \end{align}
    As $\hypothesis[j] \in \closedinterval{0}{1}$, we can bound $(\marginals[j] - \empirical\marginals[j]) (\optimal\hypothesis[j] - \empirical\hypothesis[j]) \leq |(\marginals[j] - \empirical\marginals[j])|$, resulting in
    \begin{align}
        \gaintensor(\optimal\hypothesis; \marginals) - \gaintensor(\empirical\hypothesis; \marginals) \leq 
         \numlabels^{-1} \sum_{j=1}^{\numlabels} (\gaintensor[j][11] - \gaintensor[j][01] - \gaintensor[j][10] +\gaintensor[j][00]) \expectation_{\rinstance}{|\marginals[j] - \empirical\marginals[j]|} 
    \end{align}
    Using the notation of \autoref{thm:linear_metric}, we set $\gainslope[j] = \gaintensor[j][11] - \gaintensor[j][01] - \gaintensor[j][10] +\gaintensor[j][00]$, so that we can further bound
    \begin{align}
        \gaintensor(\optimal\hypothesis; \marginals) - \gaintensor(\empirical\hypothesis; \marginals) \leq 
         \numlabels^{-1} \max_j |\gainslope[j]| \sum_{j=1}^{\numlabels} \expectation_{\rinstance}{|\marginals[j] - \empirical\marginals[j]|} 
         = \numlabels^{-1} \max_j |\gainslope[j]| \expectation_{\rinstance}{\|\marginals - \empirical\marginals\|_1} 
    \end{align}
    Using 
    \begin{equation}
        \max_j |\gainslope[j]| \leq \max_j \|\gaintensor[j]\|_{1,1}
    \end{equation}
    yields the claim.
\end{proof}

\begin{restatable}[Uniform Convergence of Multi-label Confusion Tensors]{lemma}{mlcmuniform}
\label{lemma:confmat-uniform-convergence}
For $\defmap{\marginals}{\instancespace}{\closedinterval{0}{1}^{\numlabels}}$,
let
\begin{equation}
    \hypothesisspace_{\marginals} \coloneqq \bigcup_{\gainslope, \gainintercept \in \reals^{\numlabels}}  \set{\defmap{\hypothesis}{\instancespace}{\set{0,1}^{\numlabels}} \colon \hypothesis(\sinstance) = \topk{\gainslope \odot \marginals + \gainintercept}} \,,
\end{equation}
and let $\genericsample \in (\instancespace \times \set{0,1}^{\numlabels})^{\numinstances}$ be an i.i.d. sample. Then for any $\delta \in \leftopeninterval{0}{1}$,
with probability at least $1-\delta$, we have
\begin{equation}
    \sup_{\hypothesis \in \hypothesisspace_{\marginals}} \| \confusiontensor(\hypothesis, \mathds{P}) - \empirical\confusiontensor(\hypothesis, \genericsample) \|_{\infty} \leq \tilde{\mathcal{O}} \left( 
    \sqrt{\frac{\numlabels \cdot \log \numlabels \cdot \log \numinstances -\log \delta}{\numinstances}}
        \right)  \,.
\end{equation}
\end{restatable}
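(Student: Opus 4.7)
The plan is to reduce the tensor-valued uniform deviation to $4\numlabels$ scalar uniform deviations, control each via a standard VC tail bound leveraging \autoref{lemma:vc-topk}, and close with a union bound.

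First I would observe that the infinity norm factors cleanly:
\begin{equation}
\sup_{\hypothesis \in \hypothesisspace_{\marginals}} \| \confusiontensor(\hypothesis, \datadistribution) - \sconfusiontensor(\hypothesis, \genericsample) \|_{\infty}
= \max_{j \in \intrange{\numlabels},\, u,v \in \set{0,1}} \; \sup_{\hypothesis \in \hypothesisspace_{\marginals}} \bigl| \confusiontensor[j][uv](\hypothesis) - \sconfusiontensor[j][uv](\hypothesis, \genericsample) \bigr|,
\end{equation}
so it suffices to bound each inner supremum with failure probability $\delta/(4\numlabels)$ and conclude by a union bound. For fixed $(j,u,v)$, I would introduce the class
\begin{equation}
\mathcal{F}_{j,u,v} \coloneqq \set{(\sinstance,\slabels) \mapsto \indicator{\slabels[j] = u,\, \hypothesis[j](\sinstance) = v} : \hypothesis \in \hypothesisspace_{\marginals}},
\end{equation}
for which $\confusiontensor[j][uv](\hypothesis)$ and $\sconfusiontensor[j][uv](\hypothesis, \genericsample)$ are precisely the population and empirical means of the element indexed by $\hypothesis$.

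The next step is to show that the VC dimension of $\mathcal{F}_{j,u,v}$ is at most that of $\hypothesisspace^j_{\marginals}$, and hence at most $6 \numlabels \log(e \numlabels)$ by \autoref{lemma:vc-topk}. The point is that the condition $\slabels[j] = u$ depends only on the label part of the datum and is independent of $\hypothesis$, so any set shattered by $\mathcal{F}_{j,u,v}$ must have all its label components satisfy $\slabels[j] = u$; on such a subset the shattering reduces to shattering by $\set{\sinstance \mapsto \indicator{\hypothesis[j](\sinstance) = v}}$, which (after a trivial relabelling when $v=0$) is exactly $\hypothesisspace^j_{\marginals}$. A standard VC tail bound (Vapnik--Chervonenkis combined with Sauer--Shelah) then yields, for each $(j,u,v)$ and any $\delta' \in (0,1]$, with probability at least $1-\delta'$,
\begin{equation}
\sup_{\hypothesis \in \hypothesisspace_{\marginals}} \bigl| \confusiontensor[j][uv](\hypothesis) - \sconfusiontensor[j][uv](\hypothesis, \genericsample) \bigr|
\le O\!\left(\sqrt{\frac{\numlabels \log \numlabels \cdot \log \numinstances + \log(1/\delta')}{\numinstances}}\right).
\end{equation}
Setting $\delta' = \delta/(4\numlabels)$ and taking a union bound over the $4\numlabels$ entries yields the claimed rate, since the $\log(4\numlabels)$ contribution is absorbed by the $\tilde{\mathcal{O}}$ notation and only $-\log\delta$ remains explicit.

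The main obstacle I anticipate is the VC-dimension reduction for $\mathcal{F}_{j,u,v}$: one must be careful that conjoining with the data-dependent but hypothesis-independent event $\slabels[j] = u$ does not inflate the combinatorial complexity beyond the per-label top-$k$ class. Once that reduction is cleanly justified by restricting any candidate shattered sample to the subset with $\slabels[j]=u$ (and replacing $\hypothesis[j](\sinstance)=v$ by its complement when $v=0$, preserving VC dimension), the remaining steps are a routine application of the classical VC concentration bound together with a union bound.
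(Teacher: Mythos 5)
Your proof is correct, but it takes a genuinely different route from the paper's. The paper does not bound the four entries $\confusiontensor[j][uv]$ directly; instead, for each label $j$ it bounds three derived scalars --- the accuracy $\confusiontensor[j][00]+\confusiontensor[j][11]$ (via the textbook VC bound for 0--1 error over the class $\hypothesisspace^j_{\marginals}$ from the VC lemma), the predicted-positive rate (by observing that the accuracy bound holds for every target distribution, in particular for $y_j \equiv 1$, in which case accuracy becomes the predicted-positive rate), and the condition-positive rate (which does not depend on $\hypothesis$ at all, so Hoeffding suffices) --- and then reconstructs each entry as a linear combination of these three, finishing with union bounds over the three quantities and the $\numlabels$ labels. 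You instead bound each of the $4\numlabels$ entries directly by arguing that the class of joint indicators $(\sinstance,\slabels)\mapsto\mathbf{1}[\slabels[j]=u,\,\hypothesis[j](\sinstance)=v]$ has VC dimension at most that of $\hypothesisspace^j_{\marginals}$; your justification of this reduction (any shattered set must consist of points with $y_j=u$, on which shattering reduces to shattering by the per-label top-$k$ class, with complementation for $v=0$ preserving VC dimension) is sound. The trade-off: the paper's decomposition only ever invokes the most standard form of the VC bound (classification error) plus Hoeffding, at the cost of a reconstruction step; yours is more direct and symmetric but requires the (correctly handled) VC-dimension argument for the conjoined indicator class. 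Both yield the same $\tilde{\mathcal{O}}\bigl(\sqrt{(\numlabels\log\numlabels\log\numinstances-\log\delta)/\numinstances}\bigr)$ rate.
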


\begin{proof}
\DeclareFun{\binhypothesis}{h}
    Instead of showing uniform convergence for the entries of the confusion tensor
    directly, we show it for accuracy (0-1-error) $\accuracy^j = \confusiontensor[j][11] + \confusiontensor[j][00]$, 
    condition positive rate $\predpos^j = \confusiontensor[j][01] + \confusiontensor[j][11]$ 
    and predicted positive rate $\condpos^j = \confusiontensor[j][10] + \confusiontensor[j][11]$, 
    first for a fixed $j \in \intrange{\numlabels}$.

    To handle accuracy and predicted positives, consider
    \begin{align}
        \MoveEqLeft
        \sup_{\hypothesis \in \hypothesisspace_{\marginals}} \left|\accuracy^j(\hypothesis, \datadistribution) - \empirical\accuracy^j(\hypothesis, \genericsample)\right|
        = \sup_{\hypothesis \in \hypothesisspace_{\marginals}}  \left| \numinstances^{-1} \smashsum_{i=1}^{\numinstances}
        \indicator{\slabelmatrix[ij] = \hypothesis[j](\sinstance_i) } - \probability{\rlabels[j] = \hypothesis[j](\rinstance)}
        \right| \nonumber \\
        &= \sup_{\binhypothesis \in \hypothesisspace^j_{\marginals}}  \left| \numinstances^{-1} \smashsum_{i=1}^{\numinstances}  
        \indicator{\slabelmatrix[ij] = \binhypothesis(\sinstance_i)} - \probability{\rlabels[j] = \binhypothesis(\rinstance)}
        \right|
    \end{align}
    From Lemma~\ref{lemma:vc-topk}, we know the VC-dimension of $\hypothesisspace^j_{\marginals}$ is some finite number $d$, thus, we can employ a standard
    bound for the 0-1 error to get, with probability $1-\delta$, that 
    \begin{equation}
        \sup_{\hypothesis \in \hypothesisspace_{\marginals}} |\accuracy^j(\hypothesis, \datadistribution) - \empirical\accuracy^j(\hypothesis, \genericsample)| 
        \leq \sqrt{\frac{2 d \log (2 e \numinstances / d) + 2\log(4/\delta)}{\numinstances}} \,.
    \end{equation}

    As this bound holds for \emph{all} distributions of targets $\rlabels$, it
    holds in particular also for $\rlabels \equiv 1$, in which case accuracy
    turns into predicted positive rate.

    Finally, we can bound the error on the condition positive rate simply using
    Hoeffding's inequality, as it does not depend on the hypothesis
    $\binhypothesis$. We get, with probability $1 - \delta$
    \begin{equation}
        \sup_{\hypothesis \in \hypothesisspace_{\marginals}} |\predpos(\hypothesis, \datadistribution) - \empirical\predpos(\hypothesis, \genericsample)| 
        \leq \sqrt{\frac{\log(\delta^{-1})}{2\numinstances}} \,.
    \end{equation}

    Now we can reconstruct the actual entries of the confusion matrix. For example,
    the true positive rate as $\truepos = \frac{1 - \accuracy - \predpos - \condpos}{2}$. Thus, we can 
    union bound, with probability $1-\delta$
    \begin{equation}
        \sup_{\hypothesis \in \hypothesisspace_{\marginals}} |\truepos^j(\hypothesis, \datadistribution) - \empirical\truepos^j(\hypothesis, \genericsample)|
        \leq \sqrt{\frac{2 d \log (2 e \numinstances / d) + 2\log(8/\delta)}{\numinstances}}
        +  \sqrt{\frac{\log(3/\delta)}{2\numinstances}} \,.
    \end{equation}
    Similar bounds can be constructed for the other entries. Taking a union bound 
    over all $\numlabels$ labels:
    \begin{multline}
        \sup_{\hypothesis \in \hypothesisspace_{\marginals}}  \| \confusiontensor(\hypothesis, \datadistribution) - \empirical\confusiontensor(\hypothesis, \genericsample) \|_{\infty} \leq
        \sqrt{\frac{2 d \log (2 e \numinstances / d) + 2\log(8\numlabels/\delta)}{\numinstances}}
        +  \sqrt{\frac{\log(3\numlabels/\delta)}{2\numinstances}} \\
        = \sqrt{\frac{12 \numlabels\log(e \numlabels) \log (e \numinstances / (3\numlabels (\log (e \numlabels))) + 2\log(8\numlabels/\delta)}{\numinstances}}
        +  \sqrt{\frac{\log(3\numlabels/\delta)}{2\numinstances}} \,.
    \end{multline}

    In order to combine the two square-root terms, we can apply the arithmetic-quadratic 
    mean inequality, to arrive at the claimed bound
    \begin{equation*}
        \sup_{\hypothesis \in \hypothesisspace_{\marginals}} \| \confusiontensor(\hypothesis, \datadistribution) - \empirical\confusiontensor(\hypothesis, \genericsample) \|_{\infty} \leq
        \sqrt{\frac{48 \numlabels \log(e \numlabels) \log (e \numinstances / (3  \numlabels (\log (e \numlabels))) + 10\log(\sqrt[5]{3\cdot 8^4}\numlabels/\delta)}{2\numinstances}} \,.
    \end{equation*}

    Finally, using $3m (\log (em)) \geq 1$, we simplify
    \begin{equation}
       \log (e \numinstances / (3m (\log (em))) 
        \leq \log (e \numinstances) \,,
    \end{equation}
    which results in
    \begin{equation}
        \sup_{\hypothesis \in \hypothesisspace_{\marginals}} \| \confusiontensor(\hypothesis, \datadistribution) - \empirical\confusiontensor(\hypothesis, \genericsample) \|_{\infty} \leq
        \sqrt{\frac{\bigO{\numlabels \log \numlabels \log \numinstances} + \bigO{\log(\numlabels/\delta)}}{\numinstances}} \,.
    \end{equation}
\end{proof}

\subsection{Bound for Linear Optimization Step}
The preceding results allow to prove a bound on the approximation error for
each linear optimization step that is performed as part of the Frank-Wolfe algorithm:

\begin{lemma}
    \label{lemma:helper-for-fw}
    Let $\defmap{\taskloss}{\confset}{\nnreals}$ be concave over
    $\feasibleconfset$, $\ell$-Lipschitz, and $\beta$-smooth w.r.t. the $\ell_1$-norm. 
    Let $\hypothesis \in \hypothesisspace$ be some classifier, and denote 
    $\gaintensor \coloneqq \nabla \taskloss(\empirical\confusiontensor(\hypothesis, \genericsample))$.
    Let $\empirical \linhypothesis$ be the deterministic classifier that empirically optimizes the linear 
    objective induced by $\taskloss$ according to \autoref{thm:linear_metric}.
    For two classifiers $\primed\hypothesis$ and $\primed\linhypothesis$, define
    \begin{equation}
        \mathfrak{L}_{\datadistribution}(\primed\hypothesis, \primed\linhypothesis) \coloneqq 
        \confusiontensor(\primed\linhypothesis, \datadistribution) \cdot \nabla \taskloss(\confusiontensor(\primed\hypothesis, \datadistribution)),
    \end{equation}

    Then for any $\delta \in \leftopeninterval{0}{1}$,
    with probability at least $1-\delta$ (over draws of $\genericsample$ from $\datadistribution^{\numinstances}$), we have
    \begin{equation}
         \mathfrak{L}_{\datadistribution}(\hypothesis, \empirical\linhypothesis) \geq 
         \max_{\primed \linhypothesis} \mathfrak{L}_{\datadistribution}(\hypothesis, \primed\linhypothesis) - \epsilon_{\genericsample} \,
     \end{equation} 
     where
     \begin{equation}
        \epsilon_{\genericsample} = 8 \ell \numlabels^{-1} \expectation_{\rinstance}{\| \marginals(\rinstance) - \empirical\marginals(\rinstance) \|_{1}} + 
        8 \numlabels^2 \beta \sup_{\primed\hypothesis \in \hypothesisspace}  \tilde{\mathcal{O}} \left( 
    \sqrt{\frac{\numlabels \cdot \log \numlabels \cdot \log \numinstances -\log \delta}{\numinstances}} \right) .
     \end{equation}
\end{lemma}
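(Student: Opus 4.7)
The plan is to decompose the regret of $\empirical\linhypothesis$ for the linear objective $\mathfrak{L}_{\datadistribution}(\hypothesis, \cdot)$ by inserting an auxiliary ``half plug-in'' functional that uses the estimated gradient but the true-distribution confusion tensor. Write $\optimal\gaintensor \coloneqq \nabla\taskloss(\confusiontensor(\hypothesis,\datadistribution))$ so that $\mathfrak{L}_{\datadistribution}(\hypothesis,\primed\linhypothesis) = \optimal\gaintensor \cdot \confusiontensor(\primed\linhypothesis,\datadistribution)$, define $\widetilde{\mathfrak{L}}(\primed\linhypothesis) \coloneqq \gaintensor \cdot \confusiontensor(\primed\linhypothesis,\datadistribution)$, and let $\optimal\linhypothesis$ and $\widetilde{\linhypothesis}$ be the maximizers of $\mathfrak{L}_{\datadistribution}(\hypothesis,\cdot)$ and $\widetilde{\mathfrak{L}}(\cdot)$ respectively. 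By \autoref{thm:linear_metric}, both $\optimal\linhypothesis$ and $\widetilde{\linhypothesis}$ are deterministic top-$k$ rules on the \emph{true} marginals $\marginals$ with coefficients built from $\optimal\gaintensor$ and $\gaintensor$, whereas $\empirical\linhypothesis$ is the analogous top-$k$ rule from $\gaintensor$ but applied to $\empirical\marginals$. The telescoping identity
\begin{equation*}
\mathfrak{L}_{\datadistribution}(\hypothesis,\optimal\linhypothesis) - \mathfrak{L}_{\datadistribution}(\hypothesis,\empirical\linhypothesis)
= \bigl[\mathfrak{L}_{\datadistribution}(\hypothesis,\optimal\linhypothesis) - \widetilde{\mathfrak{L}}(\optimal\linhypothesis)\bigr]
+ \bigl[\widetilde{\mathfrak{L}}(\optimal\linhypothesis) - \widetilde{\mathfrak{L}}(\empirical\linhypothesis)\bigr]
+ \bigl[\widetilde{\mathfrak{L}}(\empirical\linhypothesis) - \mathfrak{L}_{\datadistribution}(\hypothesis,\empirical\linhypothesis)\bigr]
\end{equation*}
then splits the work into a ``gradient perturbation'' part (the two outer brackets) and a ``marginal plug-in'' part (the middle bracket), each handled by a tool already proved.

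The outer brackets are both of the form $(\optimal\gaintensor - \gaintensor) \cdot \confusiontensor(\primed\linhypothesis, \datadistribution)$. I would apply H\"older's inequality together with the deterministic bound $\|\confusiontensor(\primed\linhypothesis,\datadistribution)\|_1 = \numlabels$ and then use $\beta$-smoothness of $\taskloss$ to replace $\|\optimal\gaintensor - \gaintensor\|_\infty$ by $\beta \|\confusiontensor(\hypothesis,\datadistribution) - \empirical\confusiontensor(\hypothesis,\genericsample)\|_1$, converting back to $\ell_\infty$ at the cost of a factor $4\numlabels$. The resulting confusion-tensor concentration is controlled uniformly over the top-$k$ class by \autoref{lemma:confmat-uniform-convergence}, and this class contains both $\hypothesis$ and the data-dependent $\empirical\linhypothesis$ in the context of the Frank--Wolfe iteration; summing the two outer bounds produces the $8\numlabels^2 \beta$ term in $\epsilon_{\genericsample}$.

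For the middle bracket, optimality of $\widetilde{\linhypothesis}$ for $\widetilde{\mathfrak{L}}$ gives $\widetilde{\mathfrak{L}}(\optimal\linhypothesis) - \widetilde{\mathfrak{L}}(\empirical\linhypothesis) \leq \widetilde{\mathfrak{L}}(\widetilde{\linhypothesis}) - \widetilde{\mathfrak{L}}(\empirical\linhypothesis)$. This is exactly the regret of the marginal-plug-in classifier for the fixed linear macro-metric with gain tensor $\gaintensor$: both $\widetilde{\linhypothesis}$ and $\empirical\linhypothesis$ are top-$k$ rules with coefficients from $\gaintensor$, differing only in whether they score with $\marginals$ or $\empirical\marginals$. \autoref{lemma:linear-regret} applies directly, and combined with the Lipschitz bound $\max_j \|\gaintensor[j]\|_{1,1} = \bigO{\ell}$ inherited from $\|\nabla\taskloss\|_\infty \leq \ell$, it yields the term proportional to $\ell \numlabels^{-1} \expectation*_{\rinstance}{\|\marginals(\rinstance) - \empirical\marginals(\rinstance)\|_1}$ in $\epsilon_{\genericsample}$.

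The three pieces are then combined by the triangle inequality, with a single union bound over the $\delta$-failure events of \autoref{lemma:confmat-uniform-convergence} needed for the outer brackets. The main subtle point is precisely this data-dependence of $\empirical\linhypothesis$: no pointwise concentration inequality suffices, and it is the VC-based uniform bound over the top-$k$ class obtained via \autoref{lemma:vc-topk} and \autoref{lemma:confmat-uniform-convergence} that makes the argument go through, justifying the $\sup_{\primed\hypothesis \in \hypothesisspace}$ appearing in the final error term.
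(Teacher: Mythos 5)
Your proposal is correct and follows essentially the same route as the paper: the auxiliary functional $\widetilde{\mathfrak{L}}$ with the empirical gradient but population confusion tensor is exactly the paper's $\mathfrak{L}_{\genericsample}$, the three-term telescoping decomposition is identical, and each piece is handled with the same tools (H\"older plus $\beta$-smoothness plus \autoref{lemma:confmat-uniform-convergence} for the outer brackets, and \autoref{lemma:linear-regret} plus the Lipschitz bound $\|\gaintensor[j]\|_{1,1}\leq 4\ell$ for the middle one). The only cosmetic difference is that the paper derives the constant $4\ell$ explicitly from the Lipschitz condition via a specific test tensor rather than asserting it as $\bigO{\ell}$.
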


\begin{proof}
    Define an empirical counterpart to $\mathfrak{L}_{\datadistribution}$, the 
    population-level utility of a classifier for an empirically estimated gradient, as 
    \begin{equation}
        \mathfrak{L}_{\genericsample}(\primed\hypothesis, \primed\linhypothesis) \coloneqq 
        \confusiontensor(\primed\linhypothesis, \datadistribution) \cdot \nabla \taskloss(\empirical\confusiontensor(\primed\hypothesis, \genericsample)) \,,
    \end{equation}

    and the (population-level) optimal classifier $\optimal\linhypothesis \in \argmax_{\primed \linhypothesis} \mathfrak{L}_{\datadistribution}(\hypothesis, \primed\linhypothesis)$ for the exact gradient, whose existence is guaranteed by \autoref{thm:linear_metric}. Then we can write
    \begin{multline}
        = \max_{\primed \linhypothesis} \mathfrak{L}_{\datadistribution}(\hypothesis, \primed\linhypothesis)
          - \mathfrak{L}_{\datadistribution}(\hypothesis, \empirical\linhypothesis) 
        = \mathfrak{L}_{\datadistribution}(\hypothesis, \optimal\linhypothesis)
          - \mathfrak{L}_{\datadistribution}(\hypothesis, \empirical\linhypothesis)  \\
        = \mathfrak{L}_{\datadistribution}(\hypothesis, \optimal\linhypothesis)
          - \mathfrak{L}_{\genericsample}(\hypothesis, \optimal\linhypothesis)
          + \mathfrak{L}_{\genericsample}(\hypothesis, \optimal\linhypothesis)
          - \mathfrak{L}_{\genericsample}(\hypothesis, \empirical\linhypothesis)
          + \mathfrak{L}_{\genericsample}(\hypothesis, \empirical\linhypothesis)
          - \mathfrak{L}_{\datadistribution}(\hypothesis, \empirical\linhypothesis)
    \end{multline}

    Now we turn to bounding each of these terms. For the second, we get
    \begin{multline}
        \mathfrak{L}_{\genericsample}(\hypothesis, \optimal\linhypothesis)
          - \mathfrak{L}_{\genericsample}(\hypothesis, \empirical\linhypothesis)
        =  \confusiontensor(\optimal\linhypothesis, \datadistribution) \cdot \gaintensor
        - \confusiontensor(\empirical\linhypothesis, \datadistribution) \cdot \gaintensor
        \\
        \leq \max_{\primed\linhypothesis} \confusiontensor(\primed\linhypothesis, \datadistribution) \cdot \gaintensor
        - \confusiontensor(\empirical\linhypothesis, \datadistribution) \cdot \gaintensor 
        \leq 2 \numlabels^{-1} \max_j \|\gainmatrix^j\|_{1,1} \expectation_{\rinstance}{\| \marginals(\rinstance) - \empirical\marginals(\rinstance) \|_{1}},
    \end{multline}
    where the last step used that $\empirical\linhypothesis$ is the empirical maximizer of
    the linear measure corresponding to $\gaintensor$, in order to apply
    \autoref{lemma:linear-regret}.
    Now, if $\taskloss$ is $\ell$-Lipschitz w.r.t. the $\ell_1$-norm, then
    \begin{equation}
        \forall \primed\confusiontensor \, \colon \,| \nabla \taskloss(\confusiontensor) \cdot \primed\confusiontensor | 
        \leq \|\primed\confusiontensor\|_1 \,. \label{eq:l1-lipschitz-cond}
    \end{equation}
    Let $j \in \intrange{\numlabels}$, and 
    applying \eqref{eq:l1-lipschitz-cond} to $\primed\confusiontensor = \withtilde\confusiontensor$ for which $\withtilde\confusiontensor^i = \matzero$
    for all $i \neq j$, and $\withtilde\confusiontensor^i = 0.25\cdot \matone$, we get
    \begin{equation}
        0.25 \gainmatrix^j \cdot \matone \leq \ell \; \Leftrightarrow \; \| \gainmatrix^j \|_{1,1} \leq 4 \ell \,.
    \end{equation}
    As this holds for all $j$, the upper bound turns into
    \begin{equation}
        \mathfrak{L}_{\genericsample}(\hypothesis, \optimal\linhypothesis)
          - \mathfrak{L}_{\genericsample}(\hypothesis, \empirical\linhypothesis) \leq 8 \ell \numlabels^{-1} \expectation_{\rinstance}{\| \marginals(\rinstance) - \empirical\marginals(\rinstance) \|_{1}}
    \end{equation}

    To bound the other two terms, we can use Hölder's inequality:
    \begin{align}
        \MoveEqLeft
        \mathfrak{L}_{\datadistribution}(\hypothesis, \optimal\linhypothesis)
        - \mathfrak{L}_{\genericsample}(\hypothesis, \optimal\linhypothesis)
        =
        \confusiontensor(\optimal\linhypothesis, \datadistribution) \cdot \nabla \taskloss(\confusiontensor(\hypothesis, \datadistribution))
        -
        \confusiontensor(\optimal\linhypothesis, \datadistribution) \cdot \nabla \taskloss(\empirical \confusiontensor(\hypothesis, \genericsample))
        \nonumber \\
        &= \confusiontensor(\optimal\linhypothesis, \datadistribution) \cdot \left( \nabla \taskloss(\confusiontensor(\hypothesis, \datadistribution))
        - \nabla \taskloss(\empirical \confusiontensor(\hypothesis, \genericsample)) \right) \nonumber \\
        &\leq \| \nabla \taskloss(\confusiontensor(\hypothesis, \datadistribution))
        - \nabla \taskloss(\empirical \confusiontensor(\hypothesis, \genericsample)) \|_{\infty} 
        \cdot  \| \confusiontensor(\optimal\linhypothesis, \datadistribution) \|_1  \tag{Hölder} \\
        &= \numlabels \|\nabla \taskloss(\confusiontensor(\hypothesis, \datadistribution))
        - \nabla \taskloss(\empirical \confusiontensor(\hypothesis, \genericsample)) \|_{\infty} \tag{Normalization of $\confusiontensor$} \\
        &\leq \numlabels \beta \| \confusiontensor(\hypothesis, \datadistribution)
        - \empirical \confusiontensor(\hypothesis, \genericsample) \|_{1} \tag{$\beta$-smoothness} \\
        &\leq 4 \numlabels^2 \beta \| \confusiontensor(\hypothesis, \datadistribution)
        - \empirical \confusiontensor(\hypothesis, \genericsample) \|_{\infty} \nonumber \\
        &\leq 4 \numlabels^2 \beta \sup_{\primed\hypothesis \in \hypothesisspace}  \| \confusiontensor(\primed\hypothesis, \datadistribution)
        - \empirical \confusiontensor(\primed\hypothesis, \genericsample)] \|_{\infty}
    \end{align}

    The same argument can be employed to bound the third term. Thus, applying \autoref{lemma:confmat-uniform-convergence}, 
    we get with probability at least $1-\delta$
    \begin{multline}
        \mathfrak{L}_{\datadistribution}(\hypothesis, \optimal\linhypothesis)
          - \mathfrak{L}_{\datadistribution}(\hypothesis, \empirical\linhypothesis)
        \leq 8 \ell \numlabels^{-1} \expectation_{\rinstance}{\| \marginals(\rinstance) - \empirical\marginals(\rinstance) \|_{1}} + {} \\
        8 \numlabels^2 \beta \sup_{\primed\hypothesis \in \hypothesisspace}  \tilde{\mathcal{O}} \left( 
    \sqrt{\frac{\numlabels \cdot \log \numlabels \cdot \log \numinstances -\log \delta}{\numinstances}}
        \right) \,.
    \end{multline}
\end{proof}

\subsection{Consistency of fixed-step-schedule Frank-Wolfe}

\confrankwolfe*

\begin{proof}
    Define a curvature constant for the loss $\taskloss$ as 
    \begin{align}
         \curvatureconst &\coloneqq 
         \sup_{\confusiontensor^1, \confusiontensor^2 \in \feasibleconfset, \gamma \in \closedinterval{0}{1}} \frac{2}{\gamma^2} \left( \taskloss(\confusiontensor^1 + \gamma (\confusiontensor^2 - \confusiontensor^1)) - \taskloss(\confusiontensor^1) - \gamma \left( \confusiontensor^2 - \confusiontensor^1 \right) \cdot \nabla \taskloss(\confusiontensor^1) \right) \nonumber \\
        &\leq 
        \sup_{\confusiontensor^1, \confusiontensor^2 \in \feasibleconfset, \gamma \in \closedinterval{0}{1}} \frac{2}{\gamma^2} \left( 
        \frac{\beta}{2}  \gamma^2 \| \confusiontensor^2 - \confusiontensor^1 \|_1^2 \right)
        = \beta \sup_{\confusiontensor^1, \confusiontensor^2 \in \feasibleconfset} 
         \| \confusiontensor^2 - \confusiontensor^1 \|_1^2 \leq 4 \beta \numlabels \,,
    \end{align}
    and let $\epsilon_{\genericsample}$ be defined as in \autoref{lemma:helper-for-fw}. Set
    $\delta_{\text{apx}} = (t+1)\epsilon_{\genericsample} / \curvatureconst$ and $\hat{h}^i$
    as in \autoref{alg:frank-wolfe}. Let $\hat{f}^i$ be the classifier implicitly 
    defined in iteration $i$, that is,
    \begin{equation}
         \hat{f}^i \coloneqq \sum_{j=1}^i \alpha^j \hat{h}^j \,.
     \end{equation} 

     For $1 \leq i \leq t$, we can apply \autoref{lemma:helper-for-fw} to $\hat{f}^{i-1}$
     and $\hat{h}^i$, which gives
     \begin{align}
        \MoveEqLeft
          \confusiontensor(\hat{h}^i, \datadistribution) \cdot \nabla \confmatutility(\confusiontensor(\hat{f}^{i-1}, \datadistribution)) \geq \max_{\primed \linhypothesis} \confusiontensor(\primed \linhypothesis, \datadistribution) \cdot \nabla \confmatutility(\confusiontensor(\hat{f}^{i-1}, \datadistribution)) - \epsilon_{\genericsample} 
          \nonumber \\
          &= \max_{\confusiontensor \in \feasibleconfset} \confusiontensor \cdot \nabla \confmatutility(\confusiontensor(\hat{f}^{i-1}, \datadistribution)) - \epsilon_{\genericsample} 
          = \max_{\confusiontensor \in \overline{\feasibleconfset}} \confusiontensor \cdot \nabla \confmatutility(\confusiontensor(\hat{f}^{i-1}, \datadistribution)) - \epsilon_{\genericsample} \nonumber \\
          &= \max_{\confusiontensor \in \overline{\feasibleconfset}} \confusiontensor \cdot \nabla \confmatutility(\confusiontensor(\hat{f}^{i-1}, \datadistribution)) - \frac{1}{2} \delta_{\text{apx}} \frac{2}{t+1} \curvatureconst \nonumber \\
          &\geq \max_{\confusiontensor \in \overline{\feasibleconfset}} \confusiontensor \cdot \nabla \confmatutility(\confusiontensor(\hat{f}^{i-1}, \datadistribution)) - \frac{1}{2} \delta_{\text{apx}} \frac{2}{i+1} \curvatureconst \,. \label{eq:fw-imp-curvature}
     \end{align}

     As we consider, for the proof, a Frank-Wolfe implementation with fixed step schedule
     $\frac{2}{i+1}$, the confusion tensors are related through
     \begin{equation}
         \confusiontensor(\hat{f}^{i}, \datadistribution) = \left( 1 - \frac{2}{i+1} \right)
         \confusiontensor(\hat{f}^{i-1}, \datadistribution) + \frac{2}{i+1} \confusiontensor(\hat{h}^{i}, \datadistribution) \,. \label{eq:fw-proof-cm}
     \end{equation}

     With results \eqref{eq:fw-imp-curvature} and~\eqref{eq:fw-proof-cm}, we now
     have the exact same situation as in \citet[Proof of Theorem
     16]{Narasimhan_et_al_2015}. In particular, an application of \citet[Theorem
     1]{jaggi2013revisiting} gives the desired result.
\end{proof}

\section{Label dependence and optimization of macro-at-$k$ metrics}
\label{app:label-dependent-solution}

The ``budgeted-at-$k$'' constraint couples the label-wise binary problems, 
resulting in their inability to be independently optimized.
To demonstrate this coupling effect, we present a simple example. 
We consider the macro Jaccard similarity, defined below, and assume budget $k=2$:
\begin{equation} 
\taskloss_{\text{Jaccard}}(\confusiontensor(\hypothesis)) = \numlabels^{-1} \sum_{j=1}^{\numlabels} \frac{\confusiontensor[j][11]}{\confusiontensor[j][11] + \confusiontensor[j][01] + \confusiontensor[j][10]} \,.
\end{equation}

Let us consider two simple distributions, both with two different instances $\sinstance$ of equal probability and three labels:

\begin{figure*}[htb!]
\begin{minipage}{.5\textwidth}
\centering
Distribution $A$:
\begin{tabular}{c|cccc}
    \toprule
    & $P(\sinstance)$ & $\marginals[1](\sinstance)$ & $\marginals[2](\sinstance)$ & $\marginals[3](\sinstance)$ \\
    \midrule
    $\sinstance_1$ & 0.5 & 0.4 & 0.2 & 0.6 \\
    $\sinstance_2$ & 0.5 & 0.8 & 0.4 & {\color{orange}\textbf{0.4}} \\
    \bottomrule
\end{tabular}
\end{minipage}%
\begin{minipage}{.5\textwidth}
\centering
Distribution $B$:
\begin{tabular}{c|cccc}
    \toprule
    & $P(\sinstance)$ & $\marginals[1](\sinstance)$ & $\marginals[2](\sinstance)$ & $\marginals[3](\sinstance)$ \\
    \midrule
    $\sinstance_1$ & 0.5 & 0.4 & 0.2 & 0.6 \\
    $\sinstance_2$ & 0.5 & 0.8 & 0.4 & {\color{orange}\textbf{0.8}} \\
    \bottomrule
\end{tabular}
\end{minipage}
\end{figure*}

Notice that both distributions only differ on the marginal conditional probability of the third label of the second instance $\sinstance_2$ ($\marginals[3](\sinstance_2)$).
We find the optimal randomized classifiers for both distributions:

\begin{figure*}[h!]
\begin{minipage}{.5\textwidth}
\centering
Optimal $\optimal\hypothesis_A(\sinstance)$ for distribution $A$:
\begin{tabular}{c|ccc}
    \toprule
    & $\vectorpi[1](\sinstance)$ & $\vectorpi[2](\sinstance)$ & $\vectorpi[3](\sinstance)$ \\
    \midrule
    $\sinstance_1$ & {\color{orange}\textbf{1.0}} & {\color{orange}\textbf{0.0}} & 1.0 \\
    $\sinstance_2$ & 1.0 & 1.0 & 0.0 \\
    \bottomrule
\end{tabular}
$\taskloss_{\text{Jaccard}}(\confusiontensor(\optimal\hypothesis_A, A)) \approx 0.453962$
\end{minipage}%
\begin{minipage}{.5\textwidth}
\centering
Optimal $\optimal\hypothesis_B(\sinstance)$ for distribution B:
\begin{tabular}{c|ccc}
    \toprule
    & $\vectorpi[1](\sinstance)$ & $\vectorpi[2](\sinstance)$ & $\vectorpi[3](\sinstance)$ \\
    \midrule
    $\sinstance_1$ & {\color{orange}\textbf{0.0}} & {\color{orange}\textbf{1.0}} & 1.0 \\
    $\sinstance_2$ & 1.0 & 0.0 & 1.0 \\
    \bottomrule
\end{tabular}
$\taskloss_{\text{Jaccard}}(\confusiontensor(\optimal\hypothesis_B, B)) \approx 0.471423$
\end{minipage}
\end{figure*}

We can notice that despite changing only one marginal conditional probability, the optimal solution is different on the other instance for the two other labels. If it were possible to find the solution for each label separately, the change in the distribution on one label would not affect the order of other labels, as it happened in the above example.

\section{Experimental setup}
\label{app:experimental-setup}

\subsection{Training and selection of marginal probability estimators}

In our experiments, we use two types of models for the estimation of marginal conditional probabilities of labels $\marginals(\sinstance)$:
\begin{enumerate}
    \item For \mediamill, \flickr, and \rcvx datasets,  we use multi-layer fully connected neural network (ranging from 1 to 3 layers with hidden layer size from (128 to 2048) implemented in Pytorch~\cite{pytorch}. We perform a search for the best hyper-parameters (number and size of layers, learning rate, number of epochs) using a validation set created from the train set for each loss used (binary cross-entropy, focal and asymmetric loss). Then, the model is retrained on the whole training set. We use Adam optimizer~\citep{Kingma_et_al_2015}.
    For Focal and Asymmetric loss, we use default parameters suggested by the authors in~\citep{ridnik2021asymmetric}.
    \item For \amazoncatsmall dataset, we use probabilistic label tree (PLT) with LIBLINEAR models trained with $L_2$-regularized logistic loss~\citep{liblinear}. We make it sparse by truncating all the weights whose absolute value is above the threshold of 0.01, as introduced in \citep{babbar2017dismec}, to reduce the model size and inference time. Use the implementation provided in \textsc{napkinXC} library~\citep{Jasinska-Kobus_et_al_2020} and use the library's default parameters.
\end{enumerate}

\subsection{Sparse marginals in Frank-Wolfe algorithm}
\label{app:sparse-marginals}

Materializing the $\empirical \marginals(\sinstance)$ for all instances in the form of a dense matrix requires a considerable amount of memory for datasets like \amazoncatsmall (over 58 Gb using 32-bit floats). 
Because of that, we instead use a sparse matrix in \emph{compressed sparse row} (CSR) format with only top-$k'$ values of marginals kept for each instance, where $k \ll k' \ll \numlabels$. 
All the other marginals are being treated as zeros. In CSR format, the row vectors are represented as a list of tuples $\boldsymbol{a}_{i}^{\text{csr}} := \{(\text{index}, \text{value}) : \text{value} \neq 0\}$ and allow for efficient element-wise multiplication between both dense and sparse vectors needed in Frank-Wolfe procedure. 
By using the sparse matrix of marginals with exactly $k'$ non-zero values, we effectively reduce the complexity of one iteration from $\bigO{\numinstances k'}$ instead of $\bigO{\numinstances \numlabels}$. 
We use $k' = 200$ for both \rcvx and \amazoncatsmall datasets. We found that using $k' > 100$ has no negative impact on predictive performance compared to using a full dense matrix.

\subsection{Hardware}

All the experiments were conducted on a workstation with 64 GB of RAM and Nvidia V100 16Gb GPU. However, the experiments can be also reproduced with smaller amount of memory.

\section{Extended results}
\label{app:extended-results}

\begin{table}%
\caption{Mean results with standard deviation of different inference strategies on measure calculated at $\{3,5,10\}$ Notation: P---precision, R---recall, F1---F1-measure. The \smash{\colorbox{green!20!white}{green color}} indicates cells in which the strategy matches the metric. 
The best results are in \textbf{bold} and the second best are in \textit{italic}.}
\vspace{3pt}
\small
\centering

\pgfplotstableset{
greencell/.style={postproc cell content/.append style={/pgfplots/table/@cell content/.add={\cellcolor{green!20!white}}{},}}}

\resizebox{\linewidth}{!}{
\setlength\tabcolsep{2.5 pt}
\pgfplotstabletypeset[
    every row no 0/.append style={before row={
     & \multicolumn{30}{c}{\textbf{\mediamill}} \\ \midrule
    }},
    every row no 9/.append style={before row={ \midrule
     & \multicolumn{30}{c}{\textbf{\flickr}} \\ \midrule
    }},
    every row no 18/.append style={before row={ \midrule
     & \multicolumn{30}{c}{\textbf{\rcvx}} \\ \midrule
    }},
    every row no 5/.append style={before row={\midrule}},
    every row no 14/.append style={before row={\midrule}},
    every row no 23/.append style={before row={\midrule}},
    every row no 0 column no 1/.style={greencell},
every row no 0 column no 2/.style={greencell},
every row no 0 column no 11/.style={greencell},
every row no 0 column no 12/.style={greencell},
every row no 0 column no 21/.style={greencell},
every row no 0 column no 22/.style={greencell},
every row no 9 column no 1/.style={greencell},
every row no 9 column no 2/.style={greencell},
every row no 9 column no 11/.style={greencell},
every row no 9 column no 12/.style={greencell},
every row no 9 column no 21/.style={greencell},
every row no 9 column no 22/.style={greencell},
every row no 18 column no 1/.style={greencell},
every row no 18 column no 2/.style={greencell},
every row no 18 column no 11/.style={greencell},
every row no 18 column no 12/.style={greencell},
every row no 18 column no 21/.style={greencell},
every row no 18 column no 22/.style={greencell},
every row no 5 column no 5/.style={greencell},
every row no 5 column no 6/.style={greencell},
every row no 5 column no 15/.style={greencell},
every row no 5 column no 16/.style={greencell},
every row no 5 column no 25/.style={greencell},
every row no 5 column no 26/.style={greencell},
every row no 14 column no 5/.style={greencell},
every row no 14 column no 6/.style={greencell},
every row no 14 column no 15/.style={greencell},
every row no 14 column no 16/.style={greencell},
every row no 14 column no 25/.style={greencell},
every row no 14 column no 26/.style={greencell},
every row no 23 column no 5/.style={greencell},
every row no 23 column no 6/.style={greencell},
every row no 23 column no 15/.style={greencell},
every row no 23 column no 16/.style={greencell},
every row no 23 column no 25/.style={greencell},
every row no 23 column no 26/.style={greencell},
every row no 6 column no 7/.style={greencell},
every row no 6 column no 8/.style={greencell},
every row no 6 column no 17/.style={greencell},
every row no 6 column no 18/.style={greencell},
every row no 6 column no 27/.style={greencell},
every row no 6 column no 28/.style={greencell},
every row no 15 column no 7/.style={greencell},
every row no 15 column no 8/.style={greencell},
every row no 15 column no 17/.style={greencell},
every row no 15 column no 18/.style={greencell},
every row no 15 column no 27/.style={greencell},
every row no 15 column no 28/.style={greencell},
every row no 24 column no 7/.style={greencell},
every row no 24 column no 8/.style={greencell},
every row no 24 column no 17/.style={greencell},
every row no 24 column no 18/.style={greencell},
every row no 24 column no 27/.style={greencell},
every row no 24 column no 28/.style={greencell},
every row no 7 column no 7/.style={greencell},
every row no 7 column no 8/.style={greencell},
every row no 7 column no 17/.style={greencell},
every row no 7 column no 18/.style={greencell},
every row no 7 column no 27/.style={greencell},
every row no 7 column no 28/.style={greencell},
every row no 16 column no 7/.style={greencell},
every row no 16 column no 8/.style={greencell},
every row no 16 column no 17/.style={greencell},
every row no 16 column no 18/.style={greencell},
every row no 16 column no 27/.style={greencell},
every row no 16 column no 28/.style={greencell},
every row no 25 column no 7/.style={greencell},
every row no 25 column no 8/.style={greencell},
every row no 25 column no 17/.style={greencell},
every row no 25 column no 18/.style={greencell},
every row no 25 column no 27/.style={greencell},
every row no 25 column no 28/.style={greencell},
every row no 8 column no 9/.style={greencell},
every row no 8 column no 10/.style={greencell},
every row no 8 column no 19/.style={greencell},
every row no 8 column no 20/.style={greencell},
every row no 8 column no 29/.style={greencell},
every row no 8 column no 30/.style={greencell},
every row no 17 column no 9/.style={greencell},
every row no 17 column no 10/.style={greencell},
every row no 17 column no 19/.style={greencell},
every row no 17 column no 20/.style={greencell},
every row no 17 column no 29/.style={greencell},
every row no 17 column no 30/.style={greencell},
every row no 26 column no 9/.style={greencell},
every row no 26 column no 10/.style={greencell},
every row no 26 column no 19/.style={greencell},
every row no 26 column no 20/.style={greencell},
every row no 26 column no 29/.style={greencell},
every row no 26 column no 30/.style={greencell},
    every head row/.style={
    before row={\toprule Inference & \multicolumn{4}{c|}{Instance @3}& \multicolumn{6}{c|}{Macro @3} & \multicolumn{4}{c|}{Instance @5}& \multicolumn{6}{c|}{Macro @5} & \multicolumn{4}{c|}{Instance @10}& \multicolumn{6}{c}{Macro @10} \\},
    after row={\midrule}},
    columns={method,iP@3,iP@3std,iR@3,iR@3std,mP@3,mP@3std,mR@3,mR@3std,mF@3,mF@3std,iP@5,iP@5std,iR@5,iR@5std,mP@5,mP@5std,mR@5,mR@5std,mF@5,mF@5std,iP@10,iP@10std,iR@10,iR@10std,mP@10,mP@10std,mR@10,mR@10std,mF@10,mF@10std},
    columns/method/.style={string type, column type={l|}, column name={strategy}},
    columns/iP@10/.style={string type,column name={P},column type={r@{ }}},
    columns/mF@10/.style={string type,column name={F1},column type={r@{ }}},
    columns/iR@10/.style={string type,column name={R},column type={r@{ }}},
    columns/mP@10/.style={string type,column name={P},column type={r@{ }}},
    columns/mR@10/.style={string type,column name={R},column type={r@{ }}},
    columns/iP@3/.style={string type,column name={P},column type={r@{ }}},
    columns/mF@3/.style={string type,column name={F1},column type={r@{ }}},
    columns/iR@3/.style={string type,column name={R},column type={r@{ }}},
    columns/mP@3/.style={string type,column name={P},column type={r@{ }}},
    columns/mR@3/.style={string type,column name={R},column type={r@{ }}},
    columns/iP@5/.style={string type,column name={P},column type={r@{ }}},
    columns/mF@5/.style={string type,column name={F1},column type={r@{ }}},
    columns/iR@5/.style={string type,column name={R},column type={r@{ }}},
    columns/mP@5/.style={string type,column name={P},column type={r@{ }}},
    columns/mR@5/.style={string type,column name={R},column type={r@{ }}},
    columns/iP@10std/.style={string type,column type={l},column name={std},column type/.add={>{\scriptsize $\pm$}}{}},
    columns/mF@10std/.style={string type,column type={l},column name={std},column type/.add={>{\scriptsize $\pm$}}{}},
    columns/iR@10std/.style={string type,column type={l|},column name={std},column type/.add={>{\scriptsize $\pm$}}{}},
    columns/mP@10std/.style={string type,column type={l},column name={std},column type/.add={>{\scriptsize $\pm$}}{}},
    columns/mR@10std/.style={string type,column type={l},column name={std},column type/.add={>{\scriptsize $\pm$}}{}},
    columns/iP@3std/.style={string type,column type={l},column name={std},column type/.add={>{\scriptsize $\pm$}}{}},
    columns/mF@3std/.style={string type,column type={l|},column name={std},column type/.add={>{\scriptsize $\pm$}}{}},
    columns/iR@3std/.style={string type,column type={l|},column name={std},column type/.add={>{\scriptsize $\pm$}}{}},
    columns/mP@3std/.style={string type,column type={l},column name={std},column type/.add={>{\scriptsize $\pm$}}{}},
    columns/mR@3std/.style={string type,column type={l},column name={std},column type/.add={>{\scriptsize $\pm$}}{}},
    columns/iP@5std/.style={string type,column type={l},column name={std},column type/.add={>{\scriptsize $\pm$}}{}},
    columns/mF@5std/.style={string type,column type={l|},column name={std},column type/.add={>{\scriptsize $\pm$}}{}},
    columns/iR@5std/.style={string type,column type={l|},column name={std},column type/.add={>{\scriptsize $\pm$}}{}},
    columns/mP@5std/.style={string type,column type={l},column name={std},column type/.add={>{\scriptsize $\pm$}}{}},
    columns/mR@5std/.style={string type,column type={l},column name={std},column type/.add={>{\scriptsize $\pm$}}{}},
]{tables/data/ml-frank-wolfe-main-str.txt}
}

\resizebox{\linewidth}{!}{
\setlength\tabcolsep{2.5 pt}
\pgfplotstabletypeset[
    every row no 0/.append style={before row={
     & \multicolumn{30}{c}{\textbf{\amazoncatsmall}} \\ \midrule
    }},
    every row no 3/.append style={before row={\midrule}},
    every row no 10/.append style={before row={\midrule}},
    every row no 17/.append style={before row={\midrule}},
    every row no 0 column no 1/.style={greencell},
every row no 0 column no 2/.style={greencell},
every row no 0 column no 11/.style={greencell},
every row no 0 column no 12/.style={greencell},
every row no 0 column no 21/.style={greencell},
every row no 0 column no 22/.style={greencell},
every row no 7 column no 1/.style={greencell},
every row no 7 column no 2/.style={greencell},
every row no 7 column no 11/.style={greencell},
every row no 7 column no 12/.style={greencell},
every row no 7 column no 21/.style={greencell},
every row no 7 column no 22/.style={greencell},
every row no 14 column no 1/.style={greencell},
every row no 14 column no 2/.style={greencell},
every row no 14 column no 11/.style={greencell},
every row no 14 column no 12/.style={greencell},
every row no 14 column no 21/.style={greencell},
every row no 14 column no 22/.style={greencell},
every row no 3 column no 5/.style={greencell},
every row no 3 column no 6/.style={greencell},
every row no 3 column no 15/.style={greencell},
every row no 3 column no 16/.style={greencell},
every row no 3 column no 25/.style={greencell},
every row no 3 column no 26/.style={greencell},
every row no 10 column no 5/.style={greencell},
every row no 10 column no 6/.style={greencell},
every row no 10 column no 15/.style={greencell},
every row no 10 column no 16/.style={greencell},
every row no 10 column no 25/.style={greencell},
every row no 10 column no 26/.style={greencell},
every row no 17 column no 5/.style={greencell},
every row no 17 column no 6/.style={greencell},
every row no 17 column no 15/.style={greencell},
every row no 17 column no 16/.style={greencell},
every row no 17 column no 25/.style={greencell},
every row no 17 column no 26/.style={greencell},
every row no 4 column no 7/.style={greencell},
every row no 4 column no 8/.style={greencell},
every row no 4 column no 17/.style={greencell},
every row no 4 column no 18/.style={greencell},
every row no 4 column no 27/.style={greencell},
every row no 4 column no 28/.style={greencell},
every row no 11 column no 7/.style={greencell},
every row no 11 column no 8/.style={greencell},
every row no 11 column no 17/.style={greencell},
every row no 11 column no 18/.style={greencell},
every row no 11 column no 27/.style={greencell},
every row no 11 column no 28/.style={greencell},
every row no 18 column no 7/.style={greencell},
every row no 18 column no 8/.style={greencell},
every row no 18 column no 17/.style={greencell},
every row no 18 column no 18/.style={greencell},
every row no 18 column no 27/.style={greencell},
every row no 18 column no 28/.style={greencell},
every row no 5 column no 7/.style={greencell},
every row no 5 column no 8/.style={greencell},
every row no 5 column no 17/.style={greencell},
every row no 5 column no 18/.style={greencell},
every row no 5 column no 27/.style={greencell},
every row no 5 column no 28/.style={greencell},
every row no 12 column no 7/.style={greencell},
every row no 12 column no 8/.style={greencell},
every row no 12 column no 17/.style={greencell},
every row no 12 column no 18/.style={greencell},
every row no 12 column no 27/.style={greencell},
every row no 12 column no 28/.style={greencell},
every row no 6 column no 9/.style={greencell},
every row no 6 column no 10/.style={greencell},
every row no 6 column no 19/.style={greencell},
every row no 6 column no 20/.style={greencell},
every row no 6 column no 29/.style={greencell},
every row no 6 column no 30/.style={greencell},
every row no 13 column no 9/.style={greencell},
every row no 13 column no 10/.style={greencell},
every row no 13 column no 19/.style={greencell},
every row no 13 column no 20/.style={greencell},
every row no 13 column no 29/.style={greencell},
every row no 13 column no 30/.style={greencell},
every row no 20 column no 9/.style={greencell},
every row no 20 column no 10/.style={greencell},
every row no 20 column no 19/.style={greencell},
every row no 20 column no 20/.style={greencell},
every row no 20 column no 29/.style={greencell},
every row no 20 column no 30/.style={greencell},
    every head row/.style={
    output empty row,
    before row={\midrule}
    },
    every last row/.style={after row=\bottomrule},
    columns={method,iP@3,iP@3std,iR@3,iR@3std,mP@3,mP@3std,mR@3,mR@3std,mF@3,mF@3std,iP@5,iP@5std,iR@5,iR@5std,mP@5,mP@5std,mR@5,mR@5std,mF@5,mF@5std,iP@10,iP@10std,iR@10,iR@10std,mP@10,mP@10std,mR@10,mR@10std,mF@10,mF@10std},
    columns/method/.style={string type, column type={l|}, column name={strategy}},
    columns/iP@10/.style={string type,column name={P},column type={r@{ }}},
    columns/mF@10/.style={string type,column name={F1},column type={r@{ }}},
    columns/iR@10/.style={string type,column name={R},column type={r@{ }}},
    columns/mP@10/.style={string type,column name={P},column type={r@{ }}},
    columns/mR@10/.style={string type,column name={R},column type={r@{ }}},
    columns/iP@3/.style={string type,column name={P},column type={r@{ }}},
    columns/mF@3/.style={string type,column name={F1},column type={r@{ }}},
    columns/iR@3/.style={string type,column name={R},column type={r@{ }}},
    columns/mP@3/.style={string type,column name={P},column type={r@{ }}},
    columns/mR@3/.style={string type,column name={R},column type={r@{ }}},
    columns/iP@5/.style={string type,column name={P},column type={r@{ }}},
    columns/mF@5/.style={string type,column name={F1},column type={r@{ }}},
    columns/iR@5/.style={string type,column name={R},column type={r@{ }}},
    columns/mP@5/.style={string type,column name={P},column type={r@{ }}},
    columns/mR@5/.style={string type,column name={R},column type={r@{ }}},
    columns/iP@10std/.style={string type,column type={l},column name={std},column type/.add={>{\scriptsize $\pm$}}{}},
    columns/mF@10std/.style={string type,column type={l},column name={std},column type/.add={>{\scriptsize $\pm$}}{}},
    columns/iR@10std/.style={string type,column type={l|},column name={std},column type/.add={>{\scriptsize $\pm$}}{}},
    columns/mP@10std/.style={string type,column type={l},column name={std},column type/.add={>{\scriptsize $\pm$}}{}},
    columns/mR@10std/.style={string type,column type={l},column name={std},column type/.add={>{\scriptsize $\pm$}}{}},
    columns/iP@3std/.style={string type,column type={l},column name={std},column type/.add={>{\scriptsize $\pm$}}{}},
    columns/mF@3std/.style={string type,column type={l|},column name={std},column type/.add={>{\scriptsize $\pm$}}{}},
    columns/iR@3std/.style={string type,column type={l|},column name={std},column type/.add={>{\scriptsize $\pm$}}{}},
    columns/mP@3std/.style={string type,column type={l},column name={std},column type/.add={>{\scriptsize $\pm$}}{}},
    columns/mR@3std/.style={string type,column type={l},column name={std},column type/.add={>{\scriptsize $\pm$}}{}},
    columns/iP@5std/.style={string type,column type={l},column name={std},column type/.add={>{\scriptsize $\pm$}}{}},
    columns/mF@5std/.style={string type,column type={l|},column name={std},column type/.add={>{\scriptsize $\pm$}}{}},
    columns/iR@5std/.style={string type,column type={l|},column name={std},column type/.add={>{\scriptsize $\pm$}}{}},
    columns/mP@5std/.style={string type,column type={l},column name={std},column type/.add={>{\scriptsize $\pm$}}{}},
    columns/mR@5std/.style={string type,column type={l},column name={std},column type/.add={>{\scriptsize $\pm$}}{}},
]{tables/data/xmlc-frank-wolfe-main-str.txt}
}
\label{tab:main-results-extended}
\end{table}

\begin{table}[tb]%
\caption{Comparison of two classifiers for macro-balanced accuracy calculated at $\{3,5,10\}$ -- a closed form classifier (\infmacba) and 2) classifier found using Frank-Wolfe algorithm (\inffwmacba). The \smash{\colorbox{green!20!white}{green color}} indicates cells in which the strategy matches the metric.}
\vspace{3pt}
\small
\centering

\pgfplotstableset{
greencell/.style={postproc cell content/.append style={/pgfplots/table/@cell content/.add={\cellcolor{green!20!white}}{},}}}

\resizebox{\linewidth}{!}{
\setlength\tabcolsep{4 pt}
\pgfplotstabletypeset[
    every row no 0/.append style={before row={ 
     & \multicolumn{18}{c}{\textbf{\mediamill}} \\ \midrule
    }},
    every row no 2/.append style={before row={ \midrule
     & \multicolumn{18}{c}{\textbf{\flickr}} \\ \midrule
    }},
    every row no 4/.append style={before row={ \midrule
     & \multicolumn{18}{c}{\textbf{\rcvx}} \\ \midrule
    }},
    every row no 6/.append style={before row={ \midrule
     & \multicolumn{18}{c}{\textbf{\amazoncatsmall}} \\ \midrule
    }},
    every row no 0 column no 6/.style={greencell},
    every row no 1 column no 6/.style={greencell},
    every row no 2 column no 6/.style={greencell},
    every row no 3 column no 6/.style={greencell},
    every row no 4 column no 6/.style={greencell},
    every row no 5 column no 6/.style={greencell},
    every row no 6 column no 6/.style={greencell},
    every row no 7 column no 6/.style={greencell},
    every row no 0 column no 12/.style={greencell},
    every row no 1 column no 12/.style={greencell},
    every row no 2 column no 12/.style={greencell},
    every row no 3 column no 12/.style={greencell},
    every row no 4 column no 12/.style={greencell},
    every row no 5 column no 12/.style={greencell},
    every row no 6 column no 12/.style={greencell},
    every row no 7 column no 12/.style={greencell},
    every row no 0 column no 18/.style={greencell},
    every row no 1 column no 18/.style={greencell},
    every row no 2 column no 18/.style={greencell},
    every row no 3 column no 18/.style={greencell},
    every row no 4 column no 18/.style={greencell},
    every row no 5 column no 18/.style={greencell},
    every row no 6 column no 18/.style={greencell},
    every row no 7 column no 18/.style={greencell},
    every last row/.style={after row=\bottomrule},
    every head row/.style={
    before row={\toprule Inference & \multicolumn{2}{c|}{Instance @3}& \multicolumn{4}{c|}{Macro @3} & \multicolumn{2}{c|}{Instance @5}& \multicolumn{4}{c|}{Macro @5} & \multicolumn{2}{c|}{Instance @10}& \multicolumn{4}{c}{Macro @10} \\},
    after row={\midrule}},
columns={method,iP@3,iR@3,mP@3,mR@3,mF@3,mBA@3,iP@5,iR@5,mP@5,mR@5,mF@5,mBA@5,iP@10,iR@10,mP@10,mR@10,mF@10,mBA@10},
    columns/method/.style={string type, column type={l|}, column name={strategy}},
    columns/iP@3/.style={string type,column name={P}},
    columns/mF@3/.style={string type,column name={F1}},
    columns/mBA@3/.style={string type,column name={BA}, column type={c|}},
    columns/iR@3/.style={string type,column name={R}, column type={c|}},
    columns/mP@3/.style={string type,column name={P}},
    columns/mR@3/.style={string type,column name={R}},
    columns/iP@5/.style={string type,column name={P}},
    columns/mF@5/.style={string type,column name={F1}},
    columns/mBA@5/.style={string type,column name={BA}, column type={c|}},
    columns/iR@5/.style={string type,column name={R}, column type={c|}},
    columns/mP@5/.style={string type,column name={P}},
    columns/mR@5/.style={string type,column name={R}},
    columns/iP@10/.style={string type,column name={P}},
    columns/mF@10/.style={string type,column name={F1}},
    columns/mBA@10/.style={string type,column name={BA}, column type={c}},
    columns/iR@10/.style={string type,column name={R}, column type={c|}},
    columns/mP@10/.style={string type,column name={P}},
    columns/mR@10/.style={string type,column name={R}},
]{tables/data/frank-wolfe-balanced-acc-str.txt}
}
\label{tab:balance-acc-results}
\end{table}

\begin{table}[tb]%
\caption{Means with standard deviations of running times and numbers of iterations performed by the Frank-Wolfe algorithm for different objective measures calculated at ${3, 5, 10}$. Notation: T---total time in seconds, I---number of iterations.}
\vspace{3pt}
\small
\centering

\pgfplotstableset{
greencell/.style={postproc cell content/.append style={/pgfplots/table/@cell content/.add={\cellcolor{green!20!white}}{},}}}

\resizebox{0.75\linewidth}{!}{
\setlength\tabcolsep{4 pt}
\pgfplotstabletypeset[
    every row no 0/.append style={before row={ \midrule
     & \multicolumn{12}{c}{\textbf{\mediamill}} \\ \midrule
    }},
    every row no 3/.append style={before row={ \midrule
     & \multicolumn{12}{c}{\textbf{\flickr}} \\ \midrule
    }},
    every row no 6/.append style={before row={ \midrule
     & \multicolumn{12}{c}{\textbf{\rcvx}} \\ \midrule
    }},
    every row no 9/.append style={before row={ \midrule
     & \multicolumn{12}{c}{\textbf{\amazoncatsmall}} \\ \midrule
    }},
    every last row/.style={after row=\bottomrule},
    every head row/.style={before row=\toprule},
    columns={method,time@3,time@3std,iters@3,iters@3std,time@5,time@5std,iters@5,iters@5std,time@10,time@10std,iters@10,iters@10std},
    columns/method/.style={string type, column type={l|}, column name={Inference strategy}},
    columns/time@3/.style={string type,column name={$T@3$}, column type={r@{ }}},
    columns/iters@3/.style={string type,column name={$I@3$}, column type={r@{ }}},
    columns/time@5/.style={string type,column name={$T@5$}, column type={r@{ }}},
    columns/iters@5/.style={string type,column name={$I@5$}, column type={r@{ }}},
    columns/time@10/.style={string type,column name={$T@10$}, column type={r@{ }}},
    columns/iters@10/.style={string type,column name={$I@10$}, column type={r@{ }}},
    columns/time@3std/.style={string type,column type={l},column name={std},column type/.add={>{\scriptsize $\pm$}}{}},
    columns/time@5std/.style={string type,column type={l},column name={std},column type/.add={>{\scriptsize $\pm$}}{}},
    columns/time@10std/.style={string type,column type={l},column name={std},column type/.add={>{\scriptsize $\pm$}}{}},
    columns/iters@3std/.style={string type,column type={l|},column name={std},column type/.add={>{\scriptsize $\pm$}}{}},
    columns/iters@5std/.style={string type,column type={l|},column name={std},column type/.add={>{\scriptsize $\pm$}}{}},
    columns/iters@10std/.style={string type,column type={l},column name={std},column type/.add={>{\scriptsize $\pm$}}{}},
]{tables/data/frank-wolfe-times-str.txt}
}
\label{tab:time-results}
\end{table}

In this section, we include extended results of our empirical experiments. Here, we include tables with standard deviations. In \autoref{tab:main-results-extended}, we present the results of the main experiment from \autoref{sec:experiments} with standard deviations. 
In \autoref{tab:balance-acc-results}, we present the results on balanced accuracy, for which the introduced Frank-Wolfe algorithm retrieves the same solution as the closed form classifier described in \autoref{sec:optimal-classifier}, similarly to the case with macro-recall. 
In \autoref{tab:time-results}, we present the mean number of iterations and running time of the Frank-Wolfe algorithm for different objective measures, as well as different values of $k$. The values were calculated based on 10 runs of the algorithm. We use the same value of stopping condition $\epsilon = 0.001$ for all the experiments. The Frank-Wolfe algorithm requires only a small number of 3-10 iterations, and thanks to the usage of sparse matrices as described in \autoref{app:sparse-marginals}, it needs, in most cases, less than a minute even for larger benchmark datasets like \amazoncatsmall.
In further subsections, we also present the results for different splits and initialization strategies for the Frank-Wolfe algorithm.

\subsection{Impact of splitting strategy in the Frank-Wolfe algorithm}

\begin{table}%
\caption{Comparison of different splitting strategies for the Frank-Wolfe algorithm on measures calculated at $\{3,5,10\}$. Notation: P---precision, R---recall, F1---F1-measure. The \smash{\colorbox{green!20!white}{green color}} indicates cells in which the strategy matches the metric. 
}
\vspace{3pt}
\small
\centering

\pgfplotstableset{
greencell/.style={postproc cell content/.append style={/pgfplots/table/@cell content/.add={\cellcolor{green!20!white}}{},}}}

\resizebox{\linewidth}{!}{
\setlength\tabcolsep{4 pt}
\pgfplotstabletypeset[
    every row no 0/.append style={before row={
     & \multicolumn{15}{c}{\textbf{\mediamill}} \\ \midrule
    }},
    every row no 9/.append style={before row={ \midrule
     & \multicolumn{15}{c}{\textbf{\flickr}} \\ \midrule
    }},
    every row no 3/.append style={before row={\midrule}},
    every row no 6/.append style={before row={\midrule}},
    every row no 12/.append style={before row={\midrule}},
    every row no 15/.append style={before row={\midrule}},
every row no 0 column no 3/.style={greencell},
every row no 0 column no 8/.style={greencell},
every row no 0 column no 13/.style={greencell},
every row no 9 column no 3/.style={greencell},
every row no 9 column no 8/.style={greencell},
every row no 9 column no 13/.style={greencell},
every row no 18 column no 3/.style={greencell},
every row no 18 column no 8/.style={greencell},
every row no 18 column no 13/.style={greencell},
every row no 27 column no 3/.style={greencell},
every row no 27 column no 8/.style={greencell},
every row no 27 column no 13/.style={greencell},
every row no 36 column no 3/.style={greencell},
every row no 36 column no 8/.style={greencell},
every row no 36 column no 13/.style={greencell},
every row no 45 column no 3/.style={greencell},
every row no 45 column no 8/.style={greencell},
every row no 45 column no 13/.style={greencell},
every row no 1 column no 3/.style={greencell},
every row no 1 column no 8/.style={greencell},
every row no 1 column no 13/.style={greencell},
every row no 10 column no 3/.style={greencell},
every row no 10 column no 8/.style={greencell},
every row no 10 column no 13/.style={greencell},
every row no 19 column no 3/.style={greencell},
every row no 19 column no 8/.style={greencell},
every row no 19 column no 13/.style={greencell},
every row no 28 column no 3/.style={greencell},
every row no 28 column no 8/.style={greencell},
every row no 28 column no 13/.style={greencell},
every row no 37 column no 3/.style={greencell},
every row no 37 column no 8/.style={greencell},
every row no 37 column no 13/.style={greencell},
every row no 46 column no 3/.style={greencell},
every row no 46 column no 8/.style={greencell},
every row no 46 column no 13/.style={greencell},
every row no 2 column no 3/.style={greencell},
every row no 2 column no 8/.style={greencell},
every row no 2 column no 13/.style={greencell},
every row no 11 column no 3/.style={greencell},
every row no 11 column no 8/.style={greencell},
every row no 11 column no 13/.style={greencell},
every row no 20 column no 3/.style={greencell},
every row no 20 column no 8/.style={greencell},
every row no 20 column no 13/.style={greencell},
every row no 29 column no 3/.style={greencell},
every row no 29 column no 8/.style={greencell},
every row no 29 column no 13/.style={greencell},
every row no 38 column no 3/.style={greencell},
every row no 38 column no 8/.style={greencell},
every row no 38 column no 13/.style={greencell},
every row no 47 column no 3/.style={greencell},
every row no 47 column no 8/.style={greencell},
every row no 47 column no 13/.style={greencell},
every row no 3 column no 4/.style={greencell},
every row no 3 column no 9/.style={greencell},
every row no 3 column no 14/.style={greencell},
every row no 12 column no 4/.style={greencell},
every row no 12 column no 9/.style={greencell},
every row no 12 column no 14/.style={greencell},
every row no 21 column no 4/.style={greencell},
every row no 21 column no 9/.style={greencell},
every row no 21 column no 14/.style={greencell},
every row no 30 column no 4/.style={greencell},
every row no 30 column no 9/.style={greencell},
every row no 30 column no 14/.style={greencell},
every row no 39 column no 4/.style={greencell},
every row no 39 column no 9/.style={greencell},
every row no 39 column no 14/.style={greencell},
every row no 48 column no 4/.style={greencell},
every row no 48 column no 9/.style={greencell},
every row no 48 column no 14/.style={greencell},
every row no 4 column no 4/.style={greencell},
every row no 4 column no 9/.style={greencell},
every row no 4 column no 14/.style={greencell},
every row no 13 column no 4/.style={greencell},
every row no 13 column no 9/.style={greencell},
every row no 13 column no 14/.style={greencell},
every row no 22 column no 4/.style={greencell},
every row no 22 column no 9/.style={greencell},
every row no 22 column no 14/.style={greencell},
every row no 31 column no 4/.style={greencell},
every row no 31 column no 9/.style={greencell},
every row no 31 column no 14/.style={greencell},
every row no 40 column no 4/.style={greencell},
every row no 40 column no 9/.style={greencell},
every row no 40 column no 14/.style={greencell},
every row no 49 column no 4/.style={greencell},
every row no 49 column no 9/.style={greencell},
every row no 49 column no 14/.style={greencell},
every row no 5 column no 4/.style={greencell},
every row no 5 column no 9/.style={greencell},
every row no 5 column no 14/.style={greencell},
every row no 14 column no 4/.style={greencell},
every row no 14 column no 9/.style={greencell},
every row no 14 column no 14/.style={greencell},
every row no 23 column no 4/.style={greencell},
every row no 23 column no 9/.style={greencell},
every row no 23 column no 14/.style={greencell},
every row no 32 column no 4/.style={greencell},
every row no 32 column no 9/.style={greencell},
every row no 32 column no 14/.style={greencell},
every row no 41 column no 4/.style={greencell},
every row no 41 column no 9/.style={greencell},
every row no 41 column no 14/.style={greencell},
every row no 50 column no 4/.style={greencell},
every row no 50 column no 9/.style={greencell},
every row no 50 column no 14/.style={greencell},
every row no 6 column no 5/.style={greencell},
every row no 6 column no 10/.style={greencell},
every row no 6 column no 15/.style={greencell},
every row no 15 column no 5/.style={greencell},
every row no 15 column no 10/.style={greencell},
every row no 15 column no 15/.style={greencell},
every row no 24 column no 5/.style={greencell},
every row no 24 column no 10/.style={greencell},
every row no 24 column no 15/.style={greencell},
every row no 33 column no 5/.style={greencell},
every row no 33 column no 10/.style={greencell},
every row no 33 column no 15/.style={greencell},
every row no 42 column no 5/.style={greencell},
every row no 42 column no 10/.style={greencell},
every row no 42 column no 15/.style={greencell},
every row no 51 column no 5/.style={greencell},
every row no 51 column no 10/.style={greencell},
every row no 51 column no 15/.style={greencell},
every row no 7 column no 5/.style={greencell},
every row no 7 column no 10/.style={greencell},
every row no 7 column no 15/.style={greencell},
every row no 16 column no 5/.style={greencell},
every row no 16 column no 10/.style={greencell},
every row no 16 column no 15/.style={greencell},
every row no 25 column no 5/.style={greencell},
every row no 25 column no 10/.style={greencell},
every row no 25 column no 15/.style={greencell},
every row no 34 column no 5/.style={greencell},
every row no 34 column no 10/.style={greencell},
every row no 34 column no 15/.style={greencell},
every row no 43 column no 5/.style={greencell},
every row no 43 column no 10/.style={greencell},
every row no 43 column no 15/.style={greencell},
every row no 52 column no 5/.style={greencell},
every row no 52 column no 10/.style={greencell},
every row no 52 column no 15/.style={greencell},
every row no 8 column no 5/.style={greencell},
every row no 8 column no 10/.style={greencell},
every row no 8 column no 15/.style={greencell},
every row no 17 column no 5/.style={greencell},
every row no 17 column no 10/.style={greencell},
every row no 17 column no 15/.style={greencell},
every row no 26 column no 5/.style={greencell},
every row no 26 column no 10/.style={greencell},
every row no 26 column no 15/.style={greencell},
every row no 35 column no 5/.style={greencell},
every row no 35 column no 10/.style={greencell},
every row no 35 column no 15/.style={greencell},
every row no 44 column no 5/.style={greencell},
every row no 44 column no 10/.style={greencell},
every row no 44 column no 15/.style={greencell},
every row no 53 column no 5/.style={greencell},
every row no 53 column no 10/.style={greencell},
every row no 53 column no 15/.style={greencell},
    every head row/.style={
    before row={\toprule Inference & \multicolumn{2}{c|}{Instance @3}& \multicolumn{3}{c|}{Macro @3} & \multicolumn{2}{c|}{Instance @5}& \multicolumn{3}{c|}{Macro @5} & \multicolumn{2}{c|}{Instance @10}& \multicolumn{3}{c}{Macro @10} \\},
    after row={\midrule}},
    every last row/.style={after row=\bottomrule},
    columns={method,iP@3,iR@3,mP@3,mR@3,mF@3,iP@5,iR@5,mP@5,mR@5,mF@5,iP@10,iR@10,mP@10,mR@10,mF@10},
    columns/method/.style={string type, column type={l|}, column name={strategy}},
    columns/iP@3/.style={string type,column name={P}},
    columns/mF@3/.style={string type,column name={F1}, column type={c|}},
    columns/iR@3/.style={string type,column name={R}, column type={c|}},
    columns/mP@3/.style={string type,column name={P}},
    columns/mR@3/.style={string type,column name={R}},
    columns/iP@5/.style={string type,column name={P}},
    columns/mF@5/.style={string type,column name={F1}, column type={c|}},
    columns/iR@5/.style={string type,column name={R}, column type={c|}},
    columns/mP@5/.style={string type,column name={P}},
    columns/mR@5/.style={string type,column name={R}},
    columns/iP@10/.style={string type,column name={P}},
    columns/mF@10/.style={string type,column name={F1}, column type={c}},
    columns/iR@10/.style={string type,column name={R}, column type={c|}},
    columns/mP@10/.style={string type,column name={P}},
    columns/mR@10/.style={string type,column name={R}},
]{tables/data/frank-wolfe-splits-str.txt}
}
\label{tab:splits-results}
\end{table}

In this experiment, we test different ratios of splitting training datasets into the sets used for training the $\marginals$ estimator and estimating confusion matrix $\confusionmatrix$ (50/50 or 75/25 split), as well as a variant where we use the whole training set for both training the estimator and estimating $\confusionmatrix$ (100/100 split). 
The initial classifier $\hypothesis^0$ is initialized with the top-k $\empirical \marginals[j]$ classifier for all the experiments here. We present the result of this comparison in \autoref{tab:splits-results}. The results suggest that more data used for training is beneficial for the quality of the final randomized classifier.

\subsection{Top-k vs random-initialization}

\begin{table}%
\caption{Comparison of different initialization strategies for the Frank-Wolfe algorithm on measures calculated at $\{3,5,10\}$ Notation: P---precision, R---recall, F1---F1-measure. The \smash{\colorbox{green!20!white}{green color}} indicates cells in which the strategy matches the metric. 
}
\vspace{3pt}
\small
\centering

\pgfplotstableset{
greencell/.style={postproc cell content/.append style={/pgfplots/table/@cell content/.add={\cellcolor{green!20!white}}{},}}}

\resizebox{\linewidth}{!}{
\setlength\tabcolsep{4 pt}
\pgfplotstabletypeset[
    every row no 0/.append style={before row={
     & \multicolumn{15}{c}{\textbf{\mediamill}} \\ \midrule
    }},
    every row no 6/.append style={before row={ \midrule
     & \multicolumn{15}{c}{\textbf{\flickr}} \\ \midrule
    }},
    every row no 12/.append style={before row={ \midrule
     & \multicolumn{15}{c}{\textbf{\rcvx}} \\ \midrule
    }},
    every row no 18/.append style={before row={ \midrule
     & \multicolumn{15}{c}{\textbf{\amazoncatsmall}} \\ \midrule
    }},
    every row no 2/.append style={before row={\midrule}},
    every row no 4/.append style={before row={\midrule}},
    every row no 8/.append style={before row={\midrule}},
    every row no 10/.append style={before row={\midrule}},
    every row no 14/.append style={before row={\midrule}},
    every row no 16/.append style={before row={\midrule}},
every row no 0 column no 3/.style={greencell},
every row no 0 column no 8/.style={greencell},
every row no 0 column no 13/.style={greencell},
every row no 6 column no 3/.style={greencell},
every row no 6 column no 8/.style={greencell},
every row no 6 column no 13/.style={greencell},
every row no 12 column no 3/.style={greencell},
every row no 12 column no 8/.style={greencell},
every row no 12 column no 13/.style={greencell},
every row no 18 column no 3/.style={greencell},
every row no 18 column no 8/.style={greencell},
every row no 18 column no 13/.style={greencell},
every row no 24 column no 3/.style={greencell},
every row no 24 column no 8/.style={greencell},
every row no 24 column no 13/.style={greencell},
every row no 30 column no 3/.style={greencell},
every row no 30 column no 8/.style={greencell},
every row no 30 column no 13/.style={greencell},
every row no 1 column no 3/.style={greencell},
every row no 1 column no 8/.style={greencell},
every row no 1 column no 13/.style={greencell},
every row no 7 column no 3/.style={greencell},
every row no 7 column no 8/.style={greencell},
every row no 7 column no 13/.style={greencell},
every row no 13 column no 3/.style={greencell},
every row no 13 column no 8/.style={greencell},
every row no 13 column no 13/.style={greencell},
every row no 19 column no 3/.style={greencell},
every row no 19 column no 8/.style={greencell},
every row no 19 column no 13/.style={greencell},
every row no 25 column no 3/.style={greencell},
every row no 25 column no 8/.style={greencell},
every row no 25 column no 13/.style={greencell},
every row no 31 column no 3/.style={greencell},
every row no 31 column no 8/.style={greencell},
every row no 31 column no 13/.style={greencell},
every row no 2 column no 4/.style={greencell},
every row no 2 column no 9/.style={greencell},
every row no 2 column no 14/.style={greencell},
every row no 8 column no 4/.style={greencell},
every row no 8 column no 9/.style={greencell},
every row no 8 column no 14/.style={greencell},
every row no 14 column no 4/.style={greencell},
every row no 14 column no 9/.style={greencell},
every row no 14 column no 14/.style={greencell},
every row no 20 column no 4/.style={greencell},
every row no 20 column no 9/.style={greencell},
every row no 20 column no 14/.style={greencell},
every row no 26 column no 4/.style={greencell},
every row no 26 column no 9/.style={greencell},
every row no 26 column no 14/.style={greencell},
every row no 32 column no 4/.style={greencell},
every row no 32 column no 9/.style={greencell},
every row no 32 column no 14/.style={greencell},
every row no 3 column no 4/.style={greencell},
every row no 3 column no 9/.style={greencell},
every row no 3 column no 14/.style={greencell},
every row no 9 column no 4/.style={greencell},
every row no 9 column no 9/.style={greencell},
every row no 9 column no 14/.style={greencell},
every row no 15 column no 4/.style={greencell},
every row no 15 column no 9/.style={greencell},
every row no 15 column no 14/.style={greencell},
every row no 21 column no 4/.style={greencell},
every row no 21 column no 9/.style={greencell},
every row no 21 column no 14/.style={greencell},
every row no 27 column no 4/.style={greencell},
every row no 27 column no 9/.style={greencell},
every row no 27 column no 14/.style={greencell},
every row no 33 column no 4/.style={greencell},
every row no 33 column no 9/.style={greencell},
every row no 33 column no 14/.style={greencell},
every row no 4 column no 5/.style={greencell},
every row no 4 column no 10/.style={greencell},
every row no 4 column no 15/.style={greencell},
every row no 10 column no 5/.style={greencell},
every row no 10 column no 10/.style={greencell},
every row no 10 column no 15/.style={greencell},
every row no 16 column no 5/.style={greencell},
every row no 16 column no 10/.style={greencell},
every row no 16 column no 15/.style={greencell},
every row no 22 column no 5/.style={greencell},
every row no 22 column no 10/.style={greencell},
every row no 22 column no 15/.style={greencell},
every row no 28 column no 5/.style={greencell},
every row no 28 column no 10/.style={greencell},
every row no 28 column no 15/.style={greencell},
every row no 34 column no 5/.style={greencell},
every row no 34 column no 10/.style={greencell},
every row no 34 column no 15/.style={greencell},
every row no 5 column no 5/.style={greencell},
every row no 5 column no 10/.style={greencell},
every row no 5 column no 15/.style={greencell},
every row no 11 column no 5/.style={greencell},
every row no 11 column no 10/.style={greencell},
every row no 11 column no 15/.style={greencell},
every row no 17 column no 5/.style={greencell},
every row no 17 column no 10/.style={greencell},
every row no 17 column no 15/.style={greencell},
every row no 23 column no 5/.style={greencell},
every row no 23 column no 10/.style={greencell},
every row no 23 column no 15/.style={greencell},
every row no 29 column no 5/.style={greencell},
every row no 29 column no 10/.style={greencell},
every row no 29 column no 15/.style={greencell},
every row no 35 column no 5/.style={greencell},
every row no 35 column no 10/.style={greencell},
every row no 35 column no 15/.style={greencell},
    every head row/.style={
    before row={\toprule Inference & \multicolumn{2}{c|}{Instance @3}& \multicolumn{3}{c|}{Macro @3} & \multicolumn{2}{c|}{Instance @5}& \multicolumn{3}{c|}{Macro @5} & \multicolumn{2}{c|}{Instance @10}& \multicolumn{3}{c}{Macro @10} \\},
    after row={\midrule}},
    every last row/.style={after row=\bottomrule},
    columns={method,iP@3,iR@3,mP@3,mR@3,mF@3,iP@5,iR@5,mP@5,mR@5,mF@5,iP@10,iR@10,mP@10,mR@10,mF@10},
    columns/method/.style={string type, column type={l|}, column name={strategy}},
    columns/iP@3/.style={string type,column name={P}},
    columns/mF@3/.style={string type,column name={F1}, column type={c|}},
    columns/iR@3/.style={string type,column name={R}, column type={c|}},
    columns/mP@3/.style={string type,column name={P}},
    columns/mR@3/.style={string type,column name={R}},
    columns/iP@5/.style={string type,column name={P}},
    columns/mF@5/.style={string type,column name={F1}, column type={c|}},
    columns/iR@5/.style={string type,column name={R}, column type={c|}},
    columns/mP@5/.style={string type,column name={P}},
    columns/mR@5/.style={string type,column name={R}},
    columns/iP@10/.style={string type,column name={P}},
    columns/mF@10/.style={string type,column name={F1}, column type={c}},
    columns/iR@10/.style={string type,column name={R}, column type={c|}},
    columns/mP@10/.style={string type,column name={P}},
    columns/mR@10/.style={string type,column name={R}},
]{tables/data/frank-wolfe-topk-vs-random-str.txt}
}
\label{tab:topk-vs-random-results}
\end{table}

In this experiment, we investigate the impact of the initialization strategy in the Frank-Wolfe algorithm on the results. They consider two strategies for the initialization of initial classifier $\hypothesis^0$; one initialize the classifier that weights $\empirical \marginals[j]$ by a random positive number (rnd init.), second strategy initialize $\hypothesis^0$ with the top-k $\empirical \marginals[j]$ classifier (top-k init.). For all the experiments here, we use the same dataset for both training the $\marginals$ estimator and estimating confusion tensor $\confusiontensor$ $(100/100)$.
We present the result of this comparison in \autoref{tab:topk-vs-random-results}. The results show that the initialization strategy has an impact on the results for \inffwmacp variant of the algorithm. However it is not clear from the results which initialization variant is better and in which circumstances for \inffwmacp.

\begin{figure}
\centering
\begin{tabular}{ccc}
\includegraphics[width = 0.3\linewidth]{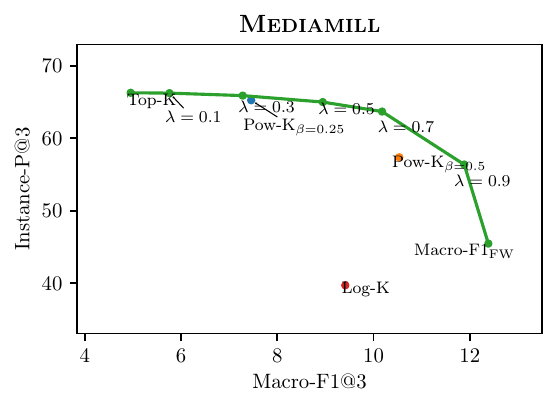} & 
\includegraphics[width = 0.3\linewidth]{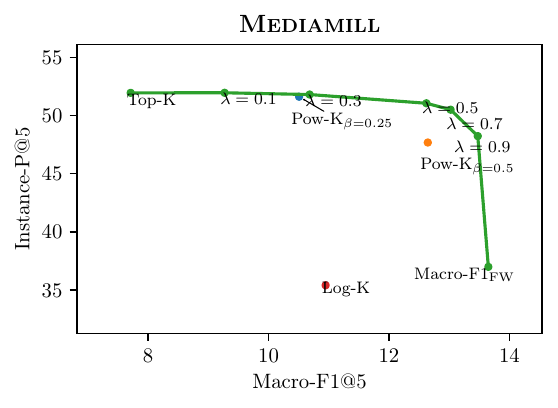} & 
\includegraphics[width = 0.3\linewidth]{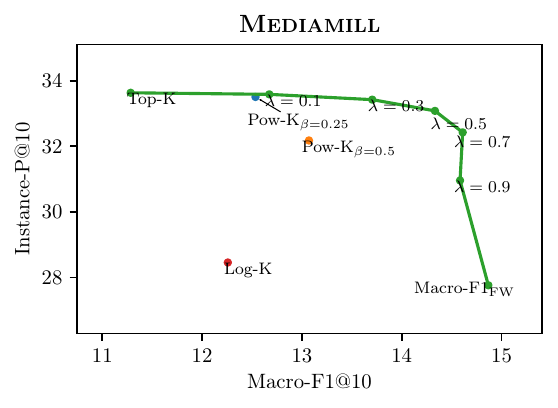} \\
\includegraphics[width = 0.3\linewidth]{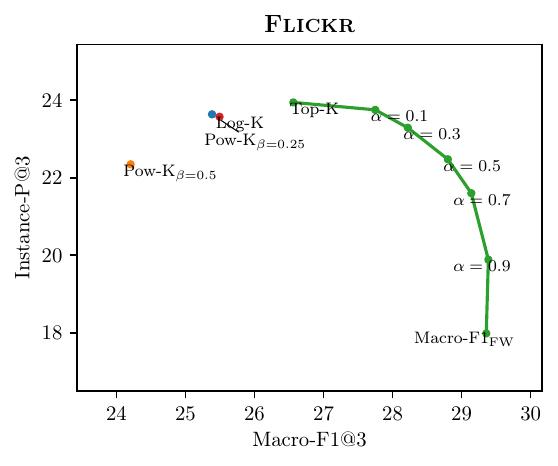} & 
\includegraphics[width = 0.3\linewidth]{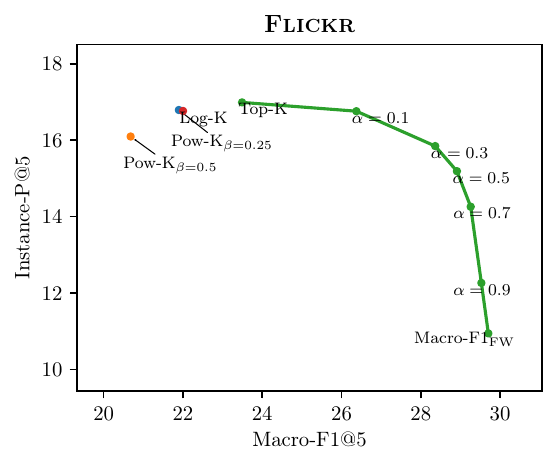} & 
\includegraphics[width = 0.3\linewidth]{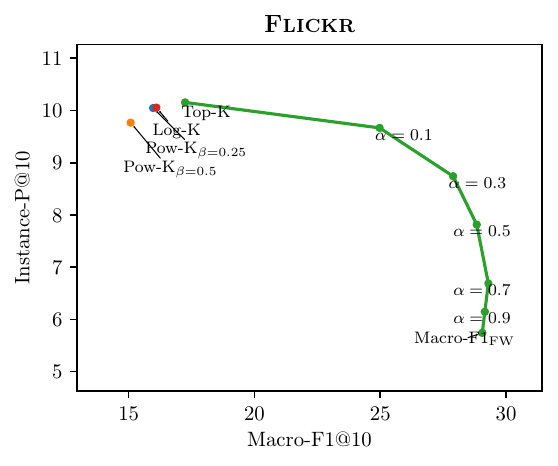} \\
\includegraphics[width = 0.3\linewidth]{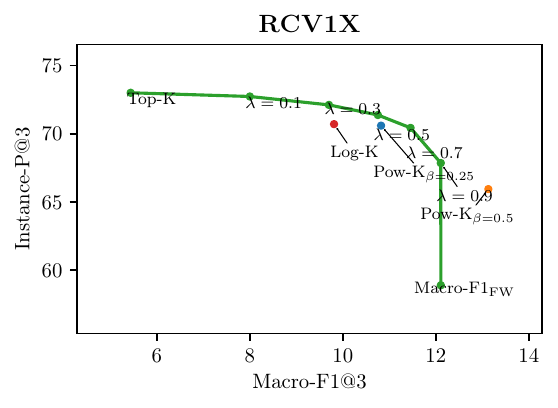} & 
\includegraphics[width = 0.3\linewidth]{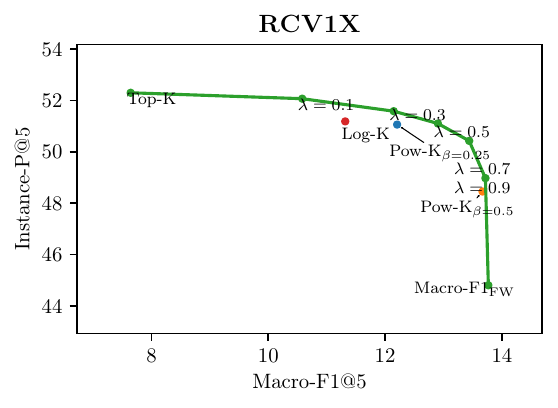} & 
\includegraphics[width = 0.3\linewidth]{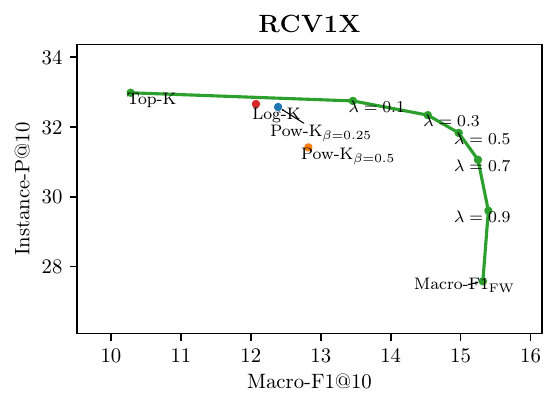} \\
\end{tabular}
\caption{Comparison of the baseline algorithms with the PU inference with mixed objectives for $k \in \{3, 5, 10\}$.
The {\color{green}green line} shows the results for different interpolations between two measures.}
\label{fig:mixed-1}
\end{figure}

\subsection{Results with mixed utilities}
\label{app:mixed-utilities}

It can be noticed in the presented results that the optimization of macro-measures comes with the cost of a significant drop in performance on instance-wise measures, which in some cases may not be acceptable. To achieve the desired trade-off between tail and head label performance, one can optimize a mixed utility that is a linear combination of instance-wise measures and selected macro-measures. As an example, we present the results for such mixed utility that is a combination of instance-wise precision$@k$ with macro F1-measure$@k$:
\begin{align}
    \taskloss(\confusiontensor) :=& (1 - \lambda) \taskloss_{\text{Instance-P}}(\confusiontensor) + \lambda \taskloss_{\text{Macro-F1}}(\confusiontensor) \nonumber \\
   =& \sum_{j=1}^{\numlabels}(1 - \lambda) \lossfnc_{\text{Instance-P}}(\confusionmatrix^j) + \lambda \lossfnc_{\text{Macro-F1}}(\confusionmatrix^j)
\end{align}

In~\autoref{fig:mixed-1}, we present the plots with results on two combined measures for different values of $\lambda$. Once again, the presented results are the mean values over 10 runs of the inference. The plots show that the instance-vs-macro curve has a nice concave shape that dominates simple baselines in most cases. In particular, we can initially improve macro-measures significantly with only a minor drop in instance-measures, and only if we want to optimize even more strongly for macro-measures, we get larger drops in instance-wise measures. A particularly notable feature of the plug-in approach is that the curves in the figure are cheap to produce since there is no requirement for expensive re-training of the entire architecture, so one can easily select an optimal interpolation constant according to some criteria, such as a maximum decrease of instance-wise performance.

\section{Confusion Tensor Measures}
\label{app:admissible}
In this section, we will take a closer look at the definitions of confusion tensor
metrics, and provide some structural results.
First, let us recall the definitions from the main text:
\partialorderbinary*
\partialordermultilabel*

Our first claim is that these form partial orders.
\begin{lemma}[Partial order of confusion matrices]
    The relation $\succeq$ introduced in \autoref{def:binconfmatmeasure}
    forms a partial order on $\confset$. Similarly, $\succeq$ from 
    \autoref{def:conftensormeasure} forms a partial order on 
    $\confset^{\numlabels}$.
\end{lemma}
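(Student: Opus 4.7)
The plan is to verify the three defining axioms of a partial order (reflexivity, antisymmetry, transitivity) for the binary case, and then promote the result to the multi-label case by observing that $\succeq$ on $\confset^{\numlabels}$ is a coordinate-wise (product) order. The one subtlety I need to flag at the start is that, for the relation to be antisymmetric, the constants $\epsilon_1,\epsilon_2$ in \autoref{def:binconfmatmeasure} must be non-negative; this is implicit in the narrative ``turning some false positives to true negatives and false negatives to true positives'' immediately following the matrix equation. Without this reading, any two $\confusionmatrix,\primed\confusionmatrix \in \confset$ with identical class priors (i.e. equal row sums) would be mutually $\succeq$, which trivializes the order. So the proof begins by recording $\epsilon_1,\epsilon_2 \ge 0$ as the intended constraint.

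For reflexivity I simply take $\epsilon_1=\epsilon_2=0$, which gives $\confusionmatrix \succeq \confusionmatrix$. For antisymmetry, suppose $\primed\confusionmatrix \succeq \confusionmatrix$ via $(\epsilon_1,\epsilon_2)$ and $\confusionmatrix \succeq \primed\confusionmatrix$ via $(\epsilon_1',\epsilon_2')$. Comparing the $(0,0)$ entries gives $\confusionmatrix[00] = \confusionmatrix[00] + \epsilon_1 + \epsilon_1'$, so $\epsilon_1+\epsilon_1'=0$, and since both are non-negative each is zero; the analogous computation on the $(1,1)$ entry forces $\epsilon_2=\epsilon_2'=0$, whence $\primed\confusionmatrix = \confusionmatrix$. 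For transitivity, if $\dprimed\confusionmatrix \succeq \primed\confusionmatrix$ via $(\epsilon_1',\epsilon_2')$ and $\primed\confusionmatrix \succeq \confusionmatrix$ via $(\epsilon_1,\epsilon_2)$, then direct substitution shows $\dprimed\confusionmatrix$ has the form required by \autoref{def:binconfmatmeasure} relative to $\confusionmatrix$ with parameters $(\epsilon_1+\epsilon_1',\epsilon_2+\epsilon_2')$, which are again non-negative. It is worth briefly noting that $\|\dprimed\confusionmatrix\|_{1,1}=1$ and that entries remain in $[0,1]$ at each step, since these facts are inherited from $\confusionmatrix$ and $\primed\confusionmatrix$ being in $\confset$.

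For the multi-label case the argument is essentially a one-liner: $\succeq$ on $\confset^{\numlabels}$ is defined by $\primed\confusiontensor \succeq \confusiontensor \iff \forall j:\; \confusionmatrix^{j\prime} \succeq \confusiontensor[j]$, i.e. as the coordinate-wise product of $\numlabels$ copies of the binary order just shown to be a partial order. The three axioms then follow label by label: reflexivity and transitivity hold componentwise, and antisymmetry holds because $\primed\confusiontensor \succeq \confusiontensor$ and $\confusiontensor \succeq \primed\confusiontensor$ yield $\confusionmatrix^{j\prime} = \confusiontensor[j]$ for every $j$, hence $\primed\confusiontensor = \confusiontensor$.

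There is no serious obstacle; the entire argument is routine once the non-negativity of $\epsilon_1,\epsilon_2$ is made explicit. The only thing worth being slightly careful about is writing the composition of ``moves'' in the transitivity step in the correct order so that the resulting matrix genuinely equals $\confusionmatrix$ shifted by $(\epsilon_1+\epsilon_1',\epsilon_2+\epsilon_2')$; a brief four-entry verification suffices.
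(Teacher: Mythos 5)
Your proof is correct and follows essentially the same route as the paper's: reflexivity via $\epsilon_1=\epsilon_2=0$, antisymmetry by forcing the $\epsilon$'s to vanish, transitivity by summing the $\epsilon$'s, and the multi-label case as the componentwise product order. Your explicit insistence that $\epsilon_1,\epsilon_2\ge 0$ (without which antisymmetry genuinely fails, since any two matrices with equal row sums would be mutually comparable) is a point the paper's proof passes over silently in its claim that mutual comparability ``implies $\epsilon_1=\epsilon_2=0$,'' so making it explicit is a worthwhile improvement in rigor rather than a deviation.
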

\begin{proof}
We start with the binary case. We need to show reflexivity, antisymmetry, and transitivity:\\
\emph{Reflexivity:}
By choosing $\epsilon_1 = \epsilon_2 = 0$, we see that $\confusionmatrix \succeq \confusionmatrix$.\\
\emph{Antisymmetry:}
Let $\confusionmatrix \succeq \primed\confusionmatrix$ and $\primed\confusionmatrix \succeq \confusionmatrix$. This
implies $\epsilon_1 = \epsilon_2 = 0$, meaning $\confusionmatrix = \primed \confusionmatrix$.
\emph{Transitivity:}\\
$\confusionmatrix \succeq \primed\confusionmatrix$ with coefficients $\epsilon_1, \epsilon_2$, and
$\primed\confusionmatrix \succeq \dprimed\confusionmatrix$ with $\epsilon'_1, \epsilon'_2$, 
then $\confusionmatrix \succeq \dprimed\confusionmatrix$ by choosing $\epsilon_1 + \epsilon'_1$, $\epsilon_2 +  \epsilon'_2$.

The multi-label case follows directly, as it is just an $\numlabels$-fold
Cartesian product of the binary case. 
\end{proof}

Next, we show a systematic way of turning binary confusion matrix measures into
confusion tensor measures, using either \emph{micro}- or \emph{macro}-aggregation.

\begin{definition}[Aggregation function]
    For $n \in \naturals$, we call a function $\defmap{f}{\closedinterval{0}{1}^n}{\closedinterval{0}{1}}$
    an \emph{aggregation function} if it is nondecreasing in each of its arguments.
\end{definition}

\begin{theorem}[Macro-Aggregation]
    Let $\lossfnc_1, \ldots, \lossfnc_{\numlabels}$ be a collection of binary
    confusion matrix measures, and $\defmap{\phi}{\closedinterval{0}{1}^{\numlabels}}{\closedinterval{0}{1}}$ 
    be an aggregation function. Then the \emph{macro-aggregation} 
    \begin{equation}
        \taskloss(\confusiontensor) \coloneqq \phi(\lossfnc_1(\confusiontensor[1]), \ldots, \lossfnc_{\numlabels}(\confusiontensor[\numlabels]))
    \end{equation}
    is a confusion tensor measure.
\end{theorem}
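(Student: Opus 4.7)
The plan is to unfold the definitions and chain three monotonicity properties: the partial order on tensors reduces pointwise to the binary order, each $\lossfnc_j$ respects its binary order by assumption, and finally $\phi$ is monotone in every argument.

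Concretely, I would assume $\primed\confusiontensor \succeq \confusiontensor$ in the sense of \autoref{def:conftensormeasure}. By that definition, this means $\confusionmatrix^{j\prime} \succeq \confusiontensor[j]$ in the binary order of \autoref{def:binconfmatmeasure} for every $j \in \intrange{\numlabels}$. Since each $\lossfnc_j$ is, by hypothesis, a binary confusion matrix measure, applying the defining monotonicity of binary confusion matrix measures to each coordinate yields $\lossfnc_j(\confusionmatrix^{j\prime}) \geq \lossfnc_j(\confusiontensor[j])$ for all $j$. Then, since $\phi$ is nondecreasing in each of its $\numlabels$ arguments, coordinatewise dominance of the inputs implies dominance of the output:
\begin{equation}
\phi\bigl(\lossfnc_1(\confusionmatrix^{1\prime}), \ldots, \lossfnc_{\numlabels}(\confusionmatrix^{\numlabels\prime})\bigr) \geq \phi\bigl(\lossfnc_1(\confusiontensor[1]), \ldots, \lossfnc_{\numlabels}(\confusiontensor[\numlabels])\bigr),
\end{equation}
which is exactly $\taskloss(\primed\confusiontensor) \geq \taskloss(\confusiontensor)$. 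This proves $\taskloss$ is a confusion tensor measure.

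There is no real obstacle here: the statement is essentially a formal consequence of the definitions, and the proof is a three-line chaining of monotonicity. The only thing worth being careful about is making sure the codomain assumption $\taskloss \colon \confset^{\numlabels} \to \closedinterval{0}{1}$ is consistent with the assumed codomains of $\lossfnc_j$ and $\phi$, which it is by the stated signatures. No appeal to \autoref{assumption:density} or \autoref{assumption:monotonicity} is needed, since only order-theoretic content is at stake; differentiability and strict monotonicity are only relevant when optimizing, not when verifying that $\taskloss$ is a valid tensor measure.
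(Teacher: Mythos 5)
Your proof is correct and follows exactly the same route as the paper's: reduce the tensor order to the label-wise binary order, apply the monotonicity of each $\lossfnc_j$, and then the monotonicity of $\phi$ in each argument. The remark that no distributional or differentiability assumptions are needed is also accurate, as the statement is purely order-theoretic.
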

\begin{proof}
    Let $\primed\confusiontensor \succeq \confusiontensor$. Then, for all labels $j$,
    $\confusionmatrix^{j\prime} \succeq \confusiontensor[j]$, which implies
    $\lossfnc_j(\confusionmatrix^{j\prime}) \geq \lossfnc_j(\confusiontensor[j])$.
    As $\phi$ is nondecreasing in all of its arguments, this implies $\taskloss(\primed\confusiontensor) \geq \taskloss(\confusiontensor)$,
    concluding the proof.
\end{proof}

\begin{theorem}[Micro-Averaging]
    Let $\lossfnc$ be a binary confusion matrix measures, and $\phi$
    be a \emph{linear} aggregation function. 
    Define the averaged confusion matrix by applying aggregation to each entry separately, 
    \begin{equation}
        \overline{\confusionmatrix} = \phi(\confusiontensor)
        \coloneqq \begin{pmatrix}
            \phi(\confusiontensor[1][00], \ldots,  \confusiontensor[\numlabels][00]) & 
            \phi(\confusiontensor[1][01], \ldots,  \confusiontensor[\numlabels][01]) \\
            \phi(\confusiontensor[1][10], \ldots,  \confusiontensor[\numlabels][10]) &
            \phi(\confusiontensor[1][11], \ldots,  \confusiontensor[\numlabels][11])
            )
        \end{pmatrix}
    \end{equation}
    Then the \emph{micro-average} 
    \begin{equation}
        \taskloss(\confusiontensor) \coloneqq \lossfnc(\phi(\confusiontensor)) \,,
    \end{equation}
    is a confusion tensor measure.
\end{theorem}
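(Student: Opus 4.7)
The plan is to reduce the statement to the binary case: I will show that a linear aggregation $\phi$ preserves the partial order $\succeq$ on $\confset$ entry-wise, after which the defining monotonicity of $\lossfnc$ on $\confset$ immediately transfers to the monotonicity of $\taskloss$ on $\confset^{\numlabels}$.

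First, given $\primed\confusiontensor \succeq \confusiontensor$, Definition~\ref{def:conftensormeasure} together with Definition~\ref{def:binconfmatmeasure} yields, for every label $j \in \intrange{\numlabels}$, non-negative constants $\epsilon_1^j, \epsilon_2^j \ge 0$ such that $\confusionmatrix^{j\prime}$ is obtained from $\confusiontensor[j]$ by the top-row transfer $\confusionmatrix^{j\prime}_{00} = \confusiontensor[j][00] + \epsilon_1^j$, $\confusionmatrix^{j\prime}_{01} = \confusiontensor[j][01] - \epsilon_1^j$ and the analogous bottom-row transfer with $\epsilon_2^j$. Non-negativity of the $\epsilon$'s is implicit in the interpretation ``turning false positives into true negatives'' and is precisely what makes $\succeq$ antisymmetric.

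Next I would invoke linearity of $\phi$. Writing $\phi(x_1,\ldots,x_m) = \sum_j w_j x_j$ with $w_j \ge 0$ (the non-negativity of the weights is forced by the nondecreasing requirement of an aggregation function, and the absence of a constant term is forced by the requirement that the entry-wise averaging preserve the normalization $\|\cdot\|_{1,1} = 1$, i.e.\ that $\phi$ applied to a sequence of zero matrices return the zero matrix), applying $\phi$ entry-wise commutes with the additive perturbations above. Setting $\bar\epsilon_\ell \coloneqq \sum_j w_j \epsilon_\ell^j \ge 0$ for $\ell \in \{1,2\}$, the four entries of $\phi(\primed\confusiontensor)$ differ from those of $\phi(\confusiontensor)$ by exactly $\pm \bar\epsilon_1$ (top row) and $\pm \bar\epsilon_2$ (bottom row), with the sign pattern prescribed by Definition~\ref{def:binconfmatmeasure}. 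Hence $\phi(\primed\confusiontensor) \succeq \phi(\confusiontensor)$ inside $\confset$, and applying the binary confusion matrix measure property of $\lossfnc$ gives $\taskloss(\primed\confusiontensor) = \lossfnc(\phi(\primed\confusiontensor)) \ge \lossfnc(\phi(\confusiontensor)) = \taskloss(\confusiontensor)$.

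The only real subtlety is recognising why \emph{linear} aggregation (rather than merely monotone) is the correct hypothesis. Linearity is exactly what allows the additive perturbation witnessing $\confusionmatrix^{j\prime} \succeq \confusiontensor[j]$ to propagate through $\phi$ as another additive perturbation of the same structure, turning a label-wise transfer into a single aggregated transfer. A generic monotone $\phi$ need not preserve this additive structure, and the micro-averaged comparison $\phi(\primed\confusiontensor) \succeq \phi(\confusiontensor)$ could then fail even when all label-wise comparisons hold.
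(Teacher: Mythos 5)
Your proof is correct and follows essentially the same route as the paper's: extract the per-label perturbations $(\epsilon_1^j,\epsilon_2^j)$ witnessing $\primed\confusiontensor \succeq \confusiontensor$, push them through the linear $\phi$ to obtain a single aggregated perturbation witnessing $\phi(\primed\confusiontensor) \succeq \phi(\confusiontensor)$, and then apply the monotonicity of $\lossfnc$. If anything you are slightly more careful than the paper, spelling out the non-negativity of the weights and of the aggregated $\bar\epsilon_\ell$ and getting the sign pattern of the bottom row right (the paper's displayed matrix for $\overlined\primed\confusionmatrix$ has the $\epsilon_2$ signs flipped relative to Definition~\ref{def:binconfmatmeasure}, a harmless typo).
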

\begin{proof}
    Let $\primed\confusiontensor \succeq \confusiontensor$. Then, for all labels $j$,
    $\confusionmatrix^{j\prime} \succeq \confusiontensor[j]$, i.e., there exists a 
    collection $(\epsilon_1^1, \epsilon_2^1), \ldots, (\epsilon_1^{\numlabels}, \epsilon_2^{\numlabels})$
    which transform $\confusiontensor$ into $\primed\confusiontensor$. Denote $
        \overline{\primed\confusionmatrix} = \phi(\primed\confusiontensor)$, 
    and similarly $\overline{\confusionmatrix} = \phi(\confusiontensor)$.
    Due to the linearity of $\phi$, we have 
    \begin{equation*}
        \primed\overlined\confusionmatrix[00] = \phi(\confusionmatrix[00]^{1\prime}, \ldots, \confusionmatrix[00]^{\numlabels\prime})
        = \phi(\confusiontensor[1][00] + \epsilon_1^1, \ldots, \confusiontensor[\numlabels][00] + \epsilon_1^{\numlabels})
        = \phi(\confusiontensor[1][00], \ldots, \confusiontensor[\numlabels][00])
        + \phi(\epsilon_1^1, \ldots, + \epsilon_1^{\numlabels}) \,.
    \end{equation*}
    Similar calculations can be done for the other components. This implies
    that
    \begin{equation}
        \overlined\primed\confusionmatrix = \begin{pmatrix}
            \overlined\confusionmatrix[00] + \phi(\epsilon_1^1, \ldots, + \epsilon_1^{\numlabels}) & 
            \overlined\confusionmatrix[01] - \phi(\epsilon_1^1, \ldots, + \epsilon_1^{\numlabels}) \\ 
            \overlined\confusionmatrix[10] + \phi(\epsilon_2^1, \ldots, + \epsilon_2^{\numlabels}) & 
            \overlined\confusionmatrix[11] - \phi(\epsilon_2^1, \ldots, + \epsilon_2^{\numlabels})
        \end{pmatrix} \,,
    \end{equation}
    i.e., $\overlined\primed\confusionmatrix \succeq \overlined\confusionmatrix$, and thus $\taskloss(\primed\confusiontensor) \geq \taskloss(\confusiontensor)$.
\end{proof}
If the aggregation function is chosen to be the arithmetic mean, the two cases
above reduce to regular macro- and micro-averaging. This justifies our choice of
\eqref{eq:macro_averaged_metric} in the main paper, proving that this indeed 
does result in an admissible confusion tensor metric. 

Note that for micro-aggregation, we had to be much more strict in what aggregation
functions to admit, essentially limiting to weighted arithmetic mean, because we
need to ensure that the component-wise averaging of confusion matrices results in
matrices that are comparable using the partial order.

Finally, we can also provide the following structural result:
\begin{theorem}
    Let $\taskloss_1, \ldots, \taskloss_n$ be a  collection of confusion tensor losses, and $\phi$
    an aggregation function. Then
    \begin{equation}
        \taskloss(\confusiontensor) = \phi(\taskloss_1(\confusiontensor), \ldots, \taskloss_n(\confusiontensor))
    \end{equation}
    is a confusion tensor loss.
\end{theorem}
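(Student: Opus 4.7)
The plan is to reduce the claim directly to the definitions: we need to show that the composition $\taskloss = \phi \circ (\taskloss_1,\ldots,\taskloss_n)$ respects the partial order $\succeq$ on $\confset^{\numlabels}$ from \autoref{def:conftensormeasure}. So the first step is to fix two confusion tensors with $\primed\confusiontensor \succeq \confusiontensor$ and chase monotonicity through the composition.

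Concretely, I would proceed as follows. First, by the assumption that each $\taskloss_i$ is itself a confusion tensor measure, applying \autoref{def:conftensormeasure} $n$ times yields $\taskloss_i(\primed\confusiontensor) \geq \taskloss_i(\confusiontensor)$ for every $i \in \{1,\ldots,n\}$. Second, since $\phi$ is an aggregation function, it is by definition nondecreasing in each of its $n$ arguments. A standard coordinate-by-coordinate argument (interpolating one coordinate at a time, e.g. comparing $\phi(\taskloss_1(\confusiontensor),\taskloss_2(\confusiontensor),\ldots)$ to $\phi(\taskloss_1(\primed\confusiontensor),\taskloss_2(\confusiontensor),\ldots)$ and so on up to $\phi(\taskloss_1(\primed\confusiontensor),\ldots,\taskloss_n(\primed\confusiontensor))$) then gives
\begin{equation*}
\phi(\taskloss_1(\confusiontensor),\ldots,\taskloss_n(\confusiontensor)) \leq \phi(\taskloss_1(\primed\confusiontensor),\ldots,\taskloss_n(\primed\confusiontensor)),
\end{equation*}
which is exactly $\taskloss(\confusiontensor) \leq \taskloss(\primed\confusiontensor)$.

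There is really no hard step here: the result is a transparent consequence of the fact that compositions of order-preserving maps are order-preserving. The only minor thing to be mindful of is that $\phi$ is merely assumed nondecreasing in each argument separately (not jointly strictly monotonic), so the argument must genuinely walk coordinate-by-coordinate rather than invoke joint monotonicity; using the chained interpolation above handles this cleanly. One also has to note that the range of $\taskloss$ lies in $\closedinterval{0}{1}$, which is automatic since each $\taskloss_i$ maps into $\closedinterval{0}{1}$ and $\phi$ does as well by definition of an aggregation function.
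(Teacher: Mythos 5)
Your proof is correct and follows the same route as the paper's: apply the defining monotonicity of each $\taskloss_i$ to get $\taskloss_i(\primed\confusiontensor) \geq \taskloss_i(\confusiontensor)$, then invoke the monotonicity of $\phi$. The extra coordinate-by-coordinate interpolation you spell out is a fine (if slightly more explicit) justification of the step the paper summarizes as ``by monotonicity of $\phi$.''
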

\begin{proof}
    Let $\primed\confusiontensor \succeq \confusiontensor$, then
    $\taskloss_i(\primed\confusiontensor) \geq \taskloss_i(\confusiontensor)$.
    Thus, by monotonicity of $\phi$, we get $\taskloss(\primed\confusiontensor) \geq \taskloss(\confusiontensor)$.
\end{proof}
This latter result implies that, e.g., calculating the harmonic mean of macro-precision
and macro-recall is also a confusion tensor loss.
\section{Sensitivity of macro-at-$k$ metrics to tail labels}
\label{app:tail-label-performance}

The experiment described in this section has been motivated by a similar one given in~\citep{wei2019does}.
In \autoref{tab:measures-acat-13k} we compare different metrics for budgeted
at $k$ predictions. We train a probabilistic label tree (PLT) model 
on the full \amazoncatsmall dataset~\citep{mcauley2015inferring} and on a reduced
version with the \num{1000} most popular labels only. 
The test is performed for both models on the full set of labels.
The standard metrics are only slightly perturbed by reducing
the label space to the head labels. This holds even for propensity-scored
precision~\citep{Jain_et_al_2016}, a popular measure for evaluating tail labels in extreme multi-label classification, 
which decreases by just 1\%-20\% despite discarding over 90\% of the
label space. In contrast, the macro-at-$k$ measures decrease between 60\% and
90\% if tail labels are ignored.

\begin{table}[bthp]
\caption{Performance measures (\%) on AmazonCat-13k of a classifier trained on the full set of labels and a classifier trained with only 1k head labels.}
\vspace{3pt}
\centering
\footnotesize
\pgfplotstabletypeset[
every head row/.style={
  before row={\toprule Metric& \multicolumn{3}{c|}{full labels}& \multicolumn{6}{c}{head labels}\\},
  after row={\midrule}},
  columns={Metric,K1,K3,K5,HK1,HK1diff,HK3,HK3diff,HK5,HK5diff},
  every row no 2/.style={after row=\midrule},
every last row/.style={after row=\bottomrule},
columns/Metric/.style={string type, column type={l|}, column name={}},
columns/K1/.style={column name={@1},string type,column type={r}},
columns/K3/.style={column name={@3},string type,column type={r}},
columns/K5/.style={column name={@5},string type,column type={r|}},
columns/HK1/.style={column name={@1},string type,column type={r@{ }}},
columns/HK1diff/.style={column name={diff.},string type,column type={l},column type/.add={>{\scriptsize(}}{<{)}}},
columns/HK3/.style={column name={@3},string type,column type={r@{ }}},
columns/HK3diff/.style={column name={diff.},string type,column type={l},column type/.add={>{\scriptsize(}}{<{)}}},
columns/HK5/.style={column name={@5},string type,column type={r@{ }}},
columns/HK5diff/.style={column name={diff.},string type,column type={l},column type/.add={>{\scriptsize(}}{<{)}}},
]{tables/data/full-vs-head-metrics-str.txt}
\label{tab:measures-acat-13k}
\end{table}

\end{document}